\newtheorem{lemma}{Lemma}[section]
\newtheorem{theorem}[lemma]{Theorem}
\theoremstyle{definition}
\renewcommand{\hat}{\widehat}
\newcommand{\removed}[1]{}
\newcommand{\cA}{\mathcal{A}}
\newcommand{\cF}{\mathcal{F}}
\newcommand{\cN}{\mathcal{N}}
\newcommand{\cI}{\mathcal{I}}
\newcommand{\cQ}{\mathcal{Q}}
\newcommand{\I}{\mathrm{I}}
\newcommand{\W}{\mathrm{W}}
\newcommand{\Y}{\mathrm{Y}}
\renewcommand{\log}[1]{\operatorname{log}\left(#1\right)}
\renewcommand{\exp}[1]{\operatorname{exp}\left(#1\right)}
\newcommand{\prob}[1]{\mathbb{P}\left[#1\right]}
\newcommand{\kl}[2]{\mathrm{D}_{\text{KL}}\left(#1||#2\right)}
\newcommand{\tv}[3]{\mathrm{d}_{\text{TV}}^{#1}\left(#2,#3\right)}
\newcommand{\floor}[1]{\left \lfloor #1 \right \rfloor}
\newcommand{\dnum}{\gamma}
\def\Nset{\mathbb{N}}
\renewcommand{\emph}[1]{{\it{#1}}}
\newcommand{\set}[2][]{#1 \{ #2 #1 \} }
\newcommand{\ignore}[1]{}
\title{Bandits with Feedback Graphs and Switching Costs}
\author{
  Raman Arora\\
  Dept. of Computer Science\\
  Johns Hopkins University\\
  Baltimore, MD 21204 \\
  \texttt{arora@cs.jhu.edu}
  \And
  Teodor V.~Marinov\\
  Dept. of Computer Science\\
  Johns Hopkins University\\
  Baltimore, MD 21204 \\
  \texttt{tmarino2@jhu.edu}
  \And
  Mehryar Mohri\\
  Google Research \& \\ 
  Courant Institute of Math. Sciences\\
  New York, NY 10012 \\
  \texttt{mohri@google.com}
}
\begin{document}

\maketitle

\begin{abstract} 
  We study the adversarial multi-armed bandit problem where the
  learner is supplied with partial observations modeled by a
  \emph{feedback graph} and where shifting to a new action incurs a
  fixed \emph{switching cost}. We give two new algorithms for this
  problem in the informed setting. Our best algorithm achieves a
  pseudo-regret of $\tilde O(\dnum(G)^{\frac{1}{3}}T^{\frac{2}{3}})$,
  where $\dnum(G)$ is the domination number of the feedback
  graph. This significantly improves upon the previous best result for
  the same problem, which was based on the independence number of
  $G$. We also present matching lower bounds for our result that we
  describe in detail. Finally, we give a new algorithm with improved
  policy regret bounds when partial counterfactual feedback is
  available.

\end{abstract}

\section{Introduction}

A general framework for sequential learning is that of online
prediction with expert advice
\citep{littlestone1994weighted,cesa1997use,freund1997decision}, which
consists of repeated interactions between a learner and the
environment. The learner maintains a distribution over a set of
experts or actions.  At each round, the loss assigned to each action
is revealed. The learner incurs the expected value of these losses for
their current distribution and next updates her distribution.  The
learner's goal is to minimize her \emph{regret}, which, in the
simplest case, is defined as the difference between the cumulative
loss over a finite rounds of interactions and that of the best expert
in hindsight.

The scenario just described corresponds to the so-called 
\emph{full information} setting where the learner is informed of the loss of
all actions at each round. In the \emph{bandit setting}, only the loss
of the action they select is known to the learner. These settings are
both special instances of a general model of online learning with side
information introduced by \citet{mannor2011bandits}, where the
information available to the learner is specified by a \emph{feedback
  graph}. In an undirected feedback graph, each vertex represents an
action and an edge between vertices $a$ and $a'$ indicates that the
loss of action $a'$ is observed when action $a$ is selected and
vice-versa. The bandit setting corresponds to a feedback graph reduced
to only self-loops at each vertex, the full information setting to a
fully connected graph.  Online learning with feedback graphs has been
further extensively analyzed by
\cite{alon2013bandits,alon2017nonstochastic} and several other authors
\citep{alon2015online,kocak2014efficient,cohen2016online,Y+18,
  CortesDeSalvoGentileMohriYang2018}.
  
In many applications, the learner also incurs a cost when switching to
a new action. Consider, for example, a commercial bank that issues various credit card products, many of which are similar, e.g., different branded cards with comparable fees and interest rates. At each round, the bank offers a specific product to a particular sub-population (e.g., customers at a store). The payoff observed for this action also reveals feedback for related cards and similar sub-populations. At the same time, offering a different product to a group incurs a switching cost in terms of designing a new marketing campaign. Another example of a problem with feedback graph and switching costs is a large company seeking to allocate and reallocate employees to
different tasks so that the productivity is maximized. Employees with
similar skills, e.g., technical expertise, people skills, can be
expected to perform as well as each other on the same
task. Reassigning employees between tasks, however, is associated with
a cost for retraining and readjustment time. We refer the reader to Appendix~\ref{sec:more_examples} for more motivating examples.

The focus of this paper is to understand the fundamental tradeoffs
between exploration and exploitation in online learning with feedback
graphs and switching costs, and to design learning algorithms with
provably optimal guarantees. 
We consider the general case of a feedback graph $G$ with a set of
vertices or actions $V$.  In the expert setting with no switching
cost, the min-max optimal regret is achieved by the weighted-majority
or the Hedge algorithm
\citep{littlestone1994weighted,freund1997decision}, which is in
$\Theta(\sqrt{\log{|V|}T})$. In the bandit setting, the extension of
these algorithms, EXP3 \citep{auer2002nonstochastic}, achieves a
regret of $O(\sqrt{|V|\log{|V|}T})$.  The min-max optimal regret of
$\Theta(\sqrt{|V|T})$ can be achieved by the INF algorithm
\citep{audibert2009minimax}. The $\sqrt{|V|}$-term in the bandit
setting is inherently related to the additional exploration needed to
observe the loss of all actions.

The scenario of online learning with side information modeled by
feedback graphs, which interpolates between the full information and the bandit setting, was introduced by
\cite{mannor2011bandits}. When the feedback graph $G$ is fixed over
time and is undirected, a regret in the order of
$O(\sqrt{\alpha(G)\log{|V|}T})$ can be achieved, with a lower bound of
$\Omega(\sqrt{\alpha(G)T})$, where $\alpha(G)$ denotes the independence number of $G$. There has been a large body of work
studying different settings of this problem with time-varying graphs
$(G_t)_{t = 1}^T$, in both the directed or undirected cases, and in
both the so-called \emph{informed setting}, where, at each round, the
learner receives the graph before selecting an action, or the
\emph{uninformed setting} where it is only made available after the
learner has selected an action and updated its distribution
\citep{alon2013bandits,kocak2014efficient,alon2015online,cohen2016online,
alon2017nonstochastic,CortesDeSalvoGentileMohriYang2018}.

For the expert setting augmented with switching costs, the min-max
optimal regret remains in $\tilde \Theta(\sqrt{\log{|V|}T})$. However,
classical algorithms such as the Hedge or Follow-the-Perturbed-Leader
\citep{kalai2005efficient} no more achieve the optimal regret
bound. Several algorithms designed by
\cite{kalai2005efficient,geulen2010regret,gyorgy2014near} achieve this
min-max optimal regret. In the setting of bandits with switching
costs, the lower bound was carefully investigated by
\cite{cesa2013online} and \cite{dekel2014bandits} and shown to be in
$\tilde \Omega (|V|^{\frac 1 3} T^{\frac 2 3})$. This lower bound is
asymptotically matched by mini-batching the EXP3 algorithm, as
proposed by \cite{arora2012online}.

The only work we are familiar with, which studies both bandits with
switching costs and side information is that of
\cite{rangi2018online}. The authors propose two algorithms for
time-varying feedback graphs in the uninformed setting. When reduced
to the fixed feedback graph setting, their regret bound becomes
$\tilde O(\alpha(G)^{\frac 1 3} T^{\frac 2 3})$. We note that, in the
informed setting with a fixed feedback graph, this bound can be
achieved by applying the mini-batching technique of
\cite{arora2012online} to the EXP3-SET algorithm of
\cite{alon2013bandits}.

Our main contributions are two-fold. First, we propose two algorithms
for online learning in the informed setting with a fixed feedback
graph $G$ and switching costs. Our best algorithm admits a
pseudo-regret bound in $\tilde O(\dnum(G)^{\frac 1 3}T^{\frac 2 3})$, where
$\dnum(G)$ is the domination number of $G$.  We note that the
domination number $\dnum(G)$ can be substantially smaller than the
independence number $\alpha(G)$ and therefore that our algorithm
significantly improves upon previous work by \cite{rangi2018online} in
the informed setting. We also extend our results to achieve a
policy regret bound in $\tilde O(\dnum(G)^{\frac 1 3}T^{\frac 2 3})$ when partial
counterfactual feedback is available. The
$\tilde O(\dnum(G)^{\frac 1 3}T^{\frac 2 3})$ regret bound in the switching costs
setting might seem at odds with a lower bound stated by
\cite{rangi2018online}. However, the lower bound of
\cite{rangi2018online} can be shown to be technically inaccurate (see
Appendix~\ref{app:lower_bound_mistake}).
Our second main contribution is a lower bound in
$\tilde\Omega(T^{\frac 2 3})$ for any non-complete feedback graph. We also
extend this lower bound to $\tilde\Omega(\dnum(G)^{\frac 1 3}T^{\frac 2 3})$ for a
class of feedback graphs that we will describe in detail. In
Appendix~\ref{app:lower_bound_seq}, we show a lower bound for the
setting of evolving feedback graphs, matching the originally stated
lower bound in \citep{rangi2018online}.

The rest of this paper is organized as follows.  In
Section~\ref{sec:setup}, we describe in detail the setup we analyze
and introduce the relevant notation. In
Section~\ref{sec:ada_mini_batch}, we describe our main algorithms and
results. We further extend our algorithms and analysis to the setting
of online learning in reactive environments
(Section~\ref{sec:policy}).  In Section~\ref{sec:lower-bound-main}, we
present and discuss in detail lower bounds for this problem.

\section{Problem Setup and Notation}
\label{sec:setup}

We study a repeated game between an adversary and a player over
$T$ rounds. 
For any $n \in \Nset$, we denote by $[n]$ 
the set of integers $\set{1, \ldots, n}$.
At each round $t \in [T]$, the
player selects an action $a_t \in V$ and incurs a loss $\ell_t(a_t)$,
as well as a cost of one if switching between distinct actions in
consecutive rounds ($a_t \neq a_{t - 1}$). For convenience, we define
$a_0$ as an element not in $V$ so that the first action always incurs
a switching cost. The regret $R_T$ of any sequence of actions
$(a_t)_{t = 1}^T$ is thus defined by $R_T = \max_{a \in V} \sum_{t =
  1}^T \ell_t(a_t) - \ell_t(a) + M$, 
  where $M = \sum_{t = 1}^T 1_{a_t
  \neq a_{t - 1}}$ is the number of action switches in that
sequence. We will assume an oblivious adversary, or, equivalently,
that the sequence of losses for all actions is determined by the
adversary before the start of the game. The performance of an
algorithm $\cA$ in this setting is measured by its
\emph{pseudo-regret} $R_T(\cA)$ defined by
\[
R_T(\cA)
=  \max_{a \in V}\mathbb{E}\Bigg[ \sum_{t = 1}^T \Big( \ell_t(a_t) + 1_{a_t \neq a_{t - 1}} \Big) -
\ell_t(a) \Bigg],
\]where the expectation is taken over the player's randomized choice
of actions. The \emph{regret} of $\cA$ is defined as $\mathbb{E}[R_T]$, with the expectation outside of the maximum. 
In the following, we will abusively refer to $R_T(\cA)$ as
the \emph{regret} of $\cA$, to shorten the terminology.

We also assume that the player has access to an undirected graph
$G = (V, E)$, which determines which expert losses can be observed at
each round. The vertex set $V$ is the set of experts (or actions) and
the graph specifies that, if at round $t$ the player selects action
$a_t$, then, the losses of all experts whose vertices are adjacent to
that of $a_t$ can be observed: $\ell_t(a)$ for $a \in N(a_t)$, where
$N(a_t)$ denotes the neighborhood of $a_t$ in $G$ defined for any
$u \in V$ by: $N(u) = \set{v \colon (u, v) \in E}$.  We will denote by
$deg(u) = |N(u)|$ the degree of $u \in V$ in graph $G$.  We assume
that $G$ admits a self-loop at every vertex, which implies that the
player can at least observe the loss of their own action (bandit
information). In all our figures, self-loops are omitted for the sake
of simplicity.

We assume that the feedback graph is available to the player at the
beginning of the game (\emph{informed setting}).  The
\emph{independence number} of $G$ is the size of a \emph{maximum
  independent set} in $G$ and is denoted by $\alpha(G)$.  The
\emph{domination number} of $G$ is the size of a \emph{minimum
  dominating set} and is denoted by $\dnum(G)$.  The following
inequality holds for all graphs $G$: $\dnum(G) \leq \alpha(G)$
\citep{bollobas1979graph,goddard2013independent}. In general,
$\dnum(G)$ can be substantially smaller than $\alpha(G)$, with
$\dnum(G) = 1$ and $\alpha(G) = |V| - 1$ in some cases.  We note that
all our results can be straightforwardly extended to the case of
directed graphs.

\section{An Adaptive Mini-batch Algorithm}
\label{sec:ada_mini_batch}

In this section, we describe an algorithm for 
online learning with switching costs, using
adaptive mini-batches. All proofs of results are deferred to Appendix~\ref{app:section_3_proofs}.

The standard exploration versus exploitation
dilemma in the bandit setting is further 
complicated in the presence of a feedback
graph: if a poor action reveals the losses of all other actions, do we play the poor action? The lower bound construction of \cite{mannor2011bandits} suggests that we should not, since it might be better to just switch between the other actions.

Adding switching costs, however, modifies the price of exploration and the lower bound argument of \cite{mannor2011bandits} no longer holds. It is in fact possible to show that EXP3 and its graph feedback variants switch too often in the presence of two good actions, thereby incurring $\Omega(T)$ regret, due to the switching costs. One way to deal with the switching costs problem is to adapt the fixed mini-batch technique of \cite{arora2012online}.
That technique, however, treats all actions equally while, in the presence of switching costs, actions that provide additional information are more valuable.

We deal with the issues just discussed by adopting the idea that the mini-batch sizes could depend both on how favorable an action is and how much information an action provides about good actions.

\subsection{Algorithm for Star Graphs}

We start by studying a simple feedback graph case in which one action is adjacent to all other actions with none of these other actions admitting other neighbors. For an example see Figure~\ref{fig:star_graph_ex}.

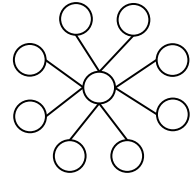
\begin{wrapfigure}{r}{0.4\textwidth}
\centering
        \tikzset{every picture/.style={line width=0.75pt}} %set default line width to 0.75pt    
        \begin{tikzpicture}[x=0.75pt,y=0.75pt,yscale=-1,xscale=1]
%uncomment if require: \path (0,300); %set diagram left start at 0, and has height of 300
%Shape: Circle [id:dp907210364616495] 
\draw   (100,160) .. controls (100,155.58) and (103.58,152) .. (108,152) .. controls (112.42,152) and (116,155.58) .. (116,160) .. controls (116,164.42) and (112.42,168) .. (108,168) .. controls (103.58,168) and (100,164.42) .. (100,160) -- cycle ;
%Shape: Circle [id:dp2591956654783074] 
\draw   (120,180) .. controls (120,175.58) and (123.58,172) .. (128,172) .. controls (132.42,172) and (136,175.58) .. (136,180) .. controls (136,184.42) and (132.42,188) .. (128,188) .. controls (123.58,188) and (120,184.42) .. (120,180) -- cycle ;
%Shape: Circle [id:dp5659268834923573] 
\draw   (149,180) .. controls (149,175.58) and (152.58,172) .. (157,172) .. controls (161.42,172) and (165,175.58) .. (165,180) .. controls (165,184.42) and (161.42,188) .. (157,188) .. controls (152.58,188) and (149,184.42) .. (149,180) -- cycle ;
%Shape: Circle [id:dp4888327973103742] 
\draw   (172,159) .. controls (172,154.58) and (175.58,151) .. (180,151) .. controls (184.42,151) and (188,154.58) .. (188,159) .. controls (188,163.42) and (184.42,167) .. (180,167) .. controls (175.58,167) and (172,163.42) .. (172,159) -- cycle ;
%Shape: Circle [id:dp1827241742099659] 
\draw   (187.85,131.67) .. controls (187.78,136.09) and (184.14,139.62) .. (179.73,139.55) .. controls (175.31,139.48) and (171.78,135.85) .. (171.85,131.43) .. controls (171.92,127.01) and (175.55,123.48) .. (179.97,123.55) .. controls (184.39,123.62) and (187.92,127.26) .. (187.85,131.67) -- cycle ;
%Shape: Circle [id:dp37864122944903367] 
\draw   (168.16,111.37) .. controls (168.09,115.79) and (164.45,119.31) .. (160.04,119.25) .. controls (155.62,119.18) and (152.09,115.54) .. (152.16,111.12) .. controls (152.23,106.71) and (155.86,103.18) .. (160.28,103.25) .. controls (164.7,103.32) and (168.23,106.95) .. (168.16,111.37) -- cycle ;
%Shape: Circle [id:dp7314026107562552] 
\draw   (139.16,110.92) .. controls (139.09,115.34) and (135.46,118.87) .. (131.04,118.8) .. controls (126.62,118.73) and (123.1,115.1) .. (123.16,110.68) .. controls (123.23,106.26) and (126.87,102.73) .. (131.29,102.8) .. controls (135.7,102.87) and (139.23,106.51) .. (139.16,110.92) -- cycle ;
%Shape: Circle [id:dp8614390306833216] 
\draw   (115.84,131.57) .. controls (115.77,135.99) and (112.14,139.51) .. (107.72,139.44) .. controls (103.3,139.38) and (99.78,135.74) .. (99.84,131.32) .. controls (99.91,126.91) and (103.55,123.38) .. (107.97,123.45) .. controls (112.38,123.51) and (115.91,127.15) .. (115.84,131.57) -- cycle ;
%Shape: Circle [id:dp19780968144499667] 
\draw   (150.84,145.57) .. controls (150.77,149.99) and (147.14,153.51) .. (142.72,153.44) .. controls (138.3,153.38) and (134.78,149.74) .. (134.84,145.32) .. controls (134.91,140.91) and (138.55,137.38) .. (142.97,137.45) .. controls (147.38,137.51) and (150.91,141.15) .. (150.84,145.57) -- cycle ;
%Straight Lines [id:da1667249152138628] 
\draw    (115.84,131.57) -- (134.84,145.32) ;

%Straight Lines [id:da7531580020824084] 
\draw    (131.04,118.8) -- (142.97,137.45) ;

%Straight Lines [id:da39779364939150297] 
\draw    (160.04,119.25) -- (142.97,137.45) ;

%Straight Lines [id:da471613575039458] 
\draw    (171.85,131.43) -- (150.84,145.57) ;

%Straight Lines [id:da8743456793599891] 
\draw    (150.84,145.57) -- (172,159) ;

%Straight Lines [id:da17164371785842858] 
\draw    (116,160) -- (134.84,145.32) ;

%Straight Lines [id:da6072389136669796] 
\draw    (142.84,153.45) -- (157,172) ;

%Straight Lines [id:da30153696335327673] 
\draw    (142.84,153.45) -- (128,172) ;
\end{tikzpicture}
\caption{Example of a star graph.}
\label{fig:star_graph_ex}
\end{wrapfigure}
We call such graphs \emph{star graphs} and we refer to the action
adjacent to all other actions as the \emph{revealing action}. The
revealing action is denoted by $r$. Since only the revealing action
can convey additional information about other actions, we will select
our mini-batch size to be proportional to the quality of this
action. Also, to prevent our algorithm from switching between two
non-revealing actions too often, we will simply disallow that and
allow switching only between the revealing action and a non-revealing
action. Finally, we will disregard any feedback a non-revealing action
provides us. This simplifies the analysis of the regret of our
algorithm. The pseudocode of the algorithm is given in
Algorithm~\ref{alg:star_graph_main}.
\begin{algorithm}[t]
\caption{Algorithm for star graphs}
\label{alg:star_graph_main}
\begin{algorithmic}[1]
\REQUIRE{Star graph $G(V,E)$, learning rates $(\eta_t)$, exploration rate $\beta \in [0,1]$, maximum mini-batch $\tau$.}
\ENSURE{Action sequence $(a_t)_{t=1}^T$.}
\STATE $q_1 = \frac{\mathbf{1}}{|V|}$.
\WHILE{$\sum_{t}\floor{\tau_t} \leq T$}
\STATE $p_t = (1-\beta) q_t + \beta \delta(r)$\qquad\qquad\% $\delta(r)$ is the Dirac distribution on $r$
\STATE Draw $a_t \sim p_t$, set $\tau_t = p_t(r)\tau$
\IF{$a_{t-1}\neq r$ and $a_t \neq r$}
\STATE Set $a_t = a_{t-1}$
\ENDIF
\STATE Play $a_t$ for the next $\floor{\tau_t}$ iterations
\STATE Set $\hat\ell_{t}(i) = \sum_{j=t}^{t+\floor{\tau_t}-1} \mathbb{I}(a_t = r)\frac{\ell_{j}(i)}{p_t(r)}$
\STATE For all $i\in V$, $q_{t+1}(i) = \frac{q_t(i)\exp{-\eta_t \hat\ell_t(i)}}{\sum_{j\in V}q_t(j)\exp{-\eta_t \hat\ell_t(j)}}$
\STATE $t = t+1$
\ENDWHILE
\end{algorithmic}
\end{algorithm}

The following intuition guides the design of our algorithm and its
analysis.  We need to visit the revealing action sufficiently often to
derive information about all other actions, which is determined by the
explicit exploration factor $\beta$. If $r$ is a good action, our
regret will not be too large if we visit it often and spent a large
amount of time in it. On the other hand if $r$ is poor, then the
algorithm should not sample it often and, when it does, it should not
spend too much time there. Disallowing the algorithm to directly
switch between non-revealing actions also prevents it from switching
between two good non-revealing actions too often. The only remaining
question is: do we observe enough information about each action to be
able to devise a low regret strategy? The following regret guarantee
provides a precise positive response.

\begin{theorem}
\label{thm:star_graph_main}
Suppose that  the inequality $\mathbb{E}[\ell^2_t(i)] \leq \rho$ holds for all $t \leq T$ and all $i \in V$, for some $\rho$ and $\beta \geq \frac{1}{\tau}$. Then, for any action $a \in V$, Algorithm~\ref{alg:star_graph_main} 
admits the following guarantee:
\begin{align*}
    \mathbb{E}\left[\sum_{t=1}^T \ell_t(a_t) - \ell_t(a)\right] \leq \frac{\log{|V|}}{\eta} + T\eta\tau\rho + T\beta.
\end{align*}
Furthermore, the algorithm does not switch more than $2\nicefrac{T}{\tau}$ times, in expectation.
\end{theorem}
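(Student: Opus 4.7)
My plan is to treat Algorithm~\ref{alg:star_graph_main} as an instance of EXP3 executed on the outer loop iterations (``mini-batches''), applied to the auxiliary weight vectors $q_k$ with the importance-weighted loss estimators $\hat\ell_k$, and then translate this ``super-round'' regret back to the original time scale. Writing $\bar\ell_k(i) := \sum_{j \in \mathrm{batch}(k)}\ell_j(i)$ for the cumulative loss of action $i$ during mini-batch $k$, the estimator $\hat\ell_k(i)$ equals $\bar\ell_k(i)/p_k(r)$ with conditional probability $p_k(r)$ and is $0$ otherwise, so $\mathbb{E}[\hat\ell_k(i)\mid\cF_{k-1}]=\bar\ell_k(i)$. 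A deterministic Hedge/EXP3 potential argument on $(q_k,\hat\ell_k)$ then yields the familiar inequality
\begin{align*}
\sum_k \inner{q_k}{\hat\ell_k} - \sum_k \hat\ell_k(a) \;\le\; \frac{\log{|V|}}{\eta} + \eta\sum_k\sum_i q_k(i)\hat\ell_k(i)^2,
\end{align*}
whose right-hand comparator term becomes $\sum_s \ell_s(a)$ after taking expectations.

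For the variance term on the right, I would use that $\hat\ell_k(i)$ vanishes unless $a_k=r$, together with the Cauchy--Schwarz bound $\bar\ell_k(i)^2 \le \lfloor\tau_k\rfloor\sum_j\ell_j(i)^2 \le p_k(r)\tau\sum_j\ell_j(i)^2$, to obtain $\mathbb{E}[\sum_i q_k(i)\hat\ell_k(i)^2\mid\cF_{k-1}]\le\tau\sum_i q_k(i)\sum_j\ell_j(i)^2$. Summing over $k$, applying $\mathbb{E}[\ell_j(i)^2]\le\rho$ (decoupled from $q_k$ by obliviousness of the adversary) and $\sum_k\lfloor\tau_k\rfloor \le T+\tau$, bounds this contribution by $\eta\tau\rho T$ up to lower-order terms.

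The main technical obstacle is then relating $\inner{q_k}{\bar\ell_k}$ to the actual expected loss $\mathbb{E}[\bar\ell_k(a_k)]$. Passing from $q_k$ to $p_k$ costs only a $\beta T$ additive term via $\inner{q_k}{\bar\ell_k} \ge \inner{p_k}{\bar\ell_k} - \beta\bar\ell_k(r)$ and $\sum_k\bar\ell_k(r) \le T$, but the substitution rule in Line~5 makes $\mathbb{E}[\bar\ell_k(a_k)\mid\cF_{k-1}]$ differ from $\inner{p_k}{\bar\ell_k}$ by
\begin{align*}
\mathbbm{1}\{a_{k-1}\ne r\}\sum_{k'\ne r} p_k(k')\bigl(\bar\ell_k(a_{k-1})-\bar\ell_k(k')\bigr).
\end{align*}
The key structural observation that cancels this is that $\hat\ell_k\equiv 0$ whenever $a_k\ne r$, so $q_k$ and $p_k$ are frozen throughout any maximal non-$r$ ``run'' $[k_0, k_{\mathrm{end}}]$; inside such a run $p_k=p_{k_0}$ and $a_{k-1}=a_{k_0}$, where $a_{k_0}$ was itself drawn from $p_{k_0}$ conditioned on $\{\cdot\ne r\}$. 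Averaging the bracketed deviation over this conditional distribution and invoking obliviousness of the loss sequence to decouple $a_{k_0}$ from $\bar\ell_k$ reveals that the correction has zero expectation, so $\mathbb{E}[\sum_k\bar\ell_k(a_k)]=\mathbb{E}[\sum_k\inner{p_k}{\bar\ell_k}]$ and the claimed regret bound follows.

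For the switch count, the action is constant within each mini-batch, so switches occur only at mini-batch boundaries and each one requires exactly one of $\{a_{k-1},a_k\}$ to equal $r$; hence the number of switches is at most $2A$, where $A = |\{k : a_k = r\}|$. Since $a_k=r$ iff the pre-substitution draw $b_k$ equals $r$, $\mathbb{E}[A]=\sum_k\mathbb{E}[p_k(r)]=\mathbb{E}[\sum_k\tau_k]/\tau$; combining with $\sum_k\tau_k\le\sum_k\lfloor\tau_k\rfloor+N\le (T+\tau)+N$ and the lower bound $\lfloor\tau_k\rfloor\ge\lfloor\beta\tau\rfloor\ge 1$ (which forces $N\le T+\tau$) yields $\mathbb{E}[A]\le T/\tau + O(1)$ and the claimed $\mathbb{E}[\text{switches}]\le 2T/\tau$.
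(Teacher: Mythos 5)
Your proposal is correct and follows essentially the same route as the paper: an EXP3-style potential argument over the mini-batches with the importance-weighted estimators, a Cauchy--Schwarz second-moment bound giving the $T\eta\tau\rho$ term, a $\beta T$ cost for the explicit exploration, and a switch count controlled by the expected number of visits to $r$ via $\mathbb{E}[\sum_k \tau_k]/\tau$. The only presentational difference is that the paper packages your zero-mean-correction observation about the freeze rule (lines 5--7) into a separate coupling lemma comparing Algorithm~\ref{alg:star_graph_main} to a variant that redraws every mini-batch (Lemma~\ref{lem:modified_alg_reg}, which conditions on the last visit to $r$ exactly as your ``frozen run'' argument does), whereas you cancel the correction term in place; the underlying argument, and the mild constant-factor looseness in the final switch-count bookkeeping, are the same in both.
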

The exploration parameter $\beta$ is needed to ensure that $\tau_t = p_t(r)\tau \geq 1$, so that at every iteration of the while loop Algorithm~\ref{alg:star_graph_main} plays at least one action. 
The bound assumed on the second moment $\mathbb{E}[\ell_t^2(i)]$ might
seem unusual since in the adversarial setting we do not assume a
randomization of the losses. For now, the reader can just assume that
this is a bound on the squared loss, that is, $\ell_t^2(i) \leq
\rho$. The role of this expectation and the source of the randomness
will become clear in Section~\ref{sec:corral}. We note that the star
graph admits independence number $\alpha(G) = |V| - 1$ and domination
number $\dnum(G) = 1$. In this case, the algorithms of
\cite{rangi2018online} and variants of the mini-batching algorithm
only guarantee a regret bound of the order
$\tilde O(\alpha(G)^{\frac 1 3} T^{\frac 2 3})$, while
Algorithm~\ref{alg:star_graph_main} guarantees a regret bound of the
order $\tilde O(T^{\frac 2 3})$ when we set $\eta = 1/T^{\frac 2 3}$,
$\tau = T^{\frac 2 3}$, and $\beta = 1/T^{\frac 1 3}$.

\subsection{Algorithm for General Feedback Graphs}

We now extend Algorithm~\ref{alg:star_graph_main} to handle arbitrary
feedback graphs. The pseudocode of this more general algorithm is
given in Algorithm~\ref{alg:gen_case_alg}.

\begin{algorithm}
\caption{Algorithm for general feedback graphs}
\label{alg:gen_case_alg}
\begin{algorithmic}[1]
\REQUIRE{Graph $G(V,E)$, learning rates $(\eta_t)$, exploration rate $\beta \in [0,1]$, maximum mini-batch $\tau$.}
\ENSURE{Action sequence $(a_t)_{t}$.}
\STATE Compute an approximate dominating set $R$
\STATE $q_1 \equiv Unif(V), u \equiv Unif(R)$
\WHILE{$\sum_{t}\tau_t \leq T$}
\STATE $p_t = (1-\beta) q_t + \beta u$.
\STATE Draw $i \sim p_t$, set $\tau_t = p_t(r_i)\tau$, where $r_i$ is the dominating vertex for $i$ and set $a_t = i$.
\IF{$a_{t-1}\not\in R$ and $a_t \not\in R$}
\STATE Set $a_t = a_{t-1}$
\ENDIF
\STATE Play $a_t$ for the next $\floor{\tau_t}$ iterations.
\STATE Set $\hat\ell_{t}(i) = \sum_{j=t}^{t+\floor{\tau_t}-1} \mathbb{I}(a_t = r_i)\frac{\ell_{j}(i)}{p_t(r_i)}.$
\STATE For all $i\in V$, $q_{t+1}(i) = \frac{q_t(i)\exp{-\eta_t \hat\ell_t(i)}}{\sum_{j\in V}q_t(j)\exp{-\eta_t \hat\ell_t(j)}}$.
\STATE $t = t+1$.
\ENDWHILE
\end{algorithmic}
\end{algorithm}

The first step of Algorithm~\ref{alg:gen_case_alg} consists of
computing an approximate minimum dominating set for $G$ using the
Greedy Set Cover algorithm \citep{chvatal1979greedy}. The Greedy Set
Cover algorithm naturally partitions $G$ into disjoint star graphs with revealing actions/vertices in the dominating set $R$. 
Next, Algorithm~\ref{alg:gen_case_alg} associates with each star-graph
its revealing arm $r\in R$. The mini-batch size at time $t$ now
depends on the probability $p_t(r)$ of sampling a revealing action $r$,
as in Algorithm~\ref{alg:star_graph_main}. There are several key
differences, however, that we now point out. Unlike
Algorithm~\ref{alg:star_graph_main}, the mini-batch size can change
between rounds even if the action remains fixed. This occurs when the
newly sampled action is associated with a new revealing action in $R$,
however, it is different from the revealing action. The above
difference introduces some complications, because $\tau_t$ conditioned
on all prior actions $a_{1:t-1}$ is still a random variable, while it is a
deterministic in Algorithm~\ref{alg:star_graph_main}. 
We also allow switches between any action and any vertex $r \in R$. This
might seem to be a peculiar choice. For example, allowing only
switches within each star-graph in the partition and only between
revealing vertices seems more natural. Allowing switches between any
vertex and any revealing action benefits exploration while still being
sufficient for controlling the number of switches. If we further
constrain the number of switches by using the more natural approach,
it is possible that not enough information is received about each action,
leading to worse regret guarantees. We leave the investigation of such
more natural approaches to future
work. Algorithm~\ref{alg:gen_case_alg} admits the following regret
bound.

\begin{theorem}
\label{thm:bound_gen_case}
For any $\beta \geq \frac{|R|}{\tau}$ The expected regret of Algorithm~\ref{alg:gen_case_alg} is
\begin{align*}
    \frac{\log{|V|}}{\eta} + \eta\tau T + \beta T.
\end{align*}
Further, if the algorithm is augmented similar to Algorithm~\ref{alg:star_graph2}, then it will switch between actions at most $\frac{2T|R|}{\tau}$ times.
\end{theorem}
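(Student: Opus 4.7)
The plan is to follow the structure of the proof of Theorem~\ref{thm:star_graph_main}, extending each step from the single-star case to the general case induced by the dominating set $R$. Let $\{S_r\}_{r\in R}$ denote the star partition of $V$ produced by the greedy set cover, with $r_i$ the element of $R$ dominating $i$, and let $t_s$ denote the first round of the $s$-th mini-batch.

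First I would record the standard Hedge inequality for the weight sequence $\{q_s\}$ with surrogate losses $\{\hat\ell_s\}$. Since $\tau_s = p_s(r_i)\tau$ on the event $\{a_s = r_i\}$, each $\hat\ell_s(i)$ is bounded pointwise by $\tau$, so for $\eta\tau \le 1$,
\begin{align*}
\sum_{s=1}^M \langle q_s,\hat\ell_s\rangle - \sum_{s=1}^M \hat\ell_s(a) \leq \frac{\log|V|}{\eta} + \eta \sum_{s=1}^M\sum_{i\in V} q_s(i)\,\hat\ell_s(i)^2.
\end{align*}
The override rule in lines 6--7 of Algorithm~\ref{alg:gen_case_alg} ensures that $a_s = r_i$ if and only if the sampled vertex $i_s$ equals $r_i$: when $i_s\in R$ the override never fires, and when $i_s\notin R$ the override would assign $a_s=a_{s-1}$, which cannot equal an $r_i\in R$ since $a_{s-1}\in R$ would itself block the override. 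Hence $\mathbb{E}[\hat\ell_s(i)\mid \mathcal{F}_{s-1}] = \sum_{j=t_s}^{t_s+\lfloor p_s(r_i)\tau\rfloor -1}\ell_j(i)$, and summing in $s$ recovers $\mathbb{E}[\sum_s \hat\ell_s(a)] = \sum_{t=1}^T \ell_t(a)$ up to a lower-order boundary term from the final truncated batch. On the algorithm side, the decomposition $\langle p_s,\cdot\rangle = (1-\beta)\langle q_s,\cdot\rangle + \beta \langle u,\cdot\rangle$ turns the left-hand side into a lower bound on $\mathbb{E}[\sum_t \ell_t(a_t)] - \beta T$.

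The main technical step is the variance bound. The conditional second moment satisfies $\mathbb{E}[\hat\ell_s(i)^2\mid \mathcal{F}_{s-1}] \leq p_s(r_i)\tau^2$, the extra $p_s(r_i)$ factor coming from the probability that $i_s=r_i$. Aggregating across the partition,
\begin{align*}
\mathbb{E}\!\left[\sum_{i\in V} q_s(i)\,\hat\ell_s(i)^2 \,\Big|\, \mathcal{F}_{s-1}\right] \leq \tau^2 \sum_{r\in R} p_s(r)\,q_s(S_r).
\end{align*}
The key reduction then exploits the exploration inequality $p_s \geq (1-\beta)q_s$ to write $\sum_r p_s(r)q_s(S_r) \leq \tfrac{1}{1-\beta}\sum_r p_s(r)p_s(S_r) = \tfrac{1}{(1-\beta)\tau}\mathbb{E}[\tau_s\mid\mathcal{F}_{s-1}]$; summing in $s$ and invoking the time budget $\sum_s \tau_s \leq T$ bounds the total variance contribution by $\tau T/(1-\beta)$, producing the $\eta\tau T$ term. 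I expect this to be the main obstacle, because the batch length $\tau_s$ is itself random through $r_{i_s}$, and this star-partition identity is the only place where that randomness is coupled to each action's estimator, with no direct analogue in the star-graph proof.

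For the switching count, the override rule forbids any direct transition between two non-revealing actions, so a switch between consecutive mini-batches requires at least one of $a_s,a_{s+1}$ to lie in $R$. The switch count is therefore at most twice the number of mini-batches whose played action lies in $R$. With the augmentation borrowed from Algorithm~\ref{alg:star_graph2} enforcing a minimum batch length of $\tau/|R|$ on such mini-batches, the count of $R$-mini-batches is at most $T|R|/\tau$, giving $2T|R|/\tau$ switches. Combining the Hedge inequality, the unbiasedness identity, the variance bound, and the $\beta T$ exploration overhead then yields the pseudo-regret $\tfrac{\log|V|}{\eta} + \eta\tau T + \beta T$.
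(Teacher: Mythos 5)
Your treatment of the regret term follows the paper's proof essentially step for step: the same Hedge potential argument, the same conditional second-moment bound $\mathbb{E}[\hat\ell_t(i)^2\mid a_{1:t-1}]\leq p_t(r_i)\tau^2$, and the same key reduction of $\sum_{r\in R}p_t(r)\cdot(\text{mass of }V_r)$ to $\mathbb{E}[\tau_t\mid a_{1:t-1}]/\tau$, followed by the time budget $\mathbb{E}\left[\sum_t\tau_t\right]\leq 2T$. One point you gloss over: when the override in lines 6--7 fires, the \emph{played} action differs from the \emph{sampled} one, so although the estimators $\hat\ell_t$ are unaffected (as you correctly argue), the identity $\mathbb{E}[\ell_j(a_t)\mid a_{1:t-1}]=\sum_i p_t(i)\ell_j(i)$ no longer holds for the incurred loss. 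The paper handles this with the exchangeability argument of Lemma~\ref{lem:modified_alg_reg} (the law of $(p_t)$ is unchanged by the override, and conditioning on the last visit to $R$ shows the expected incurred loss is unchanged). This omission is fixable but should be stated.

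The switching bound is where your argument genuinely breaks. You claim the augmentation enforces a minimum batch length of $\tau/|R|$ on mini-batches whose action lies in $R$ and then pigeonhole against the horizon. But when a revealing action $r$ is sampled, the batch length is $\floor{p_t(r)\tau}$, and the only available lower bound is $p_t(r)\geq\beta/|R|$, which gives $\tau_t\geq\beta\tau/|R|\geq 1$ --- not $\tau/|R|$. With the intended parameters $\beta=|R|^{1/3}/T^{1/3}$ and $\tau=|R|^{2/3}T^{1/3}$ one has $\beta\tau/|R|=1$, so your deterministic count yields only the trivial bound of $2T$ revealing batches. The paper's argument is in expectation rather than worst case: it uses
\begin{align*}
\mathbb{E}[\tau_t\mid a_{1:t-1}]=\tau\sum_{r\in R}p_t(r)\,p_t(V_r)\;\geq\;\tau\sum_{r\in R}p_t(r)^2\;\geq\;\frac{\tau}{|R|}\Bigl(\sum_{r\in R}p_t(r)\Bigr)^2,
\end{align*}
and then compares $\mathbb{E}\left[\sum_t\bigl(\sum_r p_t(r)\bigr)^2\right]$, which is controlled by the budget $2T\geq\mathbb{E}\left[\sum_t\tau_t\right]$, with the expected number of revealing plays $\mathbb{E}\left[\sum_t\sum_r\mathbb{I}(a_t=r)\right]=\mathbb{E}\left[\sum_t\sum_r p_t(r)\right]$. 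Some argument of this probabilistic type is needed; a minimum-length pigeonhole cannot deliver $\frac{2T|R|}{\tau}$.
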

Setting $\eta = 1/(|R|^{\frac 1 3}T^{\frac 2 3})$,
$\tau = |R|^{\frac 2 3}T^{\frac 1 3}$ and
$\beta = |R|^{\frac 1 3}/T^{\frac 1 3}$, recovers a pseudo-regret
bound of $\tilde{O}(|R|^{\frac 1 3}T^{\frac 2 3})$, with an expected
number of switches bounded by $2|R|^{\frac 1 3}T^{\frac 2 3}$. We note
that $|R| = O(\dnum(G)\log{|V|})$ and thus the regret bound of our
algorithm scales like $\dnum(G)^{\frac 1 3}$. Further, this is a
strict improvement over the results of \cite{rangi2018online} as their
result shows a scaling of $\alpha(G)^{\frac 1 3}$. The proof of
Theorem~\ref{thm:bound_gen_case} can be found in
Appendix~\ref{app:gen_case_alg}.

\vspace*{-7pt}
\subsection{Corralling Star Graph Algorithms}
\label{sec:corral}
\vspace*{-5pt}

\begin{algorithm}[t]
\caption{Corralling star-graph algorithms}
\label{alg:cor_alg}
\begin{algorithmic}[1]
\REQUIRE{Feedback graph $G(V,E)$, learning rate $\eta$, mini-batch size $\tau$}
\ENSURE{Action sequence $(a_t)_{t=1}^T$.}
\STATE Compute an approximate minimum dominating set $R$ and initialize $|R|$ base star-graph algorithms, $B_1,B_2,\ldots,B_{|R|}$, with step size $\frac{\eta'}{2|R|}$, mini-batch size $\tau$ and exploration rate $\nicefrac{1}{\tau}$ (Algorithm~\ref{alg:star_graph_main}). 
\STATE $T' = \frac{T}{\tau}$, $\beta = \frac{1}{T'}$, $\tilde\beta = \exp{\frac{{1}}{\log{T}}}$, $\eta_{1,i} = \eta$, $\rho_{1,i} = 2|R|$ for all $i\in [|R|]$, $q_1 =  p_1 = \frac{\mathbf{1}}{|R|}$
\FOR{$t = 1,\ldots,T'$}
\STATE Draw $i_t \sim  p_t$
\FOR{$j_t=(t-1)\tau+1,\ldots,(t-1)\tau+\tau$}
\STATE Receive action $a_{j_t}^{i}$ from $B_{i}$ for all $i\in [|R|]$.
\STATE Set $a_{j_t} = a_{j_t}^{i_t}$, play $a_{j_t}$ and observe loss $\ell_{j_t}(a_{j_t})$.
\STATE Send $\frac{\ell_{j_t}(a_{j_t})}{p_{t}(i_t)} \mathbb{I}\{i = i_t\}$ as loss to algorithm $B_i$ for all $i\in [|R|]$.
\STATE Update $\hat\ell_t(i) = \hat\ell_t(i)+\frac{1}{\tau}\frac{\ell_{j_t}(a_{j_t})}{p_{t}(i_t)} \mathbb{I}\{i = i_t\}$.
\ENDFOR
\STATE Update $q_{t+1} = \text{Algorithm~\ref{alg:log_barr_omd}}(q_t, \hat\ell_t, \eta_t)$.
\STATE Set $p_{t+1} = (1-\beta)q_{t+1} + \beta \frac{\mathbf{1}}{|R|}$.
\FOR{$i=1,\ldots ,|R|$}
\IF{$\frac{1}{p_{t}(i)} > \rho_{t,i}$}
\STATE Set $\rho_{t+1,i} = \frac{2}{p_{t}(i)}$, $\eta_{t+1,i} = \tilde\beta \eta_{t,i}$ and restart $i$-th star-graph algorithm, with updated step-size $\frac{\eta'}{\rho_{t+1,i}}$
\ELSE 
\STATE Set $\rho_{t+1,i} = \rho_{t,i}$, $\eta_{t+1,i} = \eta_{t,i}$.
\ENDIF
\ENDFOR
\ENDFOR
\end{algorithmic}
\end{algorithm}

\begin{algorithm}[ht]
\caption{Log-Barrier-OMD($q_t,\ell_t,\eta_t$)}
\label{alg:log_barr_omd}
\begin{algorithmic}[1]
\REQUIRE{Previous distribution $q_t$, loss vector $\ell_t$, learning rate vector $\eta_t$.}
\ENSURE{Updated distribution $q_{t+1}$.}
\STATE Find $\lambda \in [\min_{i} \ell_t(i), \max_{i} \ell_t(i)]$ such that $\sum_{i=1}^{|R|} \frac{1}{\frac{1}{q_{t}(i)} + \eta_{t,i}(\ell_t(i) - \lambda)} = 1$
\STATE Return $q_{t+1}$ such that $\frac{1}{q_{t+1}(i)} = \frac{1}{q_t(i)} + \eta_{t,i}(\ell_t(i) - \lambda)$.
\end{algorithmic}
\end{algorithm}

An alternative natural method to tackle the general feedback graph
problem is to use the recent corralling algorithm of
\cite{agarwal2016corralling}. Corralling star graph algorithms was in
fact our initial approach. In this section, we describe that
technique, even though it does not seem to achieve an optimal rate. 
Here too, the first step consists of computing an approximate minimum
dominating set. Next, we initialize an instance of
Algorithm~\ref{alg:star_graph_main} for each star graph. Finally, we
combine all of the star graph algorithms via a mini-batched version of
the corralling algorithm of
\cite{agarwal2016corralling}. Mini-batching is necessary to avoid
switching between star graph algorithms too often. The pseudocode of
this algorithm is given in Algorithm~\ref{alg:cor_alg}. Since during
each mini-batch we sample a single star graph algorithm, we need to
construct appropriate unbiased estimators of the losses $\ell_{j_t}$,
which we feed back to the sampled star graph algorithm. The bound on
the second moment of these estimators is exactly what
Theorem~\ref{thm:star_graph_main} requires. Our algorithm admits the
following guarantees.
\begin{theorem}
\label{thm:switch_cost_main_upper}
Let
$\tau = T^{\frac 1 3}/|R|^{\frac 1 4}, \eta = |R|^{\frac 1 4}/(40 c \log{T'}T^{\frac
  1 3} \log{|V|})$, and $\eta' = 1/T^{\frac 2 3}$, where $c$ is a
constant independent of~$T$, $\tau$, $|V|$ and $|R|$. Then, for any
$a \in V$, the following inequality holds for
Algorithm~\ref{alg:cor_alg}: \vspace*{-7pt}
\begin{align*}
\mathbb{E}\left[\sum_{t=1}^T \ell_t(a_t) - \ell_t(a)\right] \leq
  \tilde O\left(\sqrt{|R|} \, T^{\frac 2 3}\right).
\end{align*}
Furthermore, the expected number of switches of the algorithm is bounded by $T^{\frac 2 3}|R|^{\frac 1 3}$.
\end{theorem}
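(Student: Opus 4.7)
The plan is to follow the mini-batched corralling analysis of \cite{agarwal2016corralling}, using Theorem~\ref{thm:star_graph_main} to control the regret of each base star-graph algorithm. Fix a comparator action $a\in V$ and let $i^{*}\in[|R|]$ index the star graph of the dominating-set partition that contains $a$. Writing $a_{j}^{i}$ for the action that base $B_i$ would have played at round $j$, and $t(j)$ for the master mini-batch index containing $j$, the first step is the standard corralling decomposition
\[
\mathbb{E}\Bigg[\sum_{j=1}^{T}\ell_{j}(a_{j})-\ell_{j}(a)\Bigg] = \mathbb{E}\Bigg[\sum_{j}\ell_{j}(a_{j}^{i_{t(j)}})-\ell_{j}(a_{j}^{i^{*}})\Bigg] + \mathbb{E}\Bigg[\sum_{j}\ell_{j}(a_{j}^{i^{*}})-\ell_{j}(a)\Bigg].
\]
The first summand is the regret of the corralling master against the best base, while the second is the internal regret of $B_{i^{*}}$ against $a$.

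Next I would bound the internal regret of $B_{i^{*}}$ using Theorem~\ref{thm:star_graph_main}. The loss vector fed to $B_{i^{*}}$ is the importance-weighted estimator $\ell_{j}(a_{j})\mathbb{I}\{i_{t(j)}=i^{*}\}/p_{t(j)}(i^{*})$, which is unbiased for $\ell_{j}(a_{j}^{i^{*}})$ and has conditional second moment at most $1/p_{t(j)}(i^{*})\le\rho_{t(j),i^{*}}$ by the doubling rule on lines 14--18. Since $\tilde\beta=\exp{1/\log{T}}$, the step-size $\eta'/\rho_{t,i^{*}}$ is updated only $O(\log{T}\cdot\log{1/p_{T,i^{*}}})$ times; summing the guarantee of Theorem~\ref{thm:star_graph_main} across the resulting restart intervals produces a total base regret of the form $\tilde{O}(\rho_{T,i^{*}}\log{|V|}/\eta'+\eta'\tau T+T/\tau)$.

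For the corralling regret, I would invoke the log-barrier OMD guarantees of \cite{agarwal2016corralling}. Multiplicative stability of Algorithm~\ref{alg:log_barr_omd} ensures $p_{t+1}(i)/p_{t}(i)\in[1/2,2]$, so $\rho_{t,i}$ doubles at most $O(\log{T'})$ times per coordinate. The aggregated master loss $\bar\ell_{t}(i)=\frac{1}{\tau}\sum_{j_t}\hat\ell_{j_t}(i)$ is unbiased for the average realized loss of $B_i$ over the $t$-th mini-batch, with conditional second moment $O(1/p_{t}(i))$; plugging these into the log-barrier OMD regret lemma produces a corralling bound of the form
\[
\tilde{O}\Bigg(\frac{|R|\log{T'}}{\eta}+\eta T'-\frac{\rho_{T,i^{*}}}{40\eta'}\Bigg),
\]
where the negative correction arising from the adaptive step-size is the crucial ingredient that counteracts the $\rho_{T,i^{*}}$ term from the base regret.

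Finally I would balance the three contributions and tune the parameters. The self-bounding effect above pins the effective $\rho_{T,i^{*}}$ at $\Theta(\sqrt{|R|})$, so plugging in $\tau=T^{1/3}/|R|^{1/4}$, $\eta=|R|^{1/4}/(40c\log{T'}T^{1/3}\log{|V|})$, and $\eta'=1/T^{2/3}$ yields the pseudo-regret bound $\tilde{O}(\sqrt{|R|}\,T^{2/3})$. For the switch count, the master changes the played base only between mini-batches, contributing at most $T/\tau=T^{2/3}|R|^{1/4}$ outer switches, and the currently followed base contributes at most $2T/\tau$ further switches by Theorem~\ref{thm:star_graph_main}; together this is bounded by $T^{2/3}|R|^{1/3}$. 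The main technical obstacle will be tracking how the mini-batched importance-weighted losses and the internal randomness of $\tau_t$ in each star-graph base interact with the adaptive doubling of $\rho_{t,i}$ and the $\eta_{t,i}$ schedule, so that the negative correction inside the log-barrier OMD analysis is indeed large enough to cancel the worst-case growth of $\rho_{T,i^{*}}$.
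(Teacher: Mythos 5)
Your overall strategy coincides with the paper's: decompose the regret into the corralling master's regret against the best base (Lemma~\ref{lem:corr_bandit_lem}) plus the internal regret of the base $B_{i^*}$ containing $a$, use the negative bias-correction term from the log-barrier OMD analysis of \cite{agarwal2016corralling} to absorb the growth of $\rho_{T',i^*}$, and then tune $\tau,\eta,\eta'$. The gap is in how you bound the internal regret of $B_{i^*}$. Summing Theorem~\ref{thm:star_graph_main} over restart intervals with step size $\eta'/\rho_{t,i^*}$, as you propose, yields a base regret that is \emph{linear} in $\rho_{T,i^*}$, namely $\tilde O(\rho_{T,i^*}\log{|V|}/\eta')=\tilde O(\rho_{T,i^*}\,T^{2/3}\log{|V|})$. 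The negative correction supplied by the master is $-\mathbb{E}\left[\tau\rho_{T',i^*}/(40\eta\log{T'})\right]$ (note: the master rate $\eta$ and a factor of $\tau$ appear in the denominator, not $\eta'$ as you wrote), which with the stated parameters equals $-c\,\mathbb{E}[\rho_{T',i^*}]\,T^{2/3}\log{|V|}/\sqrt{|R|}$. With both the positive and the negative contributions linear in $\rho_{T',i^*}$, no self-bounding cancellation is possible: either the positive term dominates and the bound is vacuous as $\rho_{T',i^*}$ grows, or you must shrink $\eta$ until the correction wins, which inflates the $\tau|R|\log{T'}/\eta$ term of the master to $\tilde\Omega(|R|\,T^{2/3})$ rather than $\sqrt{|R|}\,T^{2/3}$.

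The paper closes this gap by invoking Theorem~\ref{thm:thm15_ag} (Theorem 15 of \cite{agarwal2016corralling}): since the optimized bound of Theorem~\ref{thm:star_graph_main} scales as $\sqrt{\rho}$, the $(\alpha,R)$-stability condition holds with $\alpha=1/2$, and the restarted base algorithm's regret is $O\big(\mathbb{E}[\rho_{T',i_a}^{1/2}]\,T^{2/3}\log{|V|}\big)$ --- proportional to $\rho^{1/2}$, not to $\rho$. Then $c\,\mathbb{E}[\rho^{1/2}]T^{2/3}\log{|V|}-c\,\mathbb{E}[\rho]T^{2/3}\log{|V|}/\sqrt{|R|}$ is a concave quadratic in $\rho^{1/2}$, maximized at $\rho^{1/2}=\sqrt{|R|}/2$, which gives the $O(\sqrt{|R|}\,T^{2/3}\log{|V|})$ bound; in particular the effective $\rho_{T',i_a}$ is pinned at $\Theta(|R|)$, not at $\Theta(\sqrt{|R|})$ as you state. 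Your switch-count argument matches the paper's. To repair the proof you need the $\rho^{1/2}$-stability step in place of the linear-in-$\rho$ restart accounting; the rest of your outline is sound.
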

This bound is suboptimal compared to the
$\dnum(G)^{\frac 1 3}$-dependency achieved by
Algorithm~\ref{alg:gen_case_alg}. We conjecture that this gap is an
artifact of the analysis of the corralling algorithm of
\cite{agarwal2016corralling}. However, we were unable to improve on
the current regret bound by simply corralling.

\section{Policy Regret with Partial Counterfactual Feedback}
\label{sec:policy}

In this section, we consider
games played against 
an \emph{adaptive adversary}, who can select
losses based on the player's past actions.
In that scenario, the notion of 
pseudo-regret is no longer meaningful or interpretable, as pointed out by \cite{arora2012online}. Instead, the
authors proposed the notion of 
\emph{policy regret} defined by the following: $\max_{a \in V}\sum_{t=1}^T\ell_t(a_1,\ldots,a_t) - \sum_{t=1}^T\ell_t(a,\ldots,a),$
where the benchmark action $a$ does not depend on the player's
actions.  Since it is impossible to achieve $o(T)$ policy regret when
the $t$-th loss is allowed to depend on all past actions of the
player, the authors made the natural assumption that the adversary is
$m$-memory bounded, that is that the $t$-th loss can only depend on
the past $m$ actions chosen by the player. In that case, the known
min-max policy regret bounds are in
$\tilde\Theta(|V|^{\frac 1 3}T^{\frac 2 3})$ \citep{dekel2014bandits},
ignoring the dependency on $m$.

Here, we show that the dependency on $|V|$ can be improved in the
presence of partial counterfactual feedback.
We assume that partial feedback on losses with memory $m$ is available. We restrict the feedback graph to admitting only vertices for repeated $m$-tuples of actions in $V$, that is, we can only observe additional feedback for losses of the type $\ell_t(a,a,\ldots,a)$, where $a \in V$. For a motivating example, consider the problem of prescribing treatment plans to incoming patients with certain disorders. 
Two patients that are similar, for example patients in the same disease sub-type or with similar physiological attributes, when prescribed different treatments, reveal counterfactual feedback about alternative treatments for each other.

Our algorithm for incorporating such partial feedback to minimize policy regret is based on our algorithm for general feedback graphs (Algorithm~\ref{alg:gen_case_alg}). 
The learner receives feedback about $m$-memory bounded losses in the form of $m$-tuples. 
We simplify the representation by replacing each $m$-tuple vertex in the graph by a 
single action, that is vertex $(a, \ldots, a)$ represented as $a$.

As described in Algorithm~\ref{alg:pol_regret}, the input stream of $T$ losses is split into mini-batches of size $m$, indexed by $t$, such that $\hat\ell_t(\cdot) = \frac{1}{m}\sum_{j=1}^m \ell_{(t-1)m+j}(\cdot)$. This sequence of losses, $(\hat\ell_t)_{t=1}^{T/m}$, could be fed as input to Algorithm~\ref{alg:gen_case_alg} if it were not for the constraint on the additional feedback. Suppose that between the $t$-th mini-batch and the $t+1$-st mini-batch, Algorithm~\ref{alg:gen_case_alg} decides to switch actions so that $a_{t+1}\neq a_t$. In that case, no additional feedback is available for $\hat\ell_{t+1}(a_{t+1})$ and the algorithm cannot proceed as normal. To fix this minor issue, the feedback provided to Algorithm~\ref{alg:gen_case_alg} is that the loss of action $a_{t+1}$ was $0$ and all actions adjacent to $a_{t+1}$ also incurred $0$ loss. This modification of losses cannot occur more than the number of switches performed by Algorithm~\ref{alg:gen_case_alg}. Since the expected number of switches is bounded by $O(\dnum(G)^{\frac 1 3}T^{\frac 2 3})$, the modification does not affect the total expected regret.

\begin{algorithm}[t]
\caption{Policy regret with side observations}
\label{alg:pol_regret}
\begin{algorithmic}[1]
\REQUIRE{Feedback graph $G(V,E)$, learning rate $\eta$, mini-batch size $\tau$, where $\eta$ and $\tau$ are set as in Theorem~\ref{thm:switch_cost_main_upper}.}
\ENSURE{Action sequence $(a_t)_{t}$.}
\STATE Transform feedback graph $G$ from $m$-tuples to actions and initialize Algorithm~\ref{alg:gen_case_alg}.
\FOR{$t=1,\ldots,T/m$}
\STATE Sample action $a_t$ from $p_t$ generated by Algorithm~\ref{alg:gen_case_alg} and play it for the next $m$ rounds.
\IF{$a_{t-1} = a_t$}
\STATE Observe mini-batched loss $\hat\ell_t(a_t) = \frac{1}{m}\sum_{j=1}^m \ell_{(t-1)m+j}(a_t)$ and additional side observations. Feed mini-batched loss and additional side observations to Algorithm~\ref{alg:gen_case_alg}.
\ELSE 
\STATE Set $\hat\ell_t(a_t) = 0$ and set additional feedback losses to $0$. Feed losses to Algorithm~\ref{alg:gen_case_alg}.
\ENDIF
\ENDFOR
\end{algorithmic}
\end{algorithm}

\begin{theorem}
\label{thm:pol_regret_bound}
The expected policy regret of Algorithm~\ref{alg:pol_regret} is bounded as $\tilde O((m\dnum(G))^{\frac 1 3}T^{\frac 2 3})$.
\end{theorem}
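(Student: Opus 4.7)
The plan is to reduce the policy regret to the pseudo-regret guarantee of Algorithm~\ref{alg:gen_case_alg} on the mini-batched stream of length $T' = T/m$, paying two extra costs: an $O(m)$ per-switch correction from mixed $m$-tuple histories at mini-batch boundaries, and a bounded perturbation from the zero substitution fed to Algorithm~\ref{alg:gen_case_alg} on switch rounds.

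First I would decompose the player's loss per mini-batch. Let $a_t$ denote the action played throughout mini-batch $t$ (constant over the $m$ rounds by construction), and let $S$ be the number of switches. Because the adversary is $m$-memory bounded, once the sliding window of the last $m$ actions consists entirely of $a_t$, the per-round loss equals $\ell_{(t-1)m+j}(a_t,\ldots,a_t)$, so a mini-batch not preceded by a switch contributes exactly $m\hat\ell_t(a_t)$ to the player's total. A mini-batch preceded by a switch has at most $m-1$ rounds of mixed history, contributing an additive error of at most $m$ under the standard $\ell\in[0,1]$ assumption. Since the benchmark loss for any $a$ equals $m\sum_t \hat\ell_t(a)$ exactly, this yields
\begin{align*}
R_T^{\text{policy}} \;\leq\; m\Bigl[\sum_t \hat\ell_t(a_t) - \min_{a}\sum_t \hat\ell_t(a)\Bigr] + mS.
\end{align*}

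Next I would convert this into a bound involving the losses $\tilde\ell_t$ actually presented to Algorithm~\ref{alg:gen_case_alg} (which coincide with $\hat\ell_t$ except that observed entries are zeroed on switch rounds). Since each entry is perturbed by at most $1$ and only on switch rounds, $\sum_t \hat\ell_t(a_t) - \sum_t \hat\ell_t(a) \leq \sum_t \tilde\ell_t(a_t) - \sum_t \tilde\ell_t(a) + 2S$ holds for every $a\in V$. Applying Theorem~\ref{thm:bound_gen_case} to Algorithm~\ref{alg:gen_case_alg} on the $T'$-round stream $(\tilde\ell_t)$ (with $T$ replaced by $T'$ in the parameter choices) gives expected regret $\tilde O\bigl(|R|^{1/3}(T/m)^{2/3}\bigr)$ together with $\mathbb{E}[S]\leq \tilde O\bigl(|R|^{1/3}(T/m)^{2/3}\bigr)$. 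Combining,
\begin{align*}
\mathbb{E}\bigl[R_T^{\text{policy}}\bigr] \;\leq\; m \cdot \tilde O\bigl(|R|^{1/3}(T/m)^{2/3}\bigr) + O\bigl(m\,\mathbb{E}[S]\bigr) \;=\; \tilde O\bigl(m^{1/3}|R|^{1/3} T^{2/3}\bigr),
\end{align*}
and the claim follows from $|R| = O(\dnum(G)\log|V|)$.

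The main obstacle I anticipate is that the stream $(\tilde\ell_t)$ is not oblivious: the zero substitutions are triggered by Algorithm~\ref{alg:gen_case_alg}'s own switch decisions, so invoking Theorem~\ref{thm:bound_gen_case} as a black box is not strictly rigorous. I would resolve this by revisiting the proof of Theorem~\ref{thm:bound_gen_case} and observing that its analysis relies only on conditional unbiasedness and a pointwise second-moment bound of the importance-weighted estimator, both of which are preserved under the zero substitution (which can only shrink the estimator). Consequently the regret guarantee transfers to $(\tilde\ell_t)$ without modification, and the decomposition above gives the stated policy-regret bound.
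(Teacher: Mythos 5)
Your proof follows essentially the same route as the paper's: mini-batch the horizon into blocks of size $m$, invoke the pseudo-regret and switch-count guarantees of Theorem~\ref{thm:bound_gen_case} on the block stream of length $T/m$, absorb the zero-substituted losses into an additive $O(m\,\mathbb{E}[S])$ correction, and multiply through by $m$ to recover the $\tilde O((m\dnum(G))^{\frac 1 3}T^{\frac 2 3})$ bound. The two technicalities you flag --- the mixed-history rounds at mini-batch boundaries and the non-obliviousness of the substituted loss stream --- are glossed over in the paper's own argument, so your treatment is, if anything, slightly more careful.
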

The proof of the above theorem can be found in
Appendix~\ref{app:pol_regret}. Let us point out that
Algorithm~\ref{alg:pol_regret} requires knowledge (or an upper bound)
on the memory of the adversary, unlike the algorithm proposed by
\cite{arora2012online}. We conjecture that this is due to the adaptive
mini-batch technique of our algorithm. In particular, we believe that
for $m$-memory bounded adversaries, it is necessary to repeat each
sampled action $a_t$ at least $m$ times.
\ignore{
 {\color{red}{Not clear if
    \cite{arora2012online} avoid dependence on $m$ because it has to
    be smaller than $T^{\frac 1 3}$ and this is an upper bound on $m$
    or there is an implicit assumption that $m$ is smaller than
    $T^{\frac 1 3}$.}}
}

\section{Lower Bound}
\label{sec:lower-bound-main}
The main tool for constructing lower bounds when switching costs are involved is the stochastic process constructed by \cite{dekel2014bandits}. The crux of the proof consists of a carefully designed multi-scale random walk. The two characteristics of this random walk are its depth and its width. At time $t$, the depth of the walk is the number of previous rounds on which the value of the current round depends. The width of the walk measures how far apart two rounds that depend on each other are in time. The loss of each action is equal to the value of the random walk at each time step, and the loss of the best action is slightly better by a small positive constant. The depth of the process controls how well the losses concentrate in the interval $[0,1]$\footnote{Technically, the losses are always clipped between $[0,1]$.}. The width of the walk controls the variance between losses of different actions and ensures it is impossible to gain information about the best action, unless one switches between different actions.

\subsection{Lower Bound for Non-complete Graphs}
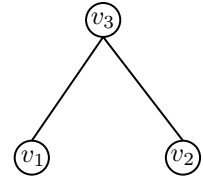
\begin{wrapfigure}{r}{0.45\textwidth}
\vspace*{-15pt}
\centering
\tikzset{every picture/.style={line width=0.75pt}}
\begin{tikzpicture}[x=0.75pt,y=0.75pt,yscale=-1.25,xscale=1.25]
%uncomment if require: \path (0,300); %set diagram left start at 0, and has height of 300

%Shape: Ellipse [id:dp29031957412557474] 
\draw   (99,113.15) .. controls (99,109.37) and (102.09,106.3) .. (105.9,106.3) .. controls (109.71,106.3) and (112.8,109.37) .. (112.8,113.15) .. controls (112.8,116.93) and (109.71,120) .. (105.9,120) .. controls (102.09,120) and (99,116.93) .. (99,113.15) -- cycle ;
%Shape: Ellipse [id:dp014809257269692955] 
\draw   (160,113.15) .. controls (160,109.37) and (163.09,106.3) .. (166.9,106.3) .. controls (170.71,106.3) and (173.8,109.37) .. (173.8,113.15) .. controls (173.8,116.93) and (170.71,120) .. (166.9,120) .. controls (163.09,120) and (160,116.93) .. (160,113.15) -- cycle ;
%Shape: Ellipse [id:dp781132917428372] 
\draw   (127.8,57.55) .. controls (127.8,53.77) and (130.89,50.7) .. (134.7,50.7) .. controls (138.51,50.7) and (141.6,53.77) .. (141.6,57.55) .. controls (141.6,61.33) and (138.51,64.4) .. (134.7,64.4) .. controls (130.89,64.4) and (127.8,61.33) .. (127.8,57.55) -- cycle ;
%Straight Lines [id:da858001664747042] 
\draw    (134.7,64.4) -- (105.9,106.3) ;

%Straight Lines [id:da8271106126022708] 
\draw    (134.7,64.4) -- (166.9,106.3) ;

% %Shape: Polygon Curved [id:ds4160541757170543] 
% \draw   (105.9,120) .. controls (115.8,117.7) and (122.4,130.7) .. (111.6,135.5) .. controls (100.8,140.3) and (96,122.3) .. (105.9,120) -- cycle ;
% %Shape: Polygon Curved [id:ds12213879582987841] 
% \draw   (166.9,120) .. controls (176.8,117.7) and (183.4,130.7) .. (172.6,135.5) .. controls (161.8,140.3) and (157,122.3) .. (166.9,120) -- cycle ;
% %Shape: Polygon Curved [id:ds1878290754269244] 
% \draw   (129,35.2) .. controls (138.9,32.9) and (145.5,45.9) .. (134.7,50.7) .. controls (123.9,55.5) and (119.1,37.5) .. (129,35.2) -- cycle ;

% Text Node
\draw (106.5,113.5) node  [align=left] {$v_1$};
% Text Node
\draw (166.9,113.15) node  [align=left] {$v_2$};
% Text Node
\draw (134.7,57.55) node  [align=left] {$v_3$};
\end{tikzpicture}
\caption{Feedback graph for switching costs}
\vskip -.15in
\label{gr:sw_cost}
\end{wrapfigure}
We first verify that the dependence on the time horizon cannot be improved from $T^{\frac 2 3}$ for any feedback graph in which there is at least one edge missing, that is, in which there exist two vertices that do not reveal information about each other. Without loss of generality, assume that the two vertices not joined by an edge are $v_1$ and $v_2$. Take any vertex that is a shared neighbor and denote this vertex by $v_3$ (see Figure~\ref{gr:sw_cost} for an example). We set the loss for action $v_3$ and all other vertices to be equal to one. We now focus the discussion on the subgraph with vertices $\set{v_1, v_2, v_3}$. The losses of actions $v_1$ and $v_2$ are set according to the construction in \citep{dekel2014bandits}. Since $\{v_1,v_2\}$ forms an independent set, the player would need to switch between these vertices to gain information about the best action. This is also what the lower bound proof of \cite{rangi2018online} is based upon. However, it is important to realize that the construction in \cite{dekel2014bandits} also allows for gaining information about the best action if its loss is revealed together with some other loss constructed from the stochastic process. In that case, playing vertex $v_3$ would provide such information. This is a key property which \cite{rangi2018online} seem to have missed in their lower bound proof. We discuss this mistake carefully in Appendix~\ref{app:lower_bound_mistake} and provide a lower bound matching what the authors claim in the \emph{uninformed} setting in Appendix~\ref{app:lower_bound_seq}. Our discussion suggests that we should set the price for revealing information about multiple actions according to the switching cost and this is why the losses of all vertices outside of the independent set are equal to one. We note that the losses of the best action are much smaller than one sufficiently often, so that enough instantaneous regret is incurred when pulling action $v_3$. Our main result follows and its proof can be found in Appendix~\ref{app:lower_bound}.
 
\begin{theorem}
\label{thm:star_graph_lower_bound}
For any non-complete feedback graph $G$, there exists a sequence of losses on which any algorithm $\cA$ in the informed setting incurs expected regret at least
\begin{align*}
  R_T(\cA) \geq \Omega\left(\frac{T^{\frac 2 3}}{\log{T}}\right).
\end{align*}
\end{theorem}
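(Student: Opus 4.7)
The plan is to adapt the lower bound construction of \cite{dekel2014bandits} to the feedback graph setting by exploiting the missing edge in $G$. Since $G$ is non-complete, fix vertices $v_1, v_2 \in V$ with $(v_1,v_2)\notin E$. Draw a hidden bit $\chi\in\{1,2\}$ uniformly at random and let $\{W(t)\}_{t=1}^T$ be the multi-scale random walk of \cite{dekel2014bandits} with depth of order $\log T$, truncated so that $W(t)\in[0,1]$. Define losses $\ell_t(v_\chi)=W(t)$, $\ell_t(v_{3-\chi})=W(t)+\epsilon$, and $\ell_t(u)=1$ for every $u\notin\{v_1,v_2\}$, where $\epsilon$ is a free parameter to be chosen at the end.

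The first step is to classify the algorithm's rounds into three types: plays of $v_\chi$ (no instantaneous regret against the benchmark), plays of $v_{3-\chi}$ (regret exactly $\epsilon$), and plays of any $u\notin\{v_1,v_2\}$ (regret bounded below by a constant, since $\mathbb{E}[W(t)]$ is bounded away from $1$). Letting $N_B$ and $N_C$ denote the expected number of type-B and type-C rounds and $M$ the expected number of switches, the total regret is at least $\epsilon N_B + \Omega(N_C) + M$.

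The key technical step is an information-theoretic bound showing that the algorithm cannot identify $\chi$ cheaply. The depth-$O(\log T)$ structure of the multi-scale walk ensures that the likelihood ratio between $\chi=1$ and $\chi=2$ is driven only by observations that actually compare $\ell_t(v_1)$ with $\ell_t(v_2)$. Since $v_1,v_2$ are non-adjacent, such a comparison can arise in only two ways: through a round that switches between $v_1$ and $v_2$ (paying one unit of switching cost), or through a round in which the algorithm plays a common neighbor $u\in N(v_1)\cap N(v_2)$ (paying type-C regret). A standard change-of-measure argument, following the cut-and-paste style of \cite{dekel2014bandits}, then yields $\mathrm{KL}(\mathbb{P}_1\|\mathbb{P}_2)\le O(\epsilon^2 \log T\cdot (M+N_C))$, and hence by Pinsker $\|\mathbb{P}_1-\mathbb{P}_2\|_{TV}\le O(\epsilon\sqrt{(M+N_C)\log T})$.

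Finally, I combine the pieces. Under each hypothesis the algorithm must play the suboptimal arm in $\Omega(T)$ rounds unless the two distributions are nearly indistinguishable, so either $N_B\ge \Omega(T)$, yielding regret $\Omega(\epsilon T)$, or $\epsilon\sqrt{(M+N_C)\log T}\ge \Omega(1)$, which forces $M+N_C\ge \Omega\bigl(1/(\epsilon^2\log T)\bigr)$. Setting $\epsilon=1/(T^{1/3}\log T)$ balances both contributions at $\tilde\Omega(T^{2/3}/\log T)$. The main obstacle will be the careful bookkeeping in the change-of-measure step: one must verify that one-sided observations (e.g., playing $v_1$ alone, or switching to a neighbor of $v_1$ that is not a neighbor of $v_2$) contribute negligibly to the KL divergence, so that only $M$ and $N_C$ enter the bound. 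This is exactly where the depth-versus-width tradeoff of the multi-scale random walk must be imported carefully from \cite{dekel2014bandits}, and it accounts for the $\log T$ factor in the denominator of the final bound.
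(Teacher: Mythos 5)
Your proposal follows essentially the same route as the paper's proof: the same loss construction (multi-scale random walk on the two non-adjacent vertices, all other losses pinned to one so that common neighbors pay constant instantaneous regret), the same information-theoretic bound charging the KL divergence only to switches between $v_1,v_2$ and to plays of common neighbors via the width of the walk, and the same balancing with $\epsilon = \Theta(1/(T^{1/3}\log{T}))$. The only cosmetic difference is that the paper routes the change of measure through a reference distribution $\cQ_0$ with no best action (and handles clipping via Lemma 4 of \cite{dekel2014bandits}) rather than comparing the two hypotheses directly, which does not change the argument.
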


\subsection{Lower Bound for Disjoint Union of Star Graphs}

How do we construct a lower bound for a disjoint union of star
graphs? First, note that if two adjacent vertices are allowed
to admit losses set according to the stochastic process and one of
them is the best vertex, then we could distinguish it in time
$O(\sqrt{T})$ by repeatedly playing the other vertex. This suggests
that losses set according to the stochastic process should be reserved
for vertices in an independent set. Second, it is important to keep
track of the amount of information revealed by common neighbors.

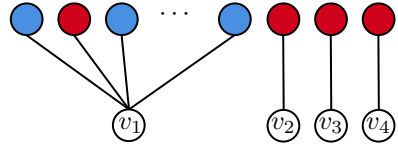
\begin{wrapfigure}{r}{0.42\textwidth}
\centering
\tikzset{every picture/.style={line width=0.75pt}} %set default line width to 0.75pt        
\begin{tikzpicture}[x=0.75pt,y=0.75pt,yscale=-1,xscale=1]
%uncomment if require: \path (0,300); %set diagram left start at 0, and has height of 300
%Shape: Circle [id:dp8962628156097437] 
\draw  [fill={rgb, 255:red, 74; green, 144; blue, 226 }  ,fill opacity=1 ] (81.84,131.07) .. controls (81.77,135.49) and (78.14,139.01) .. (73.72,138.94) .. controls (69.3,138.88) and (65.78,135.24) .. (65.84,130.82) .. controls (65.91,126.41) and (69.55,122.88) .. (73.97,122.95) .. controls (78.38,123.01) and (81.91,126.65) .. (81.84,131.07) -- cycle ;
%Shape: Circle [id:dp056959648864599255] 
\draw  [fill={rgb, 255:red, 208; green, 2; blue, 27 }  ,fill opacity=1 ] (105.84,131.07) .. controls (105.77,135.49) and (102.14,139.01) .. (97.72,138.94) .. controls (93.3,138.88) and (89.78,135.24) .. (89.84,130.82) .. controls (89.91,126.41) and (93.55,122.88) .. (97.97,122.95) .. controls (102.38,123.01) and (105.91,126.65) .. (105.84,131.07) -- cycle ;
%Shape: Circle [id:dp7723625502597654] 
\draw  [fill={rgb, 255:red, 74; green, 144; blue, 226 }  ,fill opacity=1 ] (129.84,131.07) .. controls (129.77,135.49) and (126.14,139.01) .. (121.72,138.94) .. controls (117.3,138.88) and (113.78,135.24) .. (113.84,130.82) .. controls (113.91,126.41) and (117.55,122.88) .. (121.97,122.95) .. controls (126.38,123.01) and (129.91,126.65) .. (129.84,131.07) -- cycle ;
%Shape: Circle [id:dp0029179549373297142] 
\draw  [fill={rgb, 255:red, 208; green, 2; blue, 27 }  ,fill opacity=1 ] (211.34,131.07) .. controls (211.27,135.49) and (207.64,139.01) .. (203.22,138.94) .. controls (198.8,138.88) and (195.28,135.24) .. (195.34,130.82) .. controls (195.41,126.41) and (199.05,122.88) .. (203.47,122.95) .. controls (207.88,123.01) and (211.41,126.65) .. (211.34,131.07) -- cycle ;
%Shape: Circle [id:dp9172319228241426] 
\draw  [fill={rgb, 255:red, 208; green, 2; blue, 27 }  ,fill opacity=1 ] (235.34,131.07) .. controls (235.27,135.49) and (231.64,139.01) .. (227.22,138.94) .. controls (222.8,138.88) and (219.28,135.24) .. (219.34,130.82) .. controls (219.41,126.41) and (223.05,122.88) .. (227.47,122.95) .. controls (231.88,123.01) and (235.41,126.65) .. (235.34,131.07) -- cycle ;
%Shape: Circle [id:dp6437038024204833] 
\draw  [fill={rgb, 255:red, 208; green, 2; blue, 27 }  ,fill opacity=1 ] (259.34,131.07) .. controls (259.27,135.49) and (255.64,139.01) .. (251.22,138.94) .. controls (246.8,138.88) and (243.28,135.24) .. (243.34,130.82) .. controls (243.41,126.41) and (247.05,122.88) .. (251.47,122.95) .. controls (255.88,123.01) and (259.41,126.65) .. (259.34,131.07) -- cycle ;
%Shape: Circle [id:dp1839707411548016] 
\draw  [fill={rgb, 255:red, 74; green, 144; blue, 226 }  ,fill opacity=1 ] (186.84,131.07) .. controls (186.77,135.49) and (183.14,139.01) .. (178.72,138.94) .. controls (174.3,138.88) and (170.78,135.24) .. (170.84,130.82) .. controls (170.91,126.41) and (174.55,122.88) .. (178.97,122.95) .. controls (183.38,123.01) and (186.91,126.65) .. (186.84,131.07) -- cycle ;
%Shape: Circle [id:dp5753896032928307] 
\draw   (133.34,184.57) .. controls (133.27,188.99) and (129.64,192.51) .. (125.22,192.44) .. controls (120.8,192.38) and (117.28,188.74) .. (117.34,184.32) .. controls (117.41,179.91) and (121.05,176.38) .. (125.47,176.45) .. controls (129.88,176.51) and (133.41,180.15) .. (133.34,184.57) -- cycle ;
%Shape: Circle [id:dp6192334095939023] 
\draw   (211.34,184.81) .. controls (211.27,189.23) and (207.64,192.76) .. (203.22,192.69) .. controls (198.8,192.62) and (195.27,188.99) .. (195.34,184.57) .. controls (195.41,180.15) and (199.05,176.62) .. (203.46,176.69) .. controls (207.88,176.76) and (211.41,180.4) .. (211.34,184.81) -- cycle ;
%Shape: Circle [id:dp7448258270474502] 
\draw   (235.34,184.81) .. controls (235.27,189.23) and (231.64,192.76) .. (227.22,192.69) .. controls (222.8,192.62) and (219.27,188.99) .. (219.34,184.57) .. controls (219.41,180.15) and (223.05,176.62) .. (227.46,176.69) .. controls (231.88,176.76) and (235.41,180.4) .. (235.34,184.81) -- cycle ;
%Shape: Circle [id:dp5505565263164509] 
\draw   (259.34,184.81) .. controls (259.27,189.23) and (255.64,192.76) .. (251.22,192.69) .. controls (246.8,192.62) and (243.27,188.99) .. (243.34,184.57) .. controls (243.41,180.15) and (247.05,176.62) .. (251.46,176.69) .. controls (255.88,176.76) and (259.41,180.4) .. (259.34,184.81) -- cycle ;
%Straight Lines [id:da40575816620448335] 
\draw    (73.72,138.94) -- (125.47,176.45) ;

%Straight Lines [id:da49972389776440673] 
\draw    (97.72,138.94) -- (125.47,176.45) ;

%Straight Lines [id:da6369712412206043] 
\draw    (121.72,138.94) -- (125.47,176.45) ;

%Straight Lines [id:da4114722417792829] 
\draw    (178.72,138.94) -- (125.47,176.45) ;

%Straight Lines [id:da2170494664793] 
\draw    (203.22,138.94) -- (203.46,176.69) ;

%Straight Lines [id:da3174468505394942] 
\draw    (227.22,138.94) -- (227.46,176.69) ;

%Straight Lines [id:da3554617267568565] 
\draw    (251.22,138.94) -- (251.46,176.69) ;

% Text Node
\draw (149.5,128.5) node   {$\cdots $};
% Text Node
\draw (126.34,184.45) node [scale=1]  {$v_{1}$};
% Text Node
\draw (203.34,184.45) node [scale=1]  {$v_{2}$};
% Text Node
\draw (227.34,184.45) node [scale=1]  {$v_{3}$};
% Text Node
\draw (251.34,184.45) node [scale=1]  {$v_{4}$};
\end{tikzpicture}
    \caption{Disjoint union of star graphs.}
    \vskip -.15 in
    \label{fig:bad_star_graph}
\end{wrapfigure}

Consider the feedback graph of Figure~\ref{fig:bad_star_graph}. This
disjoint union of star graphs admits a domination number equal to four
and its minimum dominating set is denoted by
$\set{v_1, v_2, v_3, v_4}$. Probably the most natural way to set up
the losses of the vertices is to set the losses of the maximum
independent set, which consists of the colored vertices, according to
the construction of \cite{dekel2014bandits} and the losses of the
minimum dominating set equal to one. Let $v_1$ be the vertex with
highest degree. Any time the best action is sampled to be not adjacent
to $v_1$, switching between that action and $v_1$ reveals $deg(v_1)$
information about it. On the other hand, no matter how we sample the
best action as a neighbor of $v_1$, it is then enough to play $v_1$ to
gain enough information about it. If $I$ denotes the maximum
independent set, the above reasoning shows that only
$O(T^{\frac 2 3}|I|/deg(v_1))$ rounds of switching are needed to
distinguish the best action. Since $deg(v_1)$ can be made arbitrarily
large and thus $|I|/deg(v_1)$ gets arbitrary close to one, we see that
the regret lower bound becomes independent of the domination number
and equal to $\tilde\Omega(T^{\frac 2 3})$.

We now present a construction for the disjoint union of star graphs
which guarantees a lower bound of the
$\tilde \Omega(\dnum(G)^{\frac 1 3} T^{\frac 2 3})$. The idea
behind our construction is to choose an independent set such that
none of its members have a common neighbor, thereby avoiding the problem
described above. We note that such an independent set cannot have
size greater than $\dnum(G)$. Let $R$ be the set of revealing vertices
for the star graphs. We denote by $V_i$ the set of vertices associated
with the star graph with revealing vertex $v_i$. To construct the
losses, we first sample an \emph{active} vertex for each star graph
from its leaves. The active vertices are represented in red in
Figure~\ref{fig:bad_star_graph}. This forms an independent set $I$
indexed by $R$. Next, we follow the construction of
\cite{dekel2014bandits} for the vertices in $I$, by first sampling a
best vertex uniformly at random from $I$ and then setting the losses
in $I$ according to the multi-scale random walk. All other losses are
set to one. For any star graph consisting of a single vertex, we treat
the vertex as a non-revealing vertex. This construction guarantees the
following.

\begin{theorem}
\label{thm:union_star_graph_lower_bound}
The expected regret of any algorithm $\cA$ on a disjoint union of star
graphs is lower bounded as follows:
\begin{align*}
  R_T(\cA) \geq \Omega 
\left( \frac{\dnum(G)^{\frac 1 3} T^{\frac 2 3}}{\log{T}}\right).
\end{align*}
\end{theorem}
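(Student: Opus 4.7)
The plan is to adapt the multi-scale random walk lower bound of Dekel, Ding, Koren, and Peres to the structure of a disjoint union of star graphs, with the effective number of ``arms'' being $|I| = \dnum(G)$. First, I would formalize the construction sketched in the preceding paragraphs: let $v_1,\ldots,v_{\dnum(G)}$ be the centers of the star graphs and let $V_i$ denote the vertex set of the $i$-th star. For each $i$, sample an active vertex $w_i$ uniformly from the leaves of $V_i$ (setting $w_i = v_i$ if the star consists of a single vertex, in which case we treat that vertex as non-revealing). Sample a target index $i^{\star} \in [\dnum(G)]$ uniformly at random, and set the losses on $I = \{w_1,\ldots,w_{\dnum(G)}\}$ according to the multi-scale random walk of Dekel et al.\ with gap parameter $\epsilon$ of order $(\dnum(G)/T)^{1/3}$, while setting $\ell_t(v) \equiv 1$ for every vertex $v \notin I$.

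The critical structural observation is that, because the active vertices were chosen from pairwise disjoint star graphs, no two members of $I$ share a neighbor, so the only vertices whose pull observes $\ell_t(w_i)$ are $w_i$ itself and its unique revealing vertex; both lie in $V_i$, which is disjoint from $V_j$ for $j \neq i$. Consequently, any transition between gathering information about $w_i$ and $w_j$ must traverse the boundary between $V_i$ and $V_j$ and therefore pays a switching cost. Moreover, since every vertex outside $I$ has loss $1$ while the average loss of $w_{i^{\star}}$ is close to $1/2$, every pull of a non-active vertex contributes $\Omega(1)$ to the instantaneous pseudo-regret. In particular, the learner cannot trade switches for ``free'' observations through the revealing vertices.

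With this reduction in hand, I would follow the information-theoretic argument of Dekel et al.: conditioning on the random walk, the KL divergence between the transcripts induced by different values of $i^{\star}$ is bounded in terms of the expected number of observations of $w_{i^{\star}}$, and the width property of the multi-scale walk turns this count into an expected number of switches (plus pulls of revealing vertices, each of which is already charged $\Omega(1)$ regret). A Pinsker-style argument then forces any algorithm that identifies $i^{\star}$ with constant probability to either accumulate $\Omega(\dnum(G)^{1/3} T^{2/3} / \log T)$ switches or to incur at least that much regret from pulling suboptimal vertices. The $1/\log T$ factor is inherited from the $O(\log T)$ depth of the walk, exactly as in the original bandit construction.

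The main obstacle will be the bookkeeping of the two modes of observation of $w_i$, namely direct plays versus plays of the revealing vertex $v_i$, and showing that together they obey the same information-vs-switching inequality that governs the standard $k$-armed bandit with switching costs. The key leverage is that assigning loss $1$ to every vertex outside $I$ forces the revealing vertices to be strictly worse than every active vertex; they can therefore substitute for a direct play only at the price of $\Omega(1)$ regret per round and can never serve as a cheap shared probe of multiple arms. This is precisely the property that the no-common-neighbor condition on $I$ is engineered to enforce, and it is what prevents the argument that collapsed the lower bound to $\tilde\Omega(T^{2/3})$ in the preceding discussion and allows the $\dnum(G)^{1/3}$ factor to survive.
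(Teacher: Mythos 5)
Your proposal is correct in substance, but it takes the longer of the two routes available here, and not the one the paper designates as the proof. The paper's actual proof of Theorem~\ref{thm:union_star_graph_lower_bound} is a three-line black-box reduction: given any strategy on the union of star graphs, replace every play of a revealing vertex $v_i$ by a play of its \emph{unique} active neighbor $w_i \in I$. Because every vertex outside $I$ has loss $1$ (the maximum) and because this substitution never increases the number of switches while preserving exactly the information observed (playing $v_i$ reveals only $w_i$ among the stochastic losses), the modified strategy is a valid $|I|$-armed bandit-with-switching-costs strategy whose regret is no larger than the original; the bound then follows directly from the $\tilde\Omega(k^{1/3}T^{2/3})$ lower bound of \cite{dekel2014bandits} with $k = |I| = \dnum(G)$. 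What you propose instead is to redo the information-theoretic machinery from scratch -- bounding the KL divergence of transcripts via the width of the multi-scale walk, converting observation counts into switch counts, and charging $\Omega(1)$ per pull of a revealing vertex. This is essentially the ``counting argument'' the paper itself develops as a secondary proof in the same appendix (Lemmas~\ref{lem:tv_upper_star_graphs} and~\ref{lem:rprime_lower_bound_gen2}), and your bookkeeping of the two observation modes (direct plays of $w_i$ versus plays of $v_i$, with the latter never serving as a shared probe thanks to the no-common-neighbor property) matches theirs. The trade-off is clear: the reduction is shorter and cleaner but exploits the very special structure of disjoint stars (uniqueness of the active neighbor), whereas your information-theoretic route is more work but is the one the authors expect to generalize to broader graph families. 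The only points you leave implicit that need care in a full write-up are the depth-based argument that the clipped and unclipped losses agree often enough for the $\Omega(1)$ per-round charge on revealing vertices to hold, and the averaging over the random draw of active vertices; both are standard pieces of the \cite{dekel2014bandits} machinery you invoke.
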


The proof of this theorem can be found in
Appendix~\ref{app:lower_bound_union}. This result can be
viewed as a consequence of that of \cite{dekel2014bandits} but it can
also be proven in alternative fashion. The general idea is to count
the amount of information gained for the randomly sampled best
vertex. For example, a strategy that switches between two revealing
vertices $v_i$ and $v_j$ will gain information proportional to
$deg(v_i)deg(v_j)$. The lower bound follows from carefully counting
the information gain of switching between revealing vertices. This
counting argument can be generalized beyond the disjoint union of star
graphs, by considering an appropriate pair of minimal
dominating/maximal independent sets. We give an argument for the
disjoint union of star graphs in Appendix~\ref{app:lower_bound_union}
and leave a detailed argument for general graphs to future work.

\section{Conclusion}

We presented an extensive analysis of online learning with feedback
graphs and switching costs in the adversarial setting, a scenario
relevant to several applications in practice.  We gave a new algorithm
whose regret guarantee only depends on the domination number.  We also
presented a matching lower bound for a family of graphs that includes
disjoint unions of star graphs.  The technical tools introduced in our
proofs are likely to help derive a lower bound for all graph
families. We further derived an algorithm with more favorable policy
regret guarantees in the presence of feedback graphs.

\section*{Acknowledgements}

\noindent This research was partly supported 
by NSF BIGDATA grants IIS-1546482 and IIS-1838139, and by
NSF CCF-1535987, NSF IIS-1618662, and a Google Research Award.

\bibliographystyle{plainnat}
\bibliography{mybib}
\clearpage
\appendix
\section{Related Work}
We now discuss the work involving online learning with feedback graphs carefully. Most of the work we discuss deals with a feedback graph sequence $(G_t)_{t=1}^T$. The work of~\cite{mannor2011bandits} is the first to study the online learning problem when feedback graphs model which losses the player gets to observe after choosing an action. Their work proposes two algorithms, the ExpBan, which has regret $O(\sqrt{\sum_{t=1}^T\bar\chi(G_t)})$, where $\bar\chi(G)$ is the clique partition number, and the ELP algorithm which has regret $O(\sqrt{\sum_{t=1}^T\alpha(G_t)})$. They also show a regret lower bound when $G_t = G$ for all $G$ of the order $\Omega(\sqrt{\alpha(G)T})$. The work of~\cite{alon2013bandits} improves on that~\cite{mannor2011bandits} in two significant ways. First the authors consider a setting in which the feedback graphs are directed and can be observed only after taking an action. Secondly the provided algorithms even for the informed setting are more efficient than the ones in~\cite{mannor2011bandits}. Their algorithm Exp3-SET has regret $\tilde O(\sqrt{\sum_{t=1}^T \textbf{mas}(G_t)})$ for the uninformed setting with directed feedback graphs. Here $\textbf{mas}(G_t)$ is the size of the maximum acyclic subgraph of $G_t$. When considering the undirected setting $\textbf{mas}(G_t)$ can be replaced by $\alpha(G_t)$. In the informed setting~\cite{alon2013bandits} propose the algorithm Exp3-DOM, which requires approximating or computing a minimum dominating set of $G_t$. \cite{kocak2014efficient} avoid such tedious computation with their algorithm Exp3-IX. The regret achieved by their algorithm is of the order $\tilde O(\sqrt{\sum_{t=1}^T \alpha(G_t)})$ even in the uninformed setting. The paper also extends the implicit exploration trick used by Exp3-IX to Follow the Perturbed Leader and solves the combinatorial bandit problem with side observations, where at each round the player is permitted to select $m$ out of the $|V|$ available actions. The achieved regret is of the order $\tilde O(m^{2/3}\sqrt{\sum_{t=1}^T \alpha(G_t)})$. In~\cite{alon2015online} the authors consider a setting where the feedback graph system is fixed i.e. $G_t = G$ for all $t\in [T]$, however, the graph need not have self loops. The authors distinguish between three settings. First a setting in which each vertex either has a self loop or is revealed by all other vertices, called the strongly observable setting. The second setting assumes that every vertex is revealed by some other vertex but there exists at least one vertex which is not strongly observable. This setting is called the weakly observable setting. The third setting is that of some vertex not being revealed by any other vertex. This is called the not observable setting. \cite{alon2015online} show that the regret bounds are respectively $\tilde\Theta(\sqrt{\alpha(G)T})$ in the strongly observable setting, $\tilde \Theta(\dnum(G)^{1/3}T^{2/3})$ in the weakly observable setting and $\Theta(T)$ in the not observable setting. The work of~\cite{cohen2016online} studies a setting where the feedback graph is never fully revealed to the player. They show that if the feedback graph and the losses are generated by the adversary a lower bound for the regret of any strategy is $\Omega(\sqrt{|V|T})$, which matches the lower bound of the bandit setting. In contrast it is possible to recover a $\tilde \Theta(\sqrt{\alpha(G)T})$ regret bound if the losses are stochastic.

We also note that online learning with feedback graphs has also been studied in the setting of stochastic losses by numerous works~\citep{CaronKvetonLelargeBhagat2012,BES14,WGS15,WuGyorgySzepesvari2015,TDD17,LBS18}, however, we chose not to discuss these works here as our focus is on the adversarial case.

\section{Motivating Examples}
\label{sec:more_examples}
In this section we provide more motivating examples for studying the problem of online learning with feedback graphs and switching costs.

\paragraph{Stock investment:} Consider a stock market
investor who is charged a fixed commission when selling one stock or
buying another (switching cost), but who may be exempt from additional
fees when keeping their position in a stock. Similarly, an investor can
sign a contract with an expert giving market advice, which, if broken,
entails a termination fee. We assume that each expert works for a
parent company and each parent company is willing to share the
predictions made by its experts, together with the incurred
losses. 

% \paragraph{Credit card products:} Consider a commercial bank that issues various credit card products, many of which are similar, e.g., different branded cards with comparable fees and interest rates. At each round, the bank offers a specific product to a particular sub-population (e.g., customers at a store). The payoff observed for this action also reveals feedback for related cards and similar sub-populations. At the same time, offering a different product to a group incurs a switching cost in terms of designing a new marketing campaign.

\paragraph{Store location:} In certain states grocery stores are allowed to sell liquor, however, if a store brand has more than seven stores in a city, only seven of the stores are allowed to sell liquor. Switching which store is selling liquor comes at a cost since a new liquor licence is required. If two stores in different cities have similar customer demographic, they reveal information about each other's sales and can be predictive of liquor sales revenue.

\paragraph{Transportation logistics:} Logistic companies within the European Union are tasked with efficiently moving cargo within and between countries. Cargo is usually moved through ground transportation in the form of freight trucks. Since most logistic companies do not own their own trucks or drivers they have to chose among a set of truck companies serving different routes. A Truck companies like loyal customers and prefer working with logistic companies which regularly use the truck company's service. If a logistic company, however, decides to switch between several truck companies among serving the same route along a short period of time, the truck companies will raise their prices or altogether decide to not take more orders from the logistic company. Additional information is in the following form. A truck company offers a service between Paris and Berlin. The same truck company offers a service between Barcelona and Amsterdam. Since the routes are similar, the logistic company expects the utility of using this truck company for one of the routes to be close to the utility of using it for the other route.

\paragraph{Moving between houses:} Students usually rent houses throughout their undergraduate and graduate studies. Moving between houses is a costly and time consuming process. However, students also get additional information when choosing a property to move to based on their current landlord/managing company. For example if a landlord is good but the property has some problems, the students might want to move to a different property managed by the same landlord.

\paragraph{Oil field drilling:} A company in the oil business wants to decide where to setup drilling sites. They only have partial information about areas and can assume that two areas within some range have equal likelihood to have the same amount of oil resources. Switching between possible drilling sites is costly as it requires developing infrastructure. Further, switching between already build oil rigs and refineries requires relocating personal which is costly.

\section{Lower Bound of~\cite{rangi2018online}}
\label{app:lower_bound_mistake}
While going through the proof of Theorem 1 in~\cite{rangi2018online}, 
we came across an important technical mistake. In page 2 of the supplementary material, 
in the paragraph after Equation 8, the authors state that, at a single time instance,
the loss of only one single action can be observed from the independent set in their construction. 
This is not correct since a player's strategy can play an action that is not in the independent set 
but is adjacent to two or more vertices in the independent set. 

The problem with this statement becomes 
apparent when one considers a fixed feedback graph system, i.e., $G_t = G, \forall t \in [T]$, 
where $G$ is a star graph. In that case, the construction of the losses by \cite{rangi2018online}
amounts to sampling a best action from the leaves of $G$, setting its loss to be $\epsilon_1$ smaller 
than the loss of all other actions in the leaves of $G$, and setting the revealing action to be $\epsilon_2$
larger than the losses in the leaves of $G$. The losses of the remaining actions are set according to the stochastic 
process of \cite{dekel2014bandits}. With these choice of losses and $\epsilon_1$ and $\epsilon_2$ set according 
to what the authors suggest, a very simple strategy is information-theoretically optimal: the player only needs 
to play the revealing action $T^{2/3}$ times to distinguish which of the leaves of $G$ contains the best action. 
This strategy would actually incur expected regret of the order $\tilde \Theta(\sqrt{T})$.

Let $\alpha(G_{1:T})$ denote the largest cardinality among all intersections of independent sets of the 
sequence $(G_t)_{t=1}^T$. A lower bound of $\tilde\Omega(\alpha(G_{1:T})^{1/3}T^{2/3})$ is still possible 
under additional assumptions about how the feedback graph system is generated in the \emph{uninformed} setting. 
In particular, we show that if we allow the feedback graphs to be chosen by the adversary, there still exists 
a sequence of feedback graphs for which the lower bound is $\tilde\Omega(\alpha(G_{1:T})^{1/3}T^{2/3})$, while 
for each $G_t$, we have $\dnum(G_t) = 1$. This construction is presented in Section~\ref{app:lower_bound_seq} 
with the main result stated in Theorem~\ref{thm:lower_bound_seq}.

\section{Proofs from Section~\ref{sec:ada_mini_batch}}
\label{app:section_3_proofs}
\subsection{Approximation to Minimum Dominating Set}
\begin{algorithm}[t]
\caption{Greedy algorithm for minimum dominating set}
\label{alg:greedy_dom}
\begin{algorithmic}[1]
\REQUIRE{An undirected graph $G(V,E)$}
\ENSURE{A dominating set $S$}
\STATE $R = \emptyset$
\IF{$V == \emptyset$}
\STATE Return $S$
\ELSE
\STATE Find $v \in V$ s.t. $deg(v)$ is maximized
\STATE $R = S\bigcup \{v\}$
\STATE $V = V\setminus \{\{v\}\bigcup N(v)\}$ and update $G$ to be the induced graph on the new set of vertices $V$.
\ENDIF
\end{algorithmic}
\end{algorithm}
The following notes~\url{http://ac.informatik.uni-freiburg.de/teaching/ss_12/netalg/lectures/chapter7.pdf} provide us with a proof that the greedy Algorithm~\ref{alg:greedy_dom} returns a dominating set $R$ which is $2+\log{\Delta}$ approximation to the smallest size minimal dominating set, where $\Delta$ is the maximum degree if $G$. It is possible to implement the algorithm so that it has total runtime of the order $O((|V|+|E|)\log{V})$ (e.g.~\url{http://homepage.cs.uiowa.edu/~sriram/3330/spring17/greedyMDS.pdf}). We note that this is essentially the Greedy Set Cover algorithm of~\cite{chvatal1979greedy} and that it is possible to extend to directed graphs, by replacing the degree of $v$ by the out-degree of $v$ and the neighbours of $v$ by just the vertices which have in-going edge from $v$.

\subsubsection{Adaptive Mini-batching for Star Graphs}
%{\color{red}{We begin by considering a simple form of the adaptive mini-batches, where we only make the size of the mini-batch dependent on the distribution over actions at time $t$, $\p_t$ but not on the sampled action $a_t\sim \p_t$. Set a maximum size of the mini-batch $\tau$. At time $t$, if not in a mini-batch epoch, set the size of the mini-batch for the next epoch $\tau_t = \tau p_t(r)$, where $r$ is the revealing action. Next sample $a_t \sim p_t$ and play it for $\tau_t$ actions, while accumulating an average loss $\hat\ell_{t}(i) = \frac{1}{\tau_t} \sum_{j=t}^{t+\tau_t-1} \mathbb{I}(a_t = r)\frac{\ell_{j}(i)}{p_t(r)}$. Update $q_{t+1}(i) = \frac{q_{t}(i)\exp{-\eta_t\hat\ell_t(i)}}{\sum_{j\in V} q_t(j)\exp{-\eta_t\hat\ell_t(j)}}$. }}
The proof of Theorem~\ref{thm:star_graph_main} begins by considering a slightly modified version of Algorithm~\ref{alg:star_graph_main}. In particular we remove lines 5 through 7 which disallow switching between non-revealing actions. This intuitively should not change the policy which Algorithm~\ref{alg:star_graph_main} produces as such switches do not provide any new information to the algorithm. For convenience of the reader we give the pseudo-code of the modified algorithm in Algorithm~\ref{alg:star_graph2}, where the lines in red are commented out and are not part of the algorithm.

\begin{algorithm}
\caption{Algorithm for star graphs (modified)}
\label{alg:star_graph2}
\begin{algorithmic}[1]
\REQUIRE{Star graph $G(V,E)$, learning rate sequence $(\eta_t)$, exploration rate $\beta \in [0,1]$, maximum mini-batch $\tau$.}
\ENSURE{Action sequence $(a_t)_{t}$.}
\STATE $q_1 \equiv Unif(V)$.
\WHILE{$\sum_{t}\tau_t \leq T$}
\STATE $p_t = (1-\beta) q_t + \beta \delta(r)$.
\STATE Draw $a_t \sim p_t$, set $\tau_t = p_t(r)\tau$.
{\color{red}{\IF{$a_{t-1}\neq r$ and $a_t \neq r$}
\STATE Set $a_t = a_{t-1}$
\ENDIF}}
\STATE Play $a_t$ for the next $\floor{\tau_t}$ iterations.
\STATE Set $$\hat\ell_{t}(i) = \sum_{j=t}^{t+\floor{\tau_t}-1} \mathbb{I}(a_t = r)\frac{\ell_{j}(i)}{p_t(r)}.$$
\STATE For all $i\in V$, $q_{t+1}(i) = \frac{q_t(i)\exp{-\eta_t \hat\ell_t(i)}}{\sum_{j\in V}q_t(j)\exp{-\eta_t \hat\ell_t(j)}}$.
\STATE $t = t+1$.
\ENDWHILE
\end{algorithmic}
\end{algorithm}
Algorithm~\ref{alg:star_graph2} comes with the following regret guarantee.
\begin{theorem}
\label{thm:star_graph_main_app}
Suppose that for all $t\leq T$ and all $i\in V$ it holds that $\mathbb{E}[\ell_t(i)^2] \leq \rho$ and $\beta \geq \frac{1}{\tau}$. Then Algorithm~\ref{alg:star_graph2} produces an action sequence $(a_t)_{t=1}^T$ satisfying:
\begin{align*}
    \mathbb{E}\left[\sum_{t=1}^T \ell_t(a_t) - \ell_t(a)\right] \leq \frac{\log{|V|}}{\eta} + T\eta\tau\rho + T\beta,
\end{align*}
for any $a\in V$.
\end{theorem}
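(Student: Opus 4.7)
The plan is to view Algorithm~\ref{alg:star_graph2} as Hedge/EXP3 run on the \emph{batched} losses $L_t = \sum_{j=t}^{t+\floor{\tau_t}-1}\ell_j$, with the importance-weighted estimator $\hat\ell_t(i) = \mathbb{I}\{a_t=r\}\,L_t(i)/p_t(r)$. Let $\mathcal{F}_{t-1}$ collect the history of all previous batches, so that $q_t$, $p_t$, and $\tau_t = p_t(r)\tau$ are $\mathcal{F}_{t-1}$-measurable. The condition $\beta \ge 1/\tau$ forces $\tau_t \ge 1$, so each batch plays at least one round and $\hat\ell_t$ is well-defined; since $r$ is adjacent to every vertex of the star, $\mathbb{E}[\hat\ell_t(i)\mid\mathcal{F}_{t-1}] = L_t(i)$ for every $i \in V$.

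The first step I would carry out is the standard Hedge inequality for nonnegative losses, applied to $(q_t,\hat\ell_t)$:
\begin{equation*}
\sum_t \langle q_t, \hat\ell_t\rangle - \sum_t \hat\ell_t(a) \;\le\; \frac{\log{|V|}}{\eta} + \eta \sum_t \sum_i q_t(i)\,\hat\ell_t(i)^2.
\end{equation*}
Since $\hat\ell_t \ge 0$, this uses only $e^{-x}\le 1-x+x^2$ for $x \ge 0$ and needs no boundedness hypothesis. Taking expectations and invoking the unbiasedness above turns the left side into $\mathbb{E}\bigl[\sum_t\langle q_t,L_t\rangle - L_t(a)\bigr]$. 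For the variance term I would condition on $\mathcal{F}_{t-1}$ to get $\mathbb{E}[\hat\ell_t(i)^2\mid\mathcal{F}_{t-1}] = L_t(i)^2/p_t(r)$, then apply Cauchy--Schwarz to obtain $L_t(i)^2 \le \floor{\tau_t}\sum_{j=t}^{t+\floor{\tau_t}-1}\ell_j(i)^2$, whose expectation is at most $\tau_t^2\rho$ by the hypothesis $\mathbb{E}[\ell_j(i)^2]\le\rho$. Summing over $i$ weighted by $q_t(i)$ and substituting $\tau_t = p_t(r)\tau$ gives the per-batch bound $\mathbb{E}\bigl[\sum_i q_t(i)\hat\ell_t(i)^2\bigr] \le p_t(r)\tau^2\rho$, and the stopping rule $\sum_t\floor{\tau_t}\le T$ (together with $\floor{\tau_t}\ge \tau_t-1$ and $\tau_t\ge 1$) yields $\sum_t p_t(r)\tau \le T + O(\tau)$, so the total variance contribution is $\eta\tau\rho T$ up to lower-order terms.

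The final step passes from the $q_t$-weighted bound to the actual expected play $\mathbb{E}[L_t(a_t)\mid\mathcal{F}_{t-1}] = \langle p_t,L_t\rangle$. Decomposing $\langle p_t,L_t\rangle = (1-\beta)\langle q_t,L_t\rangle + \beta L_t(r)$ gives
\begin{equation*}
\mathbb{E}\Bigl[\sum_t\!\langle p_t,L_t\rangle - L_t(a)\Bigr] \;\le\; \mathbb{E}\Bigl[\sum_t\!\langle q_t,L_t\rangle - L_t(a)\Bigr] + \beta\,\mathbb{E}\Bigl[\sum_t\bigl(L_t(r)-L_t(a)\bigr)\Bigr],
\end{equation*}
where the exploration residual is at most $\beta T$ under $\ell_j \in [0,1]$. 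Telescoping $\sum_t L_t(a_t) = \sum_{j=1}^{T'}\ell_j(a_j)$ and $\sum_t L_t(a) = \sum_{j=1}^{T'}\ell_j(a)$ with $T' \le T$ (the at-most-$\tau$ truncation at the horizon is absorbed into constants) then assembles the three terms $\log{|V|}/\eta + \eta\tau\rho T + \beta T$. The main obstacle I expect is the careful handling of the random $\tau_t$ as a function of past play when turning the per-batch variance bound into the aggregate $\tau\rho T$; this requires tying $\sum_t p_t(r) = \sum_t\tau_t/\tau$ to $\sum_t\floor{\tau_t}\le T$ and controlling the unit-sized gap between $\tau_t$ and $\floor{\tau_t}$ across all batches, but past this bookkeeping everything reduces to a clean instantiation of the EXP3 template.
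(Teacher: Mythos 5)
Your proposal is correct and follows essentially the same route as the paper's proof: the paper's explicit potential-function computation with $\log x \le x-1$ and $e^{-x}\le 1-x+x^2/2$ is exactly the second-order Hedge inequality you invoke, and the remaining steps — unbiasedness of the importance-weighted batch estimator, Cauchy--Schwarz to bound $L_t(i)^2$ by $\floor{\tau_t}\sum_j \ell_j(i)^2$, the cancellation $\tau_t^2/p_t(r) = p_t(r)\tau^2$, the accounting $\mathbb{E}[\sum_t \tau_t]\le 2T$ from the stopping rule, and the $\beta T$ cost of passing from $q_t$ to $p_t$ — all match the paper's argument. No gaps.
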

\begin{proof}
Since $\beta \geq \frac{1}{\tau}$, this implies that $\floor{\tau_t} \geq 1$ and the algorithm terminates, producing an action sequence $(a_t)_{t=1}^T$. Let $i^*_t$ be the best action at time $t$ and let $L_{t,*} = \sum_{s=1}^t \hat\ell_s(i^*_t)$. Let $w_t(i) = \exp{-\eta\sum_{j=1}^{t-1} \hat\ell_j(i)}$ and $W_t = \sum_{i\in V} w_t(i)$. We have
\begin{align*}
    \log{\frac{W_{t+1}}{w_{t+1}(i^*_{t+1})}} - \log{\frac{W_t}{w_t(i^*_t)}} &= \eta\left(L_{t+1,*} - L_{t,*}\right)\\
    &+ \log{\frac{\sum_{i\in V} w_t(i) \exp{-\eta\sum_{j=t}^{t+\floor{\tau_t} - 1} \mathbb{I}(a_t=r)\frac{\ell_j(i)}{p_t(r)} } }{W_t}}\\
    &= \eta\left(L_{t+1,*} - L_{t,*}\right)\\
    &+ \log{\sum_{i\in V}q_t(i)\exp{-\eta\sum_{j=t}^{t+\floor{\tau_t}-1} \mathbb{I}(a_t=r) \frac{\ell_j(i)}{p_t(r)}}}\\
    &\leq \eta\left(L_{t+1,*} - L_{t,*}\right) - 1\\
    &+\sum_{i\in V}q_t(i)\exp{-\eta\sum_{j=t}^{t+\floor{\tau_t}-1} \mathbb{I}(a_t=r) \frac{\ell_j(i)}{p_t(r)}}\\
    &\leq \eta\left(L_{t+1,*} - L_{t,*}\right) - \eta \frac{\mathbb{I}(a_t=r)}{p_t(r)} \sum_{i\in V}q_t(i) \sum_{j=t}^{t+\floor{\tau_t}-1}\ell_j(i)\\
    &+\frac{\eta^2}{2}\frac{\mathbb{I}(a_t = r)}{p_t(r)^2} \sum_{i\in V}q_t(i)\left(\sum_{j=t}^{t+\tau_t - 1} \ell_j(i)\right)^2,
\end{align*}
where the first inequality follows from $\log{x} \leq x-1$ for all $x>0$ and the second inequality follows from $e^{-x} \leq 1 - x + x^2/2$ for $x \geq 0$.
Rearranging terms in the above and taking expectation we have
\begin{align*}
    &\mathbb{E}\left[\mathbb{E}\left[\frac{\mathbb{I}(a_t = r)}{p_t(r)} \sum_{i\in V}q_t(i)\sum_{j=t}^{t+\floor{\tau_t} - 1} \ell_j(i)|a_{1:{t-1}}\right]\right] \leq \frac{1}{\eta}\mathbb{E}\left[ \log{\frac{W_{t}}{w_{t}(i^*_t)}} - \log{\frac{W_{t+1}}{w_{t+1}(i^*_{t+1})}} \right]\\
    +\frac{\eta}{2}&\mathbb{E}\left[\mathbb{E}\left[\frac{\mathbb{I}(a_t = r)}{p_t(r)^2} \sum_{i\in V}q_t(i)\left(\sum_{j=t}^{t+\tau_t - 1} \ell_j(i) \right)^2|a_{1:{t-1}}\right]\right] + \mathbb{E}[L_{t+1,*} - L_{t,*}]\\
    &\implies\\
    &\mathbb{E}\left[\sum_{i\in V}q_t(i)\sum_{j=t}^{t+\floor{\tau_t} - 1} \ell_j(i) \right] \leq \frac{1}{\eta}\mathbb{E}\left[ \log{\frac{W_{t}}{w_{t}(i^*_t)}} - \log{\frac{W_{t+1}}{w_{t+1}(i^*_{t+1})}}\right]\\
    +\frac{\eta}{2}&\mathbb{E}\left[\frac{1}{p_t(r)}\sum_{i\in V}q_t(i)\left(\sum_{j=t}^{t+\tau_t - 1} \ell_j(i)\right)^2\right] + \mathbb{E}[L_{t+1,*} - L_{t,*}]\\
    &\implies\\
    &\mathbb{E}\left[\sum_{i\in V}q_t(i)\sum_{j=t}^{t+\floor{\tau_t} - 1} \ell_j(i) \right] \leq \frac{1}{\eta}\mathbb{E}\left[ \log{\frac{W_{t}}{w_{t}(i^*_t)}} - \log{\frac{W_{t+1}}{w_{t+1}(i^*_{t+1})}}\right]\\
    +\frac{\eta}{2}&\mathbb{E}\left[\frac{1}{p_t(r)}\sum_{i\in V}q_t(i)\tau_t\sum_{j=t}^{t+\tau_t - 1} \ell_j(i)^2\right] + \mathbb{E}[L_{t+1,*} - L_{t,*}]\\
    &\implies\\
    &\mathbb{E}\left[\sum_{i\in V}q_t(i)\sum_{j=t}^{t+\floor{\tau_t} - 1} \ell_j(i) \right] \leq \frac{1}{\eta}\mathbb{E}\left[ \log{\frac{W_{t}}{w_{t}(i^*_t)}} - \log{\frac{W_{t+1}}{w_{t+1}(i^*_{t+1})}}\right]\\
    +\frac{\eta}{2}&\mathbb{E}\left[\frac{1}{p_t(r)}\sum_{i\in V}q_t(i)\tau_t\sum_{j=t}^{t+\tau_t - 1} \mathbb{E}[\ell_j(i)^2|a_{1:t-1}]\right] + \mathbb{E}[L_{t+1,*} - L_{t,*}]\\
    &\implies\\
    &\mathbb{E}\left[\sum_{i\in V}q_t(i)\sum_{j=t}^{t+\floor{\tau_t} - 1} \ell_j(i) \right] \leq \frac{1}{\eta}\mathbb{E}\left[ \log{\frac{W_{t}}{w_{t}(i^*_t)}} - \log{\frac{W_{t+1}}{w_{t+1}(i^*_{t+1})}}\right]\\
    +\frac{\eta}{2}&\mathbb{E}\left[\rho\frac{p_t(r)^2\tau^2}{p_t(r)}\sum_{i\in V}q_t(i)\right] + \mathbb{E}[L_{t+1,*} - L_{t,*}].
\end{align*}
Notice that $\mathbb{E}[L_{T,*}] = \mathbb{E}[\sum_{t=1}^{T'} \frac{\mathbb{I}(a_t=r)}{p_t(r)}\sum_{j=t}^{t+\floor{\tau_t}-1}\ell_j(i^*)] = \mathbb{E}[\sum_{t=1}^{T'}\sum_{j=t}^{t+\floor{\tau_t}-1}\ell_j(i^*)]$. Summing over $t=1$ through $T$ and using the fact $\log{\frac{W_1}{w_1(i^*)}} = \log{|V|}$ we have
\begin{align*}
    \mathbb{E}\left[\sum_{t=1}^{T'}\sum_{i\in V}q_t(i)\sum_{j=t}^{t+\floor{\tau_t} - 1} (\ell_j(i) - \ell_j(i^*)) \right] &\leq \frac{\log{|V|}}{\eta} + \frac{\eta}{2}\tau \mathbb{E}\left[\rho\sum_{t=1}^{T'} p_t(r)\tau\right]\\
    &\leq \frac{\log{|V|}}{\eta} + T\eta\tau\rho,
\end{align*}
where $T'$ is the random variable equaling the number of mini-batches. The last inequality in the above follows since $\tau_T \in o(T)$ and from our while loop we know that $\sum_{t=1}^{T'-1} \tau_t \leq T$, thus we can bound $\mathbb{E}[\sum_{t=1}^{T'} \tau_t] \leq 2T$. Notice that the LHS in the above inequality is almost equal to the expected regret of our algorithm. We have $q_t(i) \leq p_t(i) - \beta$ and thus the expected regret is bounded by
\begin{align*}
   \mathbb{E}\left[\sum_{t=1}^T \ell_t(a_t) - \ell_t(a)\right] \leq \frac{\log{|V|}}{\eta} + T\eta\tau\rho + T\beta.
\end{align*}
\end{proof}

\begin{lemma}
\label{lem:num_switches}
Algorithm~\ref{alg:star_graph2} switches between a revealing and a non-revealing action at most $\frac{T}{\tau}$ times in expectation.
\end{lemma}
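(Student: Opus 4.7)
The plan is to reduce the switch count to mini-batch boundaries and then to control the expected number of mini-batches in which the revealing action $r$ is played.

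First I would observe that Algorithm~\ref{alg:star_graph2} plays one fixed action throughout each mini-batch, for $\lfloor \tau_t \rfloor$ rounds, so a switch between the revealing action $r$ and a non-revealing action can only occur at a boundary between two consecutive mini-batches. Writing $T'$ for the (random) number of mini-batches and $N_r := |\set{t \in [T'] : a_t = r}|$ for the number of mini-batches in which $r$ is played, the total number of revealing-to-non-revealing (or non-revealing-to-revealing) transitions is at most $2 N_r$, since each maximal run of mini-batches in which $r$ is played contributes at most one entry and one exit transition.

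Next I would control $\mathbb{E}[N_r]$. Because $a_t \sim p_t$, we have $\Pr(a_t = r \mid p_t) = p_t(r)$, so $\mathbb{E}[N_r] = \mathbb{E}\bigl[\sum_{t=1}^{T'} p_t(r)\bigr]$. The assumption $\beta \geq 1/\tau$ forces $p_t(r)\tau \geq 1$ for every $t$ (since $p_t(r) \geq \beta$), and therefore $\lfloor p_t(r)\tau \rfloor \geq p_t(r)\tau/2$, using the elementary inequality $\lfloor x \rfloor \geq x/2$ valid for all $x \geq 1$. Combining with the while-loop exit condition $\sum_{t=1}^{T'-1} \lfloor \tau_t \rfloor \leq T$ and $\tau_{T'} \leq \tau$ gives
\begin{align*}
\sum_{t=1}^{T'} p_t(r)\,\tau \;\leq\; 2\sum_{t=1}^{T'}\lfloor \tau_t\rfloor \;\leq\; 2(T+\tau),
\end{align*}
so $\sum_{t=1}^{T'} p_t(r) = O(T/\tau)$. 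Taking expectation and plugging this into the switch count yields the $O(T/\tau)$ bound on the expected number of revealing/non-revealing switches.

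The main obstacle is tightening the constant from the easy $O(T/\tau)$ that the above argument yields down to the precise $T/\tau$ stated in the lemma. This requires sharper book-keeping of the floor function (for instance, splitting according to whether $p_t(r)\tau \in [1,2)$ or $p_t(r)\tau \geq 2$, where the sharper inequality $\lfloor p_t(r)\tau\rfloor \geq p_t(r)\tau - 1$ applies), together with the observation that it suffices to count transitions in a single direction (say, from non-revealing to revealing), which removes the factor of $2$ that the entry-exit argument introduces. Everything else is a direct consequence of unbiasedness of $\mathbb{I}(a_t=r)$ given $p_t$ and the while-loop budget constraint.
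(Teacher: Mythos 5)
Your argument is essentially the paper's own proof: both bound the number of switches by twice the number of mini-batches in which $r$ is played, use $\mathbb{E}[\mathbb{I}(a_t=r)\mid p_t]=p_t(r)=\tau_t/\tau$, and invoke the while-loop budget $\sum_t \tau_t \leq T$ (plus one final mini-batch of size at most $\tau$) to conclude $\mathbb{E}\bigl[\sum_{t=1}^{T'}\tau_t\bigr]\leq 2T$. The constant-tightening you flag as the main obstacle is not something you need to resolve: the paper's own proof likewise only establishes a bound of $\frac{2T}{\tau}$, which is the figure actually quoted in Theorem~\ref{thm:star_graph_main}, so the lemma's stated $\frac{T}{\tau}$ is off by the same constant factor there as well.
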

\begin{proof}
The number of switches can be upper bounded by twice the number of times $a_t$ is equal to $r$. Thus the expected number of switches is bounded by $\mathbb{E}[\sum_{t=1}^{T'} \mathbb{I}(a_t = r)] = \frac{1}{\tau} \mathbb{E}[\sum_{t=1}^{T'} p_t(r)\tau] = \frac{1}{\tau}\mathbb{E}[\sum_{t=1}^{T'} \tau_t] \leq \frac{2T}{\tau}$.
\end{proof}

To finish the proof of Theorem~\ref{thm:star_graph_main} we need to verify that the expected regret of Algorithm~\ref{alg:star_graph2} is the same as the expected regret of Algorithm~\ref{alg:star_graph_main}.
\begin{lemma}
\label{lem:modified_alg_reg}
Algorithm~\ref{alg:star_graph2} and Algorithm~\ref{alg:star_graph_main} have the same expected regret bound.
\end{lemma}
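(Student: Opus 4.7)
The plan is to couple the two algorithms on the same randomness so that at every iteration $t$ of their while loops they draw the same sample $a_t \sim p_t$, and then argue that under this coupling the expected per-mini-batch losses coincide. First, I would observe that the only place where the two algorithms differ is in lines 5--7 of Algorithm~\ref{alg:star_graph_main}, which override $a_t \leftarrow a_{t-1}$ when both the sampled $a_t$ and the previously played action $a_{t-1}$ are non-revealing. Crucially, this override does not alter the value of the indicator $\mathbb{I}(a_t = r)$: after the override, $a_t = r$ holds exactly on the event that the sampled $a_t$ equals $r$, regardless of whether the override fires. Therefore the loss estimator $\hat\ell_t$, and hence all downstream quantities $q_{t+1}, p_{t+1}, \tau_{t+1}$, are pathwise identical to those produced by Algorithm~\ref{alg:star_graph2}.

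Next, I would partition the sequence of mini-batches into maximal blocks of consecutive non-revealing rounds, separated by rounds where the sampled action equals $r$. On a revealing round both algorithms play $r$ and thus incur identical losses. Inside a non-revealing block $\hat\ell_t \equiv 0$, so $q_t$, $p_t$, and $\tau_t$ are constant throughout the block; let $p_{\mathrm{nr}}(\cdot) = p_t(\cdot)/(1 - p_t(r))$ denote the common conditional distribution on $V \setminus \{r\}$ during the block. Algorithm~\ref{alg:star_graph2} plays one fresh draw from $p_{\mathrm{nr}}$ at every mini-batch of the block, whereas Algorithm~\ref{alg:star_graph_main} plays the single initial draw of the block repeatedly. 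Since the block length and the mini-batch sizes $\floor{\tau_j}$ agree in the coupled runs, taking expectation over the independent non-revealing draws yields the same contribution $\sum_{j \in \text{block}} \floor{\tau_j}\, \langle p_{\mathrm{nr}}, \ell_j \rangle$ to the expected cumulative loss in both algorithms.

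Summing these identities over blocks yields equality of the expected cumulative losses, hence of the pseudo-regrets, so the regret bound of Theorem~\ref{thm:star_graph_main_app} transfers to Algorithm~\ref{alg:star_graph_main}. The main subtlety is to verify (i) that the revealing/non-revealing block structure is identically distributed in the coupled runs and (ii) that $p_t$ really is constant inside a non-revealing block; both reduce to the single observation that $\hat\ell_t$ depends on $a_t$ only through $\mathbb{I}(a_t = r)$, which the override preserves. Finally, since the override can only decrease the number of switches relative to Algorithm~\ref{alg:star_graph2}, Lemma~\ref{lem:num_switches} also carries over to Algorithm~\ref{alg:star_graph_main} unchanged.
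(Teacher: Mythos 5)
Your proof is correct and follows essentially the same route as the paper's: the paper proves by induction that the two algorithms generate identically distributed sequences $(p_t)$ and then conditions on the last revealing round $X_t$ to show the played action has the same marginal, which is exactly your observation that $p_t$ is frozen inside each non-revealing block so the repeated initial draw and the fresh per-round draws share the marginal $p_t(\cdot \mid \cdot \neq r)$. Your pathwise coupling via the invariance of $\mathbb{I}(a_t = r)$ under the override is a slightly cleaner packaging of the paper's distributional induction, but the substance is the same.
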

\begin{proof}
Let $(p_t)_{t=1}^T$ be the sequence of random vectors generated by Algorithm~\ref{alg:star_graph2} and let $(p_t')_{t=1}^T$ be the sequence of random vectors generated by Algorithm~\ref{alg:star_graph_main}. First we show by induction that the distribution of $p_t$ is the same as that of $p_t'$. The base case is trivial as $p_1 = p_1'$. To see that the induction step holds we just notice that if we condition on $p_{t}$ either both algorithms update $p_{t+1}$ and $p_{t+1}'$ because action $r$ was sampled, in which case the updates are exactly the same, or both algorithms do not update $p_{t+1}$, respectively $p_{t+1}'$. Let $a_t$ and $a_t'$ denote the $t$-th action of Algorithm~\ref{alg:star_graph2} and Algorithm~\ref{alg:star_graph_main} respectively. We now show that $\mathbb{E}[\ell_t(a_t)] = \mathbb{E}[\ell_t(a_t')]$. Let $X_t$ denote the random variable indicating the last time before $t$ in which action $r$ was played by Algorithm~\ref{alg:star_graph2} and let $X_t'$ be the random variable indicating the last time before $t$ in which action $r$ was played by Algorithm~\ref{alg:star_graph_main}. Since $X_t$ is function of $p_1,\ldots,p_{t-1}$ and $X_t'$ is a function of $p_1',\ldots,p_{t-1}'$, then $X_t$ and $X_t'$ have the same distribution. Now we can write 
\begin{align*}
\mathbb{E}[\ell_t(a_t)] &= \sum_{j=1}^{t-1} \mathbb{P}(X_t = j)\mathbb{E}[\ell_t(a_t)| X_t = j] = \sum_{j=1}^{t-1} \mathbb{P}(X_t = j)\mathbb{E}[\sum_{i\in V} p_t(i)\ell_t(i)| X_t = j]\\
&= \sum_{j=1}^{t-1} \mathbb{P}(X_t = j)\mathbb{E}[\sum_{i\in V} p_{j+1}(i)\ell_t(i)| X_t = j]\\
&=  \sum_{j=1}^{t-1} \mathbb{P}(X_t = j)\mathbb{E}[\sum_{i\in V} p_{j+1}'(i)\ell_t(i)| X_t' = j]\\
&=  \sum_{j=1}^{t-1} \mathbb{P}(X_t' = j)\mathbb{E}[\ell_t(a_t')| X_t' = j] = \mathbb{E}[\ell_t(a_t')].
\end{align*}
\end{proof}

\begin{proof}[Proof of Theorem~\ref{thm:star_graph_main}]
Lemma~\ref{lem:modified_alg_reg} together with Theorem~\ref{thm:star_graph_main_app} imply the bound
\begin{align*}
    \mathbb{E}\left[\sum_{t=1}^T \ell_t(a_t) - \ell_t(a)\right] \leq \tilde O\left(\sqrt{\rho}T^{2/3}\right).
\end{align*}
Lemma~\ref{lem:num_switches} together with the fact that Algorithm~\ref{alg:star_graph_main} can only switch between the revealing action and non-revealing actions imply the bound on number of switches.
\end{proof}

\subsection{Corralling the Star-graph Algorithms}
We use a mini-batch version of Algorithm 1 in~\cite{agarwal2016corralling} where each of the base algorithms is Algorithm~\ref{alg:star_graph_main}. We note that the greedy algorithm for computing an approximate minimum dominating set gives a natural way to partition the feedback graph $G$ into star graphs. In particular, whenever the greedy algorithm adds a vertex $v$ to the dominating set, we create a new instance of the star graph algorithm with revealing vertex $v$ and leaf nodes all neighbors of $v$ which have not already been assigned to a star graph algorithm.

\begin{lemma}
\label{lem:corr_bandit_lem}
For any $i\in [|R|]$, Algorithm~\ref{alg:cor_alg} ensures that:
\begin{align*}
    \mathbb{E}\left[\sum_{t=1}^T \ell_t(a_t) - \ell_t(a_t^i)\right] \leq O\left(\frac{\tau |R|\log{T'}}{\eta} + T\eta\right) - \mathbb{E}\left[\frac{\tau\rho_{T',i}}{40\eta\log{T'}}\right]
\end{align*}
\end{lemma}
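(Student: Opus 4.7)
The plan is to adapt the corralling regret bound of \cite{agarwal2016corralling} to the mini-batched setting and translate it from per-mini-batch units back to the $T$ raw rounds. The object we actually run OMD on is the distribution $q_t$ over $|R|$ star-graph base algorithms, updated by Log-Barrier-OMD (Algorithm~\ref{alg:log_barr_omd}) with the importance-weighted losses $\hat\ell_t(i)=\tfrac{1}{\tau}\sum_{j_t}\tfrac{\ell_{j_t}(a_{j_t})}{p_t(i_t)}\mathbb{I}\{i=i_t\}$. By construction, $\mathbb{E}[\hat\ell_t(i)\mid \text{past}]=\bar\ell_t(i)$, where $\bar\ell_t(i):=\tfrac{1}{\tau}\sum_{j_t}\mathbb{E}[\ell_{j_t}(a_{j_t}^i)\mid\text{past}]$ is the average loss of base algorithm $i$ on mini-batch $t$. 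Multiplying the per-mini-batch regret by $\tau$ then yields a bound on the raw-round regret against $\sum_t\sum_{j_t}\ell_{j_t}(a_{j_t}^i)$.

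First I would invoke the standard log-barrier OMD guarantee (cf.\ Lemma 12 of \cite{agarwal2016corralling}), which for the time-varying learning-rate vector $(\eta_{t,\cdot})$ generated by Algorithm~\ref{alg:cor_alg} gives, for every fixed $i\in[|R|]$,
\begin{align*}
\sum_{t=1}^{T'}\langle q_t-e_i,\hat\ell_t\rangle
 \;\leq\; \frac{|R|\log T'}{\eta_{T',i}} + 40\,\log T'\sum_{t=1}^{T'}\eta_{t,i}\,\hat\ell_t(i)^2
\;-\; \frac{\rho_{T',i}}{40\eta_{T',i}\log T'}.
\end{align*}
The last term — the ``negative stability bonus'' — is the hallmark of the increasing-learning-rate corralling trick: each time the ratio $1/p_t(i)$ exceeds the current $\rho_{t,i}$, Algorithm~\ref{alg:cor_alg} doubles $\rho_{t,i}$ and multiplies $\eta_{t,i}$ by $\tilde\beta=\exp(1/\log T)$, so a log-barrier-style telescoping argument across these geometric restarts produces the $-\rho_{T',i}/(40\eta_{T',i}\log T')$ term while $\eta_{T',i}\leq e\eta$.

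Next, I would take expectation. Unbiasedness turns $\mathbb{E}[\langle q_t,\hat\ell_t\rangle - \hat\ell_t(i)]$ into the mini-batched regret $\mathbb{E}[\sum_j p_t(j)\bar\ell_t(j) - \bar\ell_t(i)]$, which after summing over $t\in[T']$ and multiplying by $\tau$ becomes $\mathbb{E}[\sum_{t=1}^T \ell_t(a_t)-\ell_t(a_t^i)]$ (because the loss in round $t$ under the corralling algorithm is exactly $\ell_{j_t}(a_{j_t}^{i_t})$ with $i_t\sim p_t$, and base algorithm $i$ plays $a_t^i$). For the variance term I use $\mathbb{E}[\hat\ell_t(i)^2]\leq \tfrac{1}{p_t(i)}\cdot\tfrac{1}{\tau}\sum_j\mathbb{E}[\ell_{j_t}(a_{j_t})^2]\leq \tfrac{1}{p_t(i)}$ since losses are in $[0,1]$, and $\eta_{t,i}\leq e\eta$, so the stability term is bounded by $O(\eta\log T'\sum_t \tfrac{1}{p_t(i)})$. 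Because $p_t(i)\geq \beta/|R|=1/(|R|T')$, and more carefully because $\mathbb{E}[1/p_t(i)]\leq\rho_{t,i}$ by the doubling rule, the expected stability term is $O(\eta T'\rho_{T',i}\log T')$. Multiplying the whole per-mini-batch bound by $\tau$ and using $T=\tau T'$ gives
\begin{align*}
\mathbb{E}\!\left[\sum_{t=1}^T \ell_t(a_t)-\ell_t(a_t^i)\right]
\;\leq\; O\!\left(\frac{\tau|R|\log T'}{\eta}+T\eta\right)\;-\;\mathbb{E}\!\left[\frac{\tau\rho_{T',i}}{40\eta\log T'}\right],
\end{align*}
as claimed.

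The main obstacle will be executing the log-barrier analysis with the \emph{restart-and-increase} learning-rate schedule cleanly: one must show that the positive stability contribution from epochs where $\rho_{t,i}$ stays fixed is cancelled by the log-barrier ``negative'' term generated at each restart, so that only the final-epoch bonus $-\rho_{T',i}/(40\eta_{T',i}\log T')$ survives. The remaining bookkeeping — unbiasedness, pulling $\tau$ through, and using the exploration floor $\beta=1/T'$ to ensure that $\mathbb{E}[1/p_t(i)]$ is controlled by $\rho_{t,i}$ — is routine in comparison.
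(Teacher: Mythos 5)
Your overall architecture is exactly the paper's: the paper proves this lemma by directly invoking the proof of Lemma~13 of \cite{agarwal2016corralling} to obtain
$\sum_{t=1}^{T'}\langle p_t-e_i,\hat\ell_t\rangle \leq O\bigl(\frac{|R|\log{T'}}{\eta}+T'\eta\bigr)+\sum_{t=1}^{T'}\frac{2\hat\ell_t(a_t)}{T'|R|}-\frac{\rho_{T',i}}{40\eta\log{T'}}$,
then uses unbiasedness of the mini-batched importance-weighted estimators ($\mathbb{E}[\langle p_t,\hat\ell_t\rangle]=\mathbb{E}[\frac{1}{\tau}\sum_{j}\ell_j(a_j)]$, $\mathbb{E}[\hat\ell_t(i)]=\frac{1}{\tau}\sum_j\ell_j(a_j^i)$, $\mathbb{E}[\hat\ell_t(a_t)]\leq|R|$), and multiplies through by $\tau$ with $T'\tau=T$. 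Your plan --- invoke the log-barrier corralling guarantee with the increasing-learning-rate negative bonus, take expectations via unbiasedness, rescale by $\tau$ --- is the same route, just with the internals of the cited lemma sketched out rather than black-boxed.

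However, your re-derivation of those internals contains a step that would fail if executed literally. You write the stability term as $40\log{T'}\sum_{t=1}^{T'}\eta_{t,i}\hat\ell_t(i)^2$ and then bound its expectation by $O(\eta T'\rho_{T',i}\log{T'})$ using $\mathbb{E}[\hat\ell_t(i)^2]\leq 1/p_t(i)$ and $\mathbb{E}[1/p_t(i)]\leq\rho_{t,i}$. After multiplying by $\tau$ this is $O(T\eta\,\rho_{T',i}\log{T'})$, not the claimed $O(T\eta)$; the extra $\rho_{T',i}\log{T'}$ factor cannot be absorbed (it is exactly the quantity the negative bonus is designed to cancel, but only at rate $\rho_{T',i}/(\eta\log{T'})$, which does not dominate $\eta T'\rho_{T',i}\log{T'}$ in general). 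The correct log-barrier local-norm bound carries the weights $q_t(i)^2$: the full stability contribution is $\sum_{t}\sum_{j}\eta_{t,j}\,q_t(j)^2\hat\ell_t(j)^2\leq\sum_t\eta_{\max}\,\ell_{\cdot}(a_t)^2\leq O(\eta T')$ deterministically, since $q_t(j)\hat\ell_t(j)\leq\ell(a_t)\mathbb{I}\{j=i_t\}\leq 1$. With that correction (or by simply citing Lemma~13 of \cite{agarwal2016corralling} as the paper does, which already packages this), the rest of your bookkeeping --- unbiasedness, the $p_t$ versus $q_t$ smoothing costing only the $\sum_t 2\hat\ell_t(a_t)/(T'|R|)=O(1)$ term, and the rescaling by $\tau$ --- goes through and matches the paper.
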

\begin{proof}
From the proof of Lemma 13 in~\cite{agarwal2016corralling} it follows that for any $i \in [|R|]$
\begin{align*}
    \sum_{t=1}^{T'} \langle p_t - e_i, \hat\ell_t \rangle  \leq O\left(\frac{|R|\log{T'}}{\eta} + T'\eta\right) + \sum_{t=1}^{T'}\frac{2\hat\ell_t(a_t)}{T'|R|} - \frac{\rho_{T',i}}{40\eta\log{T'}}.
\end{align*}
Notice that by construction we have $\mathbb{E}[\hat\ell_t(a_t)] = \sum_{i\in [|R|]} \frac{1}{\tau} \sum_{j=t}^{t+\tau-1} \ell_j(a_{j}^i) \leq |R|$. Also notice that $\mathbb{E}[\langle p_t,\hat\ell_t \rangle] = \mathbb{E}[\frac{1}{\tau}\sum_{j=t}^{t+\tau-1}\ell_j(a_{j})]$ and $\mathbb{E}[\hat \ell_t(i)] = \frac{1}{\tau}\sum_{j=t}^{t+\tau-1} \ell_t(a_{j}^i)$. These imply
\begin{align*}
\mathbb{E}\left[\sum_{t=1}^{T'} \frac{1}{\tau}\sum_{j=t}^{t+\tau-1}\ell_j(a_{j}) - \frac{1}{\tau}\sum_{j=t}^{t+\tau-1} \ell_t(a_{j}^i) \right] \leq O\left(\frac{|R|\log{T'}}{\eta} + T'\eta\right) + \sum_{t=1}^{T'}\frac{2\hat\ell_t(a_t)}{T'|R|} - \frac{\rho_{T',i}}{40\eta\log{T'}}.
\end{align*}
Multiplying by $\tau$ and using the fact that $T'\tau = T$ finishes the proof.
\end{proof}
The following theorem from~\cite{agarwal2016corralling} shows that restarting the $i$-th algorithm in line 16 of Algorithm~\ref{alg:cor_alg} does not hinder the regret bound by too much.
\begin{theorem}[Theorem 15~\citep{agarwal2016corralling}]
\label{thm:thm15_ag}
Suppose a base algorithm $B_i$ is such that if the loss sequence $(\ell_t)_{t=1}^T$ is replaced by $\ell_t' = \rho_t\ell_t$ such that $\mathbb{E}[\ell_t'] = \ell_t$, its regret bound changes from $R(T)$ to $\mathbb{E}[\rho^\alpha]R(T)$, where $\rho = \max_{t\leq T} \rho_t$. Let $(a_t^i)_{t\leq T}$ be the action sequence generated by $B_i$ ran under Algorithm~\ref{alg:cor_alg}. Then for any action $a$ in the action set of $B_i$, it holds that
\begin{align*}
    \mathbb{E}\left[\sum_{t=1}^T \ell_t'(a_t^i) - \ell_t'(a)\right] \leq \frac{2^\alpha}{2^\alpha-1}\mathbb{E}[\rho^\alpha]R(T).
\end{align*}
\end{theorem}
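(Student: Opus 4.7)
The plan is to partition the run of $B_i$ into the epochs delimited by its restart events in Algorithm~\ref{alg:cor_alg} and to exploit the geometric growth of the thresholds $\rho_{t,i}$ across those epochs. Let $K$ be the (random) number of epochs, let $\rho^{(1)} < \cdots < \rho^{(K)} = \rho_{T',i}$ be the sequence of threshold values used by $B_i$, and let $T_k$ denote the length of the $k$-th epoch. By the restart rule (lines 15--18), whenever a restart occurs at a time $t$ with $1/p_t(i) > \rho_{t,i}$, the new threshold is reset to $2/p_t(i) > 2\rho_{t,i}$, hence $\rho^{(k+1)} > 2\rho^{(k)}$ for every $k$. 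Consequently $\rho^{(k)} \leq 2^{-(K-k)}\rho^{(K)}$, which will drive the geometric-series bookkeeping at the end.

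Within the $k$-th epoch, the base algorithm $B_i$ is fed inverse-propensity-reweighted losses $\ell_t' = \rho_t \ell_t$, and by construction of the corralling sampler $\mathbb{E}[\ell_t' \mid p_t] = \ell_t$. Moreover the restart invariant guarantees $\rho_t \leq \rho^{(k)}$ throughout the epoch, so $\max_{t \in \text{epoch } k} \rho_t \leq \rho^{(k)}$. The hypothesis of the theorem therefore applies directly to the fresh run of $B_i$ within epoch $k$: its in-epoch regret against any fixed action $a$ is bounded by $\mathbb{E}[(\rho^{(k)})^\alpha \mid \mathcal{E}_k] R(T_k)$, where $\mathcal{E}_k$ is the event of reaching epoch $k$. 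Using monotonicity of $R(\cdot)$ to replace $R(T_k)$ by $R(T)$ and telescoping the regret across epochs (each restart only resets the internal state of $B_i$, not the loss stream we compare against) yields
\begin{align*}
\mathbb{E}\!\left[\sum_{t=1}^T \ell_t'(a_t^i) - \ell_t'(a)\right] \;\leq\; R(T)\, \mathbb{E}\!\left[\sum_{k=1}^{K}(\rho^{(k)})^\alpha\right].
\end{align*}

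The final step is deterministic: plugging in $\rho^{(k)} \leq 2^{-(K-k)} \rho^{(K)}$ pointwise gives
\begin{align*}
\sum_{k=1}^{K}(\rho^{(k)})^\alpha \;\leq\; (\rho^{(K)})^\alpha \sum_{k=1}^{K} 2^{-\alpha(K-k)} \;\leq\; \frac{2^\alpha}{2^\alpha-1}\,(\rho^{(K)})^\alpha.
\end{align*}
Since $\rho^{(K)} = \rho_{T',i}$ coincides with a running upper bound on $\rho = \max_{t \leq T} \rho_t$, substituting back and taking the outer expectation produces the claimed bound $\frac{2^\alpha}{2^\alpha-1}\,\mathbb{E}[\rho^\alpha]\,R(T)$.

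The main obstacle I anticipate is the conditional-expectation bookkeeping across restarts. The hypothesis on $B_i$ is an \emph{unconditional} regret bound on an importance-reweighted loss stream, whereas we need to invoke it \emph{within} each epoch, conditionally on the history that led to the current threshold $\rho^{(k)}$. Handling this rigorously requires a tower-property argument: on each epoch the restart creates an independent fresh instance of $B_i$ with state reset, so conditioned on $\mathcal{E}_k$ the hypothesis applies verbatim to a process whose importance weights are almost-surely bounded by $\rho^{(k)}$; combining via the tower law with the deterministic geometric bound above then collapses the sum into a single factor $\mathbb{E}[\rho_{T',i}^\alpha]$.
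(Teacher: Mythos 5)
The paper does not actually prove this statement: it is imported verbatim, with citation, as Theorem 15 of \citet{agarwal2016corralling}, so there is no in-paper proof to compare against. Your reconstruction is the argument given in that reference --- decompose the run of $B_i$ into epochs delimited by restarts, use the doubling rule $\rho^{(k+1)} > 2\rho^{(k)}$ to get $\rho^{(k)} \leq 2^{-(K-k)}\rho^{(K)}$, apply the stability hypothesis per epoch, and sum the geometric series to obtain the factor $\frac{2^\alpha}{2^\alpha-1}$ --- and it is correct, modulo the two points you already flag (monotonicity of $R(\cdot)$ in the random epoch lengths, and the tower-property bookkeeping needed to apply the unconditional hypothesis conditionally on each epoch's history), which are likewise glossed in the original source.
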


\begin{theorem}
Let $\tau = \frac{T^{1/3}}{|R|^{1/4}}, \eta=\frac{|R|^{1/4}}{40\log{T'}T^{1/3} c\log{|V|}}$, where $c$ is a constant independent of $T$, $\tau$, $|V|$ or $|R|$. For any $a \in V$, Algorithm~\ref{alg:cor_alg} ensures that:
\begin{align*}
\mathbb{E}\left[\sum_{t=1}^T \ell_t(a_t) - \ell_t(a)\right] \leq \tilde O\left(\sqrt{|R|}T^{2/3}\right).
\end{align*}
Further the expected number of switches of the algorithm is bounded by $T^{2/3}|R|^{1/3}$.
\end{theorem}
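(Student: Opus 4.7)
The argument combines three ingredients: Lemma~\ref{lem:corr_bandit_lem} for the corralling layer, Theorem~\ref{thm:star_graph_main} for each base star-graph algorithm, and Theorem~\ref{thm:thm15_ag} for the robustness of the base algorithms to importance-weight rescaling. Fix a comparator $a \in V$ and let $i^* \in [|R|]$ denote the index of the star graph containing $a$ in the partition produced by the greedy dominating-set routine on line 1 of Algorithm~\ref{alg:cor_alg}. I would begin with the standard decomposition
\[
\mathbb{E}\Bigl[\sum_{t=1}^T \ell_t(a_t) - \ell_t(a)\Bigr] = \mathbb{E}\Bigl[\sum_{t=1}^T \ell_t(a_t) - \ell_t(a_t^{i^*})\Bigr] + \mathbb{E}\Bigl[\sum_{t=1}^T \ell_t(a_t^{i^*}) - \ell_t(a)\Bigr],
\]
which separates the total regret into a corralling error and a base-algorithm error against the comparator. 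The high-level strategy is to control each summand so that the random quantity $\rho_{T',i^*}$ maintained by Algorithm~\ref{alg:cor_alg} enters the two bounds with opposite signs, enabling an AM-GM cancellation.

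For the first summand I would apply Lemma~\ref{lem:corr_bandit_lem} with $i = i^*$. Substituting $\tau = T^{1/3}/|R|^{1/4}$ and $\eta = |R|^{1/4}/(40 c \log(T') T^{1/3} \log|V|)$ collapses the non-negative contribution $O(\tau|R|\log(T')/\eta + T\eta)$ to $\tilde O(|R|^{1/2} T^{2/3})$, while the subtracted term evaluates to $-\mathbb{E}[\rho_{T',i^*}] \cdot c \log|V| \, T^{2/3}/|R|^{1/2}$, which I will preserve for later use. For the second summand I would apply Theorem~\ref{thm:thm15_ag} to $B_{i^*}$. The losses sent to $B_{i^*}$ in line 8 of Algorithm~\ref{alg:cor_alg} are an unbiased rescaling $\rho_t \ell_t$ with $\rho_t = \mathbb{I}\{i_t = i^*\}/p_t(i^*)$, and the restart rule on lines 14--18 guarantees $\max_t \rho_t \leq \rho_{T',i^*}$. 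Because the star-graph regret bound $\log|V|/\eta + T\eta\tau\rho + T\beta$ is minimized by $\eta \propto 1/\sqrt{\rho}$ and then scales as $\sqrt{\rho}$ under that choice, Theorem~\ref{thm:thm15_ag} applies with $\alpha = 1/2$, giving base-algorithm regret at most $\tilde O\bigl(\mathbb{E}[\rho_{T',i^*}^{1/2}] \cdot T^{2/3}\bigr)$.

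Combining the two bounds yields an expression of the form
\[
\tilde O(|R|^{1/2} T^{2/3}) + X \cdot \mathbb{E}\bigl[\rho_{T',i^*}^{1/2}\bigr] - Y \cdot \mathbb{E}[\rho_{T',i^*}],
\]
with $X = \tilde O(T^{2/3})$ and $Y = \tilde \Theta(T^{2/3}/|R|^{1/2})$. The elementary inequality $Yr - X\sqrt{r} \geq -X^2/(4Y)$ for all $r \geq 0$, combined with Jensen applied to $\sqrt{\cdot}$, bounds the middle two terms by $X^2/(4Y) = \tilde O(|R|^{1/2} T^{2/3})$ once the absolute constant $c$ is chosen large enough to absorb the remaining multiplicative factors; this delivers the claimed pseudo-regret $\tilde O(\sqrt{|R|}\, T^{2/3})$. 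The expected switch count then follows by summing two contributions: transitions between distinct sampled base algorithms across corralling mini-batches, at most $T' = T/\tau = T^{2/3}|R|^{1/4}$ deterministically; and within-batch switches of the sampled base algorithm, which by Theorem~\ref{thm:star_graph_main} occur at most $2T/\tau$ times in expectation per base and only when that base is sampled (typical frequency $O(1/|R|)$), for a total of $O(|R|^{1/3} T^{2/3})$.

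The main obstacle is justifying the scaling exponent $\alpha = 1/2$ for the star-graph base algorithm, since the restart rule of Algorithm~\ref{alg:cor_alg} modifies the step size to $\eta'/\rho_{t+1,i}$ rather than $\eta'/\sqrt{\rho_{t+1,i}}$, and one must verify that this still delivers the $\sqrt{\rho}$-type dependence required to invoke Theorem~\ref{thm:thm15_ag}. A related technical subtlety is that $\rho_{T',i^*}$ is a random variable whose first and half moments appear together in the combined bound, so the cancellation must be executed pointwise via the AM-GM inequality above rather than through direct substitution of deterministic bounds.
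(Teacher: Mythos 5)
Your proposal follows essentially the same route as the paper: decompose the regret into the corralling error (Lemma~\ref{lem:corr_bandit_lem}) and the base-algorithm error (Theorem~\ref{thm:thm15_ag} with $\alpha=1/2$), then cancel the $\mathbb{E}[\rho_{T',i_a}^{1/2}]$ and $-\mathbb{E}[\rho_{T',i_a}]$ terms by completing the square pointwise inside the expectation, which is exactly how the paper handles the factor $\rho_{T',i_a}^{1/2}\bigl(cT^{2/3}\log{|V|} - \tfrac{\tau\rho_{T',i_a}^{1/2}}{40\eta\log{T'}}\bigr)$, and finally count switches as corralling transitions plus base-algorithm switches via Lemma~\ref{lem:num_switches}. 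The subtlety you flag about justifying $\alpha=1/2$ given the restart step-size $\eta'/\rho_{t+1,i}$ is a fair concern, but the paper asserts this step without further elaboration as well, so your argument is faithful to the published proof.
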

\begin{proof}[Proof of Theorem~\ref{thm:switch_cost_main_upper}]
For any action $a \in V$, let $i_a$ be the star-graph algorithm which has $a$ in its actions and let its regret be $R_{i_a}(T)$. Notice that the loss estimators $\ell_{t}'(i) = \frac{\ell_{t+j}(a_{t+j})}{p_{t}(i_t)} \mathbb{I}\{i = i_t\}$ we feed the algorithm are such that $\mathbb{E}[\ell_t'(i)^2] \leq \rho_T$. Now Theorem~\ref{thm:star_graph_main} implies that the condition of Theorem~\ref{thm:thm15_ag} is satisfied with $\alpha = 1/2$. Thus, Theorem~\ref{thm:thm15_ag} implies that
\begin{align*}
    \mathbb{E}\left[\sum_{t=1}^T \ell_t'(a_t) - \ell_t'(a)\right] \leq \sqrt{2}(\sqrt{2}+1)\mathbb{E}[\rho_{T',i_a}^{1/2}]3T^{2/3}\log{|V|}.
\end{align*}
Combining the above with Lemma~\ref{lem:corr_bandit_lem} we have
\begin{align*}
        \mathbb{E}\left[\sum_{t=1}^T \ell_t(a_t) - \ell_t(a)\right] \leq O\left(\frac{\tau |R|\log{T'}}{\eta} + T\eta\right) - \mathbb{E}\left[\frac{\tau\rho_{T',i_a}}{40\eta\log{T'}}\right] + 3\sqrt{2}(\sqrt{2}+1)\mathbb{E}[\rho_{T',i_a}^{1/2}]T^{2/3}\log{|V|}
\end{align*}
Let $c=3\sqrt{2}(\sqrt{2}+1)$. We now consider the terms containing $\rho_{T',i_a}$ in the above inequality.
\begin{align*}
     c\mathbb{E}[\rho_{T',i_a}^{1/2}]T^{2/3}\log{|V|} - \mathbb{E}\left[\frac{\tau\rho_{T',i_a}}{40\eta\log{T'}}\right] = \mathbb{E}\left[\rho_{T',i_a}^{1/2}\left(cT^{2/3}\log{|V|} - \frac{\tau\rho_{T',i_a}^{1/2}}{40\eta\log{T'}}\right)\right].
\end{align*}
Set $\tau = \frac{T^{1/3}}{|R|^{1/4}}, \eta=\frac{|R|^{1/4}}{40\log{T'}T^{1/3} c\log{|V|}}$ to get
\begin{align*}
    \mathbb{E}\left[\rho_{T',i_a}^{1/2}\left(cT^{2/3}\log{|V|} - \frac{\tau\rho_{T',i_a}^{1/2}}{40\eta\log{T'}}\right)\right] &= cT^{2/3}\log{|V|}\mathbb{E}\left[\rho_{T',i_a}^{1/2}\left(1-\frac{\rho_{T',i_a}^{1/2}}{|R|^{1/2}}\right)\right]\\
    &\leq c\sqrt{|R|}\log{|V|}T^{2/3}.
\end{align*}
Plugging in the the values of $\eta$ and $\tau$ in the rest of the bound finishes the regret bound.

The number of switches is bounded from the fact that Algorithm~\ref{alg:cor_alg} can switch between star-graph algorithms at most $T^{2/3}|R|^{1/3}$ times and Lemma~\ref{lem:num_switches}.
\end{proof}

\subsection{Improving the Domination Number Dependence for General Feedback Graphs}
\label{app:gen_case_alg}
For convenience of the reader we restate the pseudo code for Algorithm~\ref{alg:gen_case_alg} below.

\begin{algorithm}
\caption{Algorithm for general feedback graphs}
% \label{alg:gen_case_alg}
\begin{algorithmic}[1]
\REQUIRE{Graph $G(V,E)$, learning rate sequence $(\eta_t)$, exploration rate $\beta \in [0,1]$, maximum mini-batch $\tau$.}
\ENSURE{Action sequence $(a_t)_{t}$.}
\STATE Compute an approximate dominating set $R$
\STATE $q_1 \equiv Unif(V), u \equiv Unif(R)$
\WHILE{$\sum_{t}\tau_t \leq T$}
\STATE $p_t = (1-\beta) q_t + \beta u$.
\STATE Draw $i \sim p_t$, set $\tau_t = p_t(r_i)\tau$, where $r_i$ is the dominating vertex for $i$ and set $a_t = i$.
\IF{$a_{t-1}\not\in R$ and $a_t \not\in R$}
\STATE Set $a_t = a_{t-1}$
\ENDIF
\STATE Play $a_t$ for the next $\floor{\tau_t}$ iterations.
\STATE Set $$\hat\ell_{t}(i) = \sum_{j=t}^{t+\floor{\tau_t}-1} \mathbb{I}(a_t = r_i)\frac{\ell_{j}(i)}{p_t(r_i)}.$$
\STATE For all $i\in V$, $q_{t+1}(i) = \frac{q_t(i)\exp{-\eta_t \hat\ell_t(i)}}{\sum_{j\in V}q_t(j)\exp{-\eta_t \hat\ell_t(j)}}$.
\STATE $t = t+1$.
\ENDWHILE
\end{algorithmic}
\end{algorithm}

\begin{theorem}
% \label{thm:bound_gen_case}
For any $\beta \geq \frac{|R|}{\tau}$ The expected regret of Algorithm~\ref{alg:gen_case_alg} is
\begin{align*}
    \frac{\log{|V|}}{\eta} + 2\eta\tau T + \beta T.
\end{align*}
Further, if the algorithm is augmented similar to Algorithm~\ref{alg:star_graph2}, then it will switch between actions at most $\frac{2T|R|}{\tau}$ times.
\end{theorem}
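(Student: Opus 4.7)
The plan is to mirror the structure of the star-graph proof (Theorem~\ref{thm:star_graph_main_app}/Lemma~\ref{lem:num_switches}), adapted to the setting where the "revealing action" $r_i$ now depends on the sampled $i$. First I would analyze a relaxed variant of Algorithm~\ref{alg:gen_case_alg} in which lines 6--8 are removed, so that $a_t = i_t$ at every outer iteration. Exactly as in Lemma~\ref{lem:modified_alg_reg}, the distributions of $(p_t)$ and the expected cumulative loss coincide between the relaxed and constrained versions: whenever the relaxed algorithm plays $a_t\in R$, the constrained algorithm plays the same action, and $\hat\ell_t(i)$ is nonzero only on the event $\{a_t = r_i\}\subset\{a_t\in R\}$, so $q_{t+1}$ is updated identically in either case. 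Hence it suffices to bound the expected regret of the relaxed version and to bound the switches of the constrained version separately.

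\noindent For the regret, I would run the standard exponential-weights potential argument. Setting $w_t(i)=\exp\bigl\{-\eta\sum_{s<t}\hat\ell_s(i)\bigr\}$ and $W_t=\sum_i w_t(i)$, the inequalities $\log x\le x-1$ and $e^{-x}\le 1-x+x^2/2$ give per step
\[
\log\tfrac{W_{t+1}}{w_{t+1}(i^*)} - \log\tfrac{W_t}{w_t(i^*)} \;\le\; \eta\Bigl(\hat\ell_t(i^*)-\sum_i q_t(i)\hat\ell_t(i)\Bigr) + \tfrac{\eta^2}{2}\sum_i q_t(i)\hat\ell_t(i)^2 .
\]
The key technical point is taking conditional expectations given $\mathcal{F}_t$. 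The indicator $\mathbb{I}(a_t=r_i)$ is $1$ only when $i_t=r_i\in R$, and on this event $r_{i_t}=r_i$ and $\tau_t=p_t(r_i)\tau$ is deterministic. Therefore
\[
\mathbb{E}\!\left[\hat\ell_t(i)\mid\mathcal{F}_t\right] \;=\; \sum_{j=t}^{t+\lfloor p_t(r_i)\tau\rfloor-1}\ell_j(i),\qquad \mathbb{E}\!\left[\hat\ell_t(i)^2\mid\mathcal{F}_t\right] \;\le\; \tau^2\,p_t(r_i),
\]
the second using boundedness of the losses. Using $q_t(i)\le p_t(i)/(1-\beta)$ and the identity $\sum_i p_t(i)p_t(r_i)=\mathbb{E}[\tau_t\mid\mathcal{F}_t]/\tau$, the quadratic term telescopes as
\[
\sum_t \tfrac{\eta^2}{2}\sum_i q_t(i)\mathbb{E}\!\left[\hat\ell_t(i)^2\mid\mathcal{F}_t\right] \;\le\; \tfrac{\eta^2\tau}{2(1-\beta)}\sum_t\mathbb{E}[\tau_t\mid\mathcal{F}_t] \;\le\; \tfrac{\eta^2\tau T}{1-\beta},
\]
since $\sum_t\tau_t\le 2T$ by the while-loop termination. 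Telescoping the potential inequality, dividing by $\eta$, and converting the $q_t$-weighted comparator into a $p_t$-weighted one via $p_t=(1-\beta)q_t+\beta u$ (which costs an additive $\beta T$ by boundedness of the losses, exactly as at the end of Theorem~\ref{thm:star_graph_main_app}) yields the bound $\tfrac{\log|V|}{\eta} + 2\eta\tau T + \beta T$.

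\noindent For the switch count I would mimic Lemma~\ref{lem:num_switches}. In the constrained algorithm a switch at round $t$ requires either $a_t\in R$ or $a_{t-1}\in R$, so the number of switches is at most $2\sum_t \mathbb{I}(a_t\in R) = 2\sum_{r\in R}\sum_t\mathbb{I}(i_t=r)$. For each fixed $r\in R$, on the event $\{i_t=r\}$ we have $r_{i_t}=r$ and $\tau_t=p_t(r)\tau$, so $\mathbb{E}[\mathbb{I}(i_t=r)\mid\mathcal{F}_t]\,p_t(r)\tau = \mathbb{E}[\tau_t\mathbb{I}(i_t=r)\mid\mathcal{F}_t]$; aggregating over $r$ and $t$ together with $\sum_t\tau_t\le 2T$ gives the bound $2T|R|/\tau$ on the expected number of switches.

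\noindent The main obstacle is the conditioning subtlety absent from the star-graph argument: here $\tau_t$ is $\mathcal{F}_t$-random (it is determined by the random $i_t$), yet it becomes deterministic on the event $\{a_t=r_i\}$ that makes $\hat\ell_t(i)$ nonzero. One must interpret each outer iteration as producing, for each $i$, a \emph{virtual} batch of length $p_t(r_i)\tau$, and verify that telescoping these virtual batches across $t$ recovers the true cumulative losses $\sum_{j=1}^T\ell_j(\cdot)$. The switch bound also requires a tighter accounting than the crude inequality $\tau_t\ge\beta\tau/|R|$ would afford, relying on the per-$r$ identity $\tau_t=p_t(r)\tau$ on $\{i_t=r\}$ to relate visits to $R$ directly to the time budget.
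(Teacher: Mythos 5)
Your regret analysis follows the paper's own route essentially step for step: pass to the relaxed algorithm with lines 6--7 removed (justified exactly as in Lemma~\ref{lem:modified_alg_reg}), run the exponential-weights potential argument, and control the quadratic term by conditioning on $a_{1:t-1}$, using that on the event $\{a_t=r_i\}$ one has $\tau_t=p_t(r_i)\tau$, so that $\mathbb{E}[\hat\ell_t(i)^2\mid\mathcal{F}_t]\le p_t(r_i)\tau^2$ and the weighted sum collapses to $\tau\,\mathbb{E}[\tau_t\mid\mathcal{F}_t]$, which is then summed against $\mathbb{E}[\sum_t\tau_t]\le 2T$. That part is correct and matches the paper, up to the same bookkeeping about virtual batch lengths that the paper also glosses over and that you at least flag explicitly.

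The gap is in the switch count. Your per-$r$ identity $\mathbb{E}[\mathbb{I}(i_t=r)\mid\mathcal{F}_t]\,p_t(r)\tau=\mathbb{E}[\tau_t\mathbb{I}(i_t=r)\mid\mathcal{F}_t]$ is true, but ``aggregating over $r$ and $t$'' does not yield $2T|R|/\tau$. Summing the identity over $r$ gives only $\tau\sum_r p_t(r)^2\le\mathbb{E}[\tau_t\mid\mathcal{F}_t]$, and the passage from $\sum_r p_t(r)^2$ to the quantity you actually need, $\sum_r p_t(r)$, is where the bound is lost: Cauchy--Schwarz gives $\sum_r p_t(r)\le\sqrt{|R|\,\mathbb{E}[\tau_t\mid\mathcal{F}_t]/\tau}$, and summing over the at most $O(T)$ outer iterations produces $O(T\sqrt{|R|/\tau})$, not $O(T|R|/\tau)$; with the intended parameters this is $|R|^{1/6}T^{5/6}$ rather than $|R|^{1/3}T^{2/3}$, so the loss is not cosmetic. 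The other natural aggregation, dividing by $p_t(r)\ge\beta/|R|$, gives $2T|R|/(\beta\tau)$, which is vacuous at $\beta=|R|/\tau$ --- a route you correctly disavow but do not replace. The obstruction is concrete: if one star carries almost all of its probability mass on a leaf while its revealing vertex retains only the exploration mass $\beta/|R|$, then $\tau_t$ is of order one on most iterations while $\sum_r p_t(r)$ can be much larger than $|R|\,\mathbb{E}[\tau_t\mid\mathcal{F}_t]/\tau$. You should also know that the paper's own derivation of this step is not airtight either (its chain passes from $\mathbb{E}[(\sum_r p_t(r))^2]$ to $\mathbb{E}[\sum_r\mathbb{I}(a_t=r)]$ via a conditional Jensen step applied in the wrong direction), so this is a genuinely delicate point rather than a routine aggregation; as written, your proposal does not close it.
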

\begin{proof}[Proof of Theorem~\ref{thm:bound_gen_case}]
First note that because of the condition $\beta\geq \frac{|R|}{\tau}$ each of the mini-batches $\floor{\tau_t}$ is at least $1$, since for any $r\in R$ we have $p_t(r) \geq \frac{\beta}{|R|} \geq \frac{1}{\tau}$, and thus the algorithm will terminate in at most $2T$ iterations. Next, similarly to Lemma~\ref{lem:modified_alg_reg}, we can analyze the regret of Algorithm~\ref{alg:gen_case_alg} by removing lines $6$ and $7$ when bounding the cumulative loss of the algorithm and then use lines $6$ and $7$ to guarantee that the algorithm does not switch too often. Let $w_{t+1}(i) =  w_t(i)\exp{-\eta_t\sum_{j=t}^{t+\floor{\tau_t} - 1} \mathbb{I}(a_t = r_i)\frac{\ell_j(i)}{p_t(r_i)}}$ and $W_t = \sum_{i\in V} w_t(i)$, so that $q_t(i) = \frac{w_t(i)}{W_t}$. Let $V_r$ be the subset of actions dominated by the vertex $r$. Let $i_t^*$ be the best action at time $t$ and let $L_{t,*} = \sum_{s=1}^t \hat\ell_s(i^*_t)$. We consider the difference $\log{\frac{W_{t+1}}{w_{t+1}(i^*_{t+1})}} - \log{\frac{W_{t}}{w_{t}(i^*_{t})}}$.

\begin{align*}
\log{\frac{W_{t+1}}{w_{t+1}(i^*_{t+1})}} - \log{\frac{W_{t}}{w_{t}(i^*_{t})}} &= \eta_t(L_{t+1,*} - L_{t,*})\\
&+ \log{\sum_{r \in R}\sum_{i \in V_r} q_t(i)\exp{-\eta_t\sum_{j=t}^{t+\floor{\tau_t} - 1} \mathbb{I}(a_t = r_i)\frac{\ell_j(i)}{p_t(r_i)}}}\\
&\leq \eta_t(L_{t+1,*} - L_{t,*}) - 1\\
&+ \sum_{r \in R}\sum_{i \in V_r} q_t(i)\exp{-\eta_t\sum_{j=t}^{t+\floor{\tau_t} - 1} \mathbb{I}(a_t = r_i)\frac{\ell_j(i)}{p_t(r_i)}}\\
&\leq \eta_t(L_{t+1,*} - L_{t,*}) - \eta_t \sum_{r \in R}\sum_{i \in V_r} q_t(i)\sum_{j=t}^{t+\floor{\tau_t} - 1} \mathbb{I}(a_t = r_i)\frac{\ell_j(i)}{p_t(r_i)}\\
&+ \frac{\eta_t^2}{2} \sum_{r \in R}\sum_{i \in V_r} q_t(i)\left(\sum_{j=t}^{t+\tau_t - 1} \mathbb{I}(a_t = r)\frac{\ell_j(i)}{p_t(r)}\right)^2,
\end{align*}
where the first inequality follows from the fact that $\log(x) \leq x-1$ for all $x\geq 0$ and the second inequality follows from the fact that $e^{-x} \leq 1 - x + x^2/2$, for all $x \geq 0$. Set $\eta_t = \eta$ and divide both sides by $\eta$. Shuffling terms around, taking expectation and noting that if one drops the floor function from the quadratic term it will only get larger we arrive at the following
\begin{equation}
\label{eq:diff_of_pot_gen}
\begin{aligned}
    &\mathbb{E}\left[\sum_{r\in R}\sum_{i \in V_r} q_t(i)\sum_{j=t}^{t+\floor{\tau_t} - 1} \mathbb{I}(a_t = r)\frac{\ell_j(i)}{p_t(r)} + L_{t+1,*} - L_{t,*}\right]\\
    &\leq \frac{1}{\eta} \mathbb{E}\left[\log{\frac{W_{t}}{w_{t}(i^*_{r^*})}} - \log{\frac{W_{t+1}}{w_{t+1}(i^*_{r^*})}}\right]\\
    +\frac{\eta}{2}&\mathbb{E}\left[\sum_{r\in R}\sum_{i \in V_r} q_t(i)\left(\sum_{j=t}^{t+\tau_t - 1} \mathbb{I}(a_t = r)\frac{\ell_j(i)}{p_t(r)}\right)^2\right].
\end{aligned}
\end{equation}
Consider the term on the LHS.
\begin{align*}
    &\mathbb{E}\left[\sum_{r\in R}\sum_{i \in V_r} q_t(i)\sum_{j=t}^{t+\floor{\tau_t} - 1} \mathbb{I}(a_t = r)\frac{\ell_j(i)}{p_t(r)} + L_{t+1,*} - L_{t,*}\right]\\
    &= \mathbb{E}\left[\sum_{r\in R}\sum_{i \in V_r} q_t(i)\sum_{j=t}^{t+\floor{\tau_t} - 1} \ell_j(i) + L_{t+1,*} - L_{t,*}\right],
\end{align*}
where in the last inequality we used that  $\ell_j(i) \leq 1$ for all $i \in V$. Now we consider the second term on the RHS of the inequality.
\begin{align*}
    &\mathbb{E}\left[\sum_{r\in R}\sum_{i \in V_r} q_t(i)\left(\sum_{j=t}^{t+\tau_t - 1} \mathbb{I}(a_t = r)\frac{\ell_j(i)}{p_t(r)} \right)^2\right]\\
    % =&\mathbb{E}\left[\mathbb{E}\left[\sum_{r\in R}\sum_{i \in V_r} q_t(i)\left(\sum_{j=t}^{t+\tau_t - 1} \left(\mathbb{I}(a_t = r)\frac{\ell_j(i)}{p_t(r)} - \mathbb{I}(a_t = r^*)\frac{\ell_j(i^*_{r^*})}{p_t(r^*)}\right)\right)^2| a_{1:{t-1}}\right]\right]\\
    % =&\mathbb{E}\left[\sum_{r\in R}\sum_{i \in V_r} q_t(i)\mathbb{E}\left[\left(\sum_{j=t}^{t+\tau_t - 1} \left(\mathbb{I}(a_t = r)\frac{\ell_j(i)}{p_t(r)} - \mathbb{I}(a_t = r^*)\frac{\ell_j(i^*_{r^*})}{p_t(r^*)}\right)\right)^2| a_{1:{t-1}}\right]\right]\\
    =&\mathbb{E}\left[\sum_{r\in R}\sum_{i \in V_r} q_t(i)\mathbb{E}\left[\frac{\mathbb{I}(a_t = r)}{p_t(r)^2}\left(\sum_{j=t}^{t+\tau_t - 1} \ell_j(i)\right)^2|a_{1:t-1}\right]\right]\\
    \leq&\mathbb{E}\left[\sum_{r\in R}\sum_{i \in V_r} q_t(i)\mathbb{E}\left[\frac{\mathbb{I}(a_t = r)}{p_t(r)^2}\tau_t^2|a_{1:t-1}\right]\right]
\end{align*}
Consider the term $\mathbb{E}\left[\frac{\mathbb{I}(a_t = r)}{p_t(r)^2}\tau_t^2|a_{1:t-1}\right]$. We have $a_t = r$ with probability $p_t(r)$ and so $\tau_t = p_t(r)\tau$. Otherwise we have $\frac{\mathbb{I}(a_t = r)}{p_t(r)^2}\tau_t^2 = 0$. Thus the RHS is bounded by
\begin{align*}
    &\mathbb{E}\left[\sum_{r\in R}\sum_{i \in V_r} q_t(i)\left(\sum_{j=t}^{t+\tau_t - 1} \mathbb{I}(a_t = r)\frac{\ell_j(i)}{p_t(r)}\right)^2\right]\\
    \leq&\mathbb{E}\left[\sum_{r\in R}\sum_{i \in V_r} q_t(i)\mathbb{E}\left[\frac{\mathbb{I}(a_t = r)}{p_t(r)^2}\tau_t^2|a_{1:t-1}\right]\right] =\mathbb{E}\left[\sum_{r\in R}\sum_{i \in V_r} q_t(i)p_t(r)\tau^2\right]\\
    =&\tau\mathbb{E}\left[\sum_{r\in R}p_t(r)\tau\prob{\tau_t = p_t(r)\tau} \right] = \tau\mathbb{E}[\tau_t].
\end{align*}
Summing the LHS and RHS of Equation~\ref{eq:diff_of_pot_gen} and using our respective bounds, we get:
\begin{align*}
    &\mathbb{E}\left[\sum_{t=1}^{T'}\sum_{r\in R}\sum_{i \in V_r} q_t(i)\sum_{j=t}^{t+\floor{\tau_t} - 1} \ell_j(i) - \sum_{j=t}^{t+\floor{\tau_t} - 1} \ell_j(i^*_{r^*})\right]\\
    \leq& \frac{\log{|V|}}{\eta} + \frac{\eta}{2}\tau\mathbb{E}\left[\sum_{t=1}^{T'} \tau_t\right] \leq \frac{\log{|V|}}{\eta} + \eta\tau T.
\end{align*}
Next we notice that the LHS is almost the expected regret of the algorithm, except we need to replace $q_t(i)$ by $p_t(i)$. This is done at the cost of an additional $\beta T$ term, since $q_t(r) \leq p_t(r) - \frac{\beta}{|R|}$ for $r\in R$. Finally we upper bound the number of times the algorithm switches by the number of times it samples a revealing arm which is equal to $\mathbb{E}\left[\sum_{t=1}^{T'} \sum_{r\in R} \mathbb{I}(a_t=r)\right]$. To bound this term we do the following 
\begin{align*}
   2T &\geq \mathbb{E}\left[\sum_{t=1}^{T'}\tau_t\right] = \mathbb{E}\left[\sum_{t=1}^{T'}\mathbb{E}\left[\tau_t|p_t\right]\right] = \mathbb{E}\left[\sum_{t=1}^{T'}\sum_{r\in R} p_t(r)\tau\sum_{i \in V_r}p_t(i)\right] \\
   &\geq \mathbb{E}\left[\sum_{t=1}^{T'}\sum_{r\in R} \tau p_t(r)^2\right] = \tau \mathbb{E}\left[\sum_{t=1}^{T'}\sum_{r\in R} p_t(r)^2\right] \geq \frac{\tau}{|R|} \mathbb{E}\left[\sum_{t=1}^{T'}\left(\sum_{r\in R} p_t(r)\right)^2\right]\\
   &\geq \frac{\tau}{|R|} \mathbb{E}\left[\sum_{t=1}^{T'}\left(\mathbb{E}\left[\sum_{r\in R} p_t(r)|a_{1:(t-1)}\right]\right)^2\right] = \frac{\tau}{|R|} \mathbb{E}\left[\sum_{t=1}^{T'}\left(\sum_{r\in R}\mathbb{I}(a_t=r)\right)^2\right]\\
   &=\frac{\tau}{|R|}\mathbb{E}\left[\sum_{t=1}^{T'}\sum_{r\in R}\mathbb{I}(a_t=r)\right],
\end{align*}
where the second inequality follows from the fact that $\sum_{i \in V_r}p_t(i) \geq p_t(r)$, the third inequality follows from the fact that $(\sum_{r\in R} p_t(r))^2 \leq |R|\sum_{r \in R}p_t(r)^2$ and the fourth inequality follows from Jensen's inequality for conditional expectations.
\end{proof}

\section{Policy Regret Bounds}
\label{app:pol_regret}
In this section we assume that we are provided with a feedback graph for losses with memory $m$. We restrict the feedback graph to only have vertices for repeated $m$-tuples of actions in $V$. In particular we can only observe additional feedback for losses of the type $\ell_t(a,a,\ldots,a)$, where $a\in V$. The algorithm for this setting is based on Algorithm~\ref{alg:gen_case_alg}. The feedback graph we provide to our policy regret algorithm is the same as for the $m$-memory bounded losses, however, each $m$-tuple vertex is replaced by a copy of a single action e.g. the vertex $(a,\ldots,a)$ is replaced by $a$. Next we split the stream of $T$ losses into mini-batches of size $m$ such that $\hat\ell_t(\cdot) = \frac{1}{m}\sum_{j=1}^m \ell_{(t-1)m+j}(\cdot)$. Now we would simply feed the sequence $(\hat\ell_t)_{t=1}^{T/m}$ to Algorithm~\ref{alg:gen_case_alg} if it were not for the constraint on the additional feedback. Suppose that between the $t$-th mini-batch and the $t+1$-st mini-batch Algorithm~\ref{alg:gen_case_alg} decides to switch actions so that $a_t\neq a_{t+1}$. In this case no additional feedback is available for $\hat\ell_{t+1}(a_{t+1})$ and the algorithm can not proceed as normal. To fix this minor problem, the provided feedback to Algorithm~\ref{alg:gen_case_alg} is that the loss of action $a_{t+1}$ was $0$ and all actions adjacent to $a_{t+1}$ also incurred $0$ loss. This modification can not occur more times than the number of switches Algorithm~\ref{alg:gen_case_alg} does. Since the expected number of switches is bounded by $O(\dnum(G)^{1/3}T^{2/3})$, intuitively the modification becomes benign to the total expected regret. Pseudocode for the above algorithm can be found in Algorithm~\ref{alg:pol_regret}.

\begin{algorithm}
\caption{Policy regret with side observations}
% \label{alg:pol_regret}
\begin{algorithmic}[1]
\REQUIRE{Feedback graph $G(V,E)$, learning rate $\eta$, mini-batch size $\tau$, where $\eta$ and $\tau$ are set as in Theorem~\ref{thm:switch_cost_main_upper}.}
\ENSURE{Action sequence $(a_t)_{t}$.}
\STATE Transform feedback graph $G$ from $m$-tuples to actions and initialize Algorithm~\ref{alg:gen_case_alg}.
\FOR{$t=1,\ldots,T/m$}
\STATE Sample action $a_t$ from $p_t$ generated by Algorithm~\ref{alg:gen_case_alg} and play it for the next $m$ rounds.
\IF{$a_{t-1} == a_t$}
\STATE Observe mini-batched loss $\hat\ell_t(a_t) = \frac{1}{m}\sum_{j=1}^m \ell_{(t-1)m+j}(a_t)$ and additional side observations. Feed mini-batched loss and additional side observations to Algorithm~\ref{alg:gen_case_alg}.
\ELSE 
\STATE Set $\hat\ell_t(a_t) = 0$ and set additional feedback losses to $0$. Feed losses to Algorithm~\ref{alg:gen_case_alg}.
\ENDIF
\ENDFOR
\end{algorithmic}
\end{algorithm}

\begin{theorem}
% \label{thm:pol_regret_bound}
The expected policy regret of Algorithm~\ref{alg:pol_regret} is bounded by $\tilde O(m^{1/3}\dnum(G)^{1/3} T^{2/3})$.
\end{theorem}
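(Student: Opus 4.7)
The plan is to reduce the $m$-memory bounded policy regret problem to a pseudo-regret bound for Algorithm~\ref{alg:gen_case_alg} applied to the mini-batched loss sequence $(\hat\ell_t)_{t=1}^{T/m}$, and then to account separately for the price paid at mini-batches where the player switches actions. Key to the reduction is the observation that, by the $m$-memory assumption, whenever the same action $a_t$ is played throughout mini-batch $t$ \emph{and} has also been played throughout mini-batch $t-1$, every loss $\ell_{(t-1)m+j}$ in that mini-batch equals $\ell_{(t-1)m+j}(a_t,\ldots,a_t)$. Consequently $\hat\ell_t(a_t)$ and the side observations fed to Algorithm~\ref{alg:gen_case_alg} are exactly what a non-adaptive adversary with loss function $\ell_t(\cdot,\ldots,\cdot)$ would have produced. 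In particular, the benchmark $\sum_{t=1}^T \ell_t(a,\ldots,a)$ equals $m\sum_{t=1}^{T/m} \hat\ell_t(a)$, so the comparator term aligns cleanly with the pseudo-regret of the inner algorithm.

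Under this reduction, I would invoke Theorem~\ref{thm:bound_gen_case} with horizon $T'=T/m$ and the parameters of Theorem~\ref{thm:switch_cost_main_upper} retuned to $T/m$, giving
\begin{align*}
\mathbb{E}\!\left[\sum_{t=1}^{T/m} \hat\ell_t(a_t) - \hat\ell_t(a)\right] \leq \tilde O\!\left(\dnum(G)^{\frac 1 3}\,(T/m)^{\frac 2 3}\right).
\end{align*}
Multiplying both sides by $m$ converts this into a bound on the cumulative true loss on the \emph{non-switching} mini-batches, contributing $\tilde O(m^{\frac 1 3}\dnum(G)^{\frac 1 3} T^{\frac 2 3})$ to the policy regret.

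Next I would handle the mini-batches at which a switch occurs. Each such switch has two distinct effects: first, the first $m$ rounds after the switch have contaminated memory and their losses may differ from $\ell(a_t,\ldots,a_t)$ by as much as $1$ per round, contributing at most $m$ to the true regret; second, Algorithm~\ref{alg:pol_regret} replaces $\hat\ell_t(a_t)$ and its side observations with $0$, which can shift the pseudo-regret bound by an additional $O(m)$ per switch once scaled back by $m$. By the last statement of Theorem~\ref{thm:bound_gen_case}, the expected number of switches incurred by Algorithm~\ref{alg:gen_case_alg} over $T/m$ rounds is $\tilde O(\dnum(G)^{\frac 1 3}(T/m)^{\frac 2 3})$, so the total expected switch contribution is
\begin{align*}
\tilde O\!\left(m\cdot \dnum(G)^{\frac 1 3}(T/m)^{\frac 2 3}\right) = \tilde O\!\left((m\dnum(G))^{\frac 1 3}\,T^{\frac 2 3}\right),
\end{align*}
which matches the order of the main term. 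Combining the two contributions yields the desired bound.

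The main technical obstacle I expect is verifying that the pseudo-regret guarantee of Theorem~\ref{thm:bound_gen_case} still applies after the $0$-loss substitution and the random, action-dependent corruption of a few entries of $\hat\ell_t$. Concretely, one needs to check that the exponential-weights and importance-weighted arguments behind Theorem~\ref{thm:bound_gen_case} are robust to occasional deterministic replacement of losses by $0$ (since the substituted $0$ is a valid loss, one can regard the substitution as simply changing the effective loss sequence, and then bound the difference between the two sequences by the number of switches times $m$). A secondary bookkeeping subtlety is that the number of mini-batches is itself a random stopping time depending on the adaptive schedule $\tau_t$, so the switch-count bound must be applied at the correct stopping time; this is already handled in the proof of Theorem~\ref{thm:bound_gen_case}, and the same argument carries over since mini-batches here have fixed length $m$ and only the outer Algorithm~\ref{alg:gen_case_alg} schedule is adaptive.
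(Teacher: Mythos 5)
Your proposal is correct and follows essentially the same route as the paper's proof: apply Theorem~\ref{thm:bound_gen_case} to the mini-batched sequence $(\hat\ell_t)_{t=1}^{T/m}$ over horizon $T/m$, charge the mini-batches where a switch occurs against the $\tilde O(\dnum(G)^{1/3}(T/m)^{2/3})$ bound on the expected number of switches, and rescale by $m$. The only cosmetic difference is that the paper folds your two per-switch effects into a single indicator term $\mathbb{I}(a_{t-1}\neq a_t)\frac{1}{m}\sum_{j=1}^m \ell_{(t-1)m+j}(a_t)$ and compares against the true benchmark via an inequality rather than your asserted equality, which is harmless since zeroing the comparator's observed losses only strengthens the bound.
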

\begin{proof}[Proof of Theorem~\ref{thm:pol_regret_bound}]
Theorem~\ref{thm:bound_gen_case} guarantees that
\begin{align*}
    \mathbb{E}\left[\sum_{t=1}^{T/m} \hat\ell_t(a_t) - \sum_{t=1}^{T/m}\hat\ell_t(a)\right] \leq \tilde O\left(\dnum(G) ^{1/3}(T/m)^{2/3}\right),
\end{align*}
for any action $a$. On the other hand we have 
\begin{align*}
    &\mathbb{E}\left[\sum_{t=1}^{T/m} \hat\ell_t(a_t) - \sum_{t=1}^{T/m}\hat\ell_t(a)\right] \leq  \mathbb{E}\left[\sum_{t=1}^{T/m} \hat\ell_t(a_t) - \sum_{t=1}^{T/m}\frac{1}{m}\sum_{j=1}^m \ell_{(t-1)m +j}(a)\right]\\
    =&\mathbb{E}\left[\sum_{t=1}^{T/m}\frac{1}{m}\sum_{j=1}^m \ell_{(t-1)m +j}(a_t) - \sum_{t=1}^{T/m}\frac{1}{m}\sum_{j=1}^m \ell_{(t-1)m +j}(a) - \sum_{t=1}^{T/m}\mathbb{I}(a_{t-1}\neq a_t)\frac{1}{m}\sum_{j=1}^m \ell_{(t-1)m +j}(a_t)\right].
\end{align*}
Combined with the regret bound, the above implies
\begin{equation}
\label{eq:pol_reg_switching}
    \frac{1}{m}\mathbb{E}[R(T)] \leq \tilde O\left(\dnum(G)^{1/3}(T/m)^{2/3}\right) + \mathbb{E}\left[\sum_{t=1}^{T/m}\mathbb{I}(a_{t-1}\neq a_t)\right].
\end{equation}
The second term in the right hand side bounded by the number of switches bound in Theorem~\ref{thm:switch_cost_main_upper} as 
\begin{align*}
\mathbb{E}\left[\sum_{t=1}^{T/m}\mathbb{I}(a_{t-1}\neq a_t)\right] \leq \tilde O(\dnum(G)^{1/3}(T/m)^{2/3}).
\end{align*}
Multiplying Inequality~\ref{eq:pol_reg_switching} by $m$ on both sides finishes the proof.
\end{proof}

\section{Lower Bound for Non-complete Graphs}
\label{app:lower_bound}
Before proceeding with the proof of Theorem~\ref{thm:star_graph_lower_bound}, we introduce the stochastic process defined in~\cite{dekel2014bandits}.

\paragraph{Stochastic process definition:}
We denote by $\xi_{1:T}$ a sequence of i.i.d. zero-mean Gaussian random variables with variance $\sigma^2$ and $\rho:[T]\rightarrow \{0\} \bigcup [T]$ the parent function, which assigns to $t\in[T]$ a parent $\rho(t) \in [T]$ with $\rho(t) < t$. The stochastic process $W_t$ associated with $\rho(t)$ is defined as
\begin{equation}
    \label{eq:stoch_proc_def}
    \begin{aligned}
            W_0 &=0\\
            W_t &= W_{\rho(t)} + \xi_t.
    \end{aligned}
\end{equation}
The set of ancestors of $t$ is the set $\rho^*(t) = \rho^*(\rho(t))\bigcup \{\rho(t)\}$ with $\rho^*(0) = \{\}$. The depth of $\rho$ is $d(\rho) = \max_{t\in[T]}|\rho^*(t)|$. The cut of $\rho$ is $cut(t) = \{s\in[T] : \rho(s) < t \leq s \}$ i.e. the set of rounds which are separated from their parent by $t$. The width of $\rho$ is defined as $\omega(\rho) = \max_{t\in[T]}|cut(t)|$. The specific random walk which~\cite{dekel2014bandits} consider has both depth and width logarithmic in $T$. In particular the parent function is defined as
\begin{equation}
    \rho(t) = t - 2^{\delta(t)}, \text{where }, \delta(t) = \max\{i\geq 0 : t \equiv 0 \text{ mod } 2^i\}
\end{equation}
Let us consider two examples of a stochastic processes defined by Equation~\ref{eq:stoch_proc_def}. The first one is just setting $\rho(t) = 0$, so that $W_t$ is just a standard Gaussian variable. The width of this process is just $T$ and its depth is $1$. While we have good concentration guarantees over the maximum value of $W_t$ uniformly over all $t \in [T]$, which is important for controlling the losses, it is very easy to gain information about actions $1$ and $2$ without switching. Indeed one can just first play $1$ for a sufficient number of iteration and then play $2$ for fixed number of iterations to be able, with high probability, to distinguish between the two losses. Now consider a Gaussian random walk where $\rho(t) = t-1$. In this case the cut is $1$ but the depth is $T$. It turns out that to distinguish between two processes with small width, we require that we observe both the processes at the same time (or times differing by a small amount). This is intuitively because of the large drift of the process that occurs between $W_t$ and $W_{t+k}$. We note that the simple Gaussian walk is not a good process for the losses, since its depth is too large for us to be able to control the size of the (unclipped) losses.

The feedback graph we work for the reset of this section is  $G(V,E)$, where $V = \{1,2,3\}$ and $E = \{(1,3),(2,3),(1,1),(2,2),(3,3)\}$ (see Figure~\ref{gr:sw_cost}).

\paragraph{Constructing the losses:} We consider the following adversarial sequence of losses. First sample an action uniformly at random from $\{1,2\}$. WLOG we condition on the event that the sampled action is $1$. Next set $\ell_t(3) = 1$, $\ell_t(2) = clip(W_t + \frac{1}{2})$, $\ell_t(1) = clip(W_t + \frac{1}{2} - \epsilon)$, where $clip(\alpha) = \min\{\max\{\alpha,0\},1\}$. The intuition behind our lower bound is very simple and holds for a general feedback graph. It is as follows: if we do not have a complete feedback graph then there are at least two actions which do not tell us anything about each other. We leverage this by selecting one of the two actions uniformly at random to be the \emph{best} action. If we play an action which is not $1$ or $2$ we incur constant regret in that turn but we can gain information about the losses of both $1$ and $2$. If we play $2$, then we do not learn anything about $1$ and if we play $1$ we do not learn anything about $2$. In these two cases the per round regret incurred is $\epsilon$, however, because of the loss construction, we need to switch between these actions to be able to distinguish them and thus we will incur regret from switching. Overall the loss construction together with the result in~\cite{dekel2014bandits} implies that to distinguish between $1$ and $2$ we need to observe the losses of both actions at the same time or switch between them at least $\tilde \Omega (T^{2/3})$ rounds. This is what we formally argue below.

 Let $Y_t$ be the observed loss vector associated with the action at time $t$, $a_t$, i.e. if $a_t = 2$ then $Y_t = W_t + \frac{1}{2}$, if $a_t = 1$ then $Y_t = W_t + \frac{1}{2} - \epsilon$ and if $a_t = 3$ then $Y_t = \begin{pmatrix} W_t + \frac{1}{2}\\W_t + \frac{1}{2} - \epsilon \end{pmatrix}$. We let $Y_0 = 1/2$. We let $\cQ_1$ be the probability measure on the $\sigma$-field $\cF$ generated by $\{Y_t\}_{t=0}^T$. Let $\cQ_0$ be the probability measure on the same $\sigma$-field if $\ell_t(1) = \ell_t(2) = clip(W_t + \frac{1}{2})$ i.e. there is no best action. In this case $Y_t = W_t + \frac{1}{2}$ for $a_t = 1$ or $a_t = 2$ and $Y_t = \begin{pmatrix} W_t + \frac{1}{2}\\W_t + \frac{1}{2}\end{pmatrix}$ if $a_t = 2$. Denote by $\tv{\cF}{\cQ_0}{\cQ_1}$ the total variational distance between $\cQ_0$ and $\cQ_1$ on the $\sigma$-field $\cF$. Let $\kl{\cQ_0}{\cQ_1}$ be the KL-divergence between $\cQ_0$ and $\cQ_1$. We now show that a sufficiently large number of switches between actions $1$ and $2$ or choosing action $3$ is required to distinguish between $\cQ_0$ and $\cQ_1$. As it was discussed above, the width of the process plays an important role, which is clarified by the lemma below. It essentially is an upper bound on the number of switches required to distinguish between $\cQ_0$ and $\cQ_1$.
\begin{lemma}
\label{lem:switching_lem}
Let $M$ be the number of times the player's strategy switched between actions $1$ and $2$. Let $N$ be the number of times the payer chose to play action $3$. Then $\tv{\cF}{\cQ_0}{\cQ_1} \leq \frac{\epsilon}{2\sigma}\sqrt{\omega(\rho)\mathbb{E}_{\cQ_0}[M+N]}$.
\end{lemma}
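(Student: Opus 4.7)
The plan is first to convert from total variation to KL divergence via Pinsker's inequality,
\[
\tv{\cF}{\cQ_0}{\cQ_1}\;\leq\;\sqrt{\tfrac{1}{2}\,\kl{\cQ_0}{\cQ_1}},
\]
so that it suffices to show $\kl{\cQ_0}{\cQ_1}\le\frac{\epsilon^2\,\omega(\rho)\,\mathbb{E}_{\cQ_0}[M+N]}{2\sigma^2}$. Squaring and plugging in then recovers exactly the prefactor $\epsilon/(2\sigma)$ claimed in the lemma.

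Next, I would expand the KL by the chain rule. Since the action-selection rule is identical under both measures, only the observation models contribute, giving
\[
\kl{\cQ_0}{\cQ_1}\;=\;\sum_{t=1}^{T}\mathbb{E}_{\cQ_0}\!\left[\kl{\cQ_0(Y_t\mid Y_{0:t-1},a_{1:t})}{\cQ_1(Y_t\mid Y_{0:t-1},a_{1:t})}\right].
\]
Conditional on the past, $Y_t$ is (multivariate) Gaussian under either measure with the same covariance determined solely by $W$'s dynamics, and with means that differ by a deterministic vector. The Gaussian KL identity reduces each per-round term to $(\Delta\mu_t)^2/(2\sigma_t^2)$ with $\sigma_t^2\ge\sigma^2$, since $W_t=W_{\rho(t)}+\xi_t$ and $\xi_t$ is independent of $\cF_{t-1}$.

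The heart of the argument is to show that $\Delta\mu_t$ vanishes on the ``uninformative'' rounds and is bounded by $\epsilon$ in absolute value otherwise. Concretely, when $a_t\in\{1,2\}$ and $a_{\rho(t)}=a_t$, past observations of the same non-revealing action under $\cQ_1$ shift the posterior mean of $W_t$ by exactly $\epsilon\,\mathbb{I}\{a_{\rho(t)}=1\}$, identical in magnitude and sign to the current-round observation shift, so the two $\epsilon$-shifts cancel and $\Delta\mu_t=0$. A nonzero $\Delta\mu_t$ with $|\Delta\mu_t|\le\epsilon$ appears only when either (i) $a_t=3$ (counted by $N$) or (ii) $a_t\in\{1,2\}$ but $a_{\rho(t)}\ne a_t$. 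On such rounds the per-round KL is at most $\epsilon^2/(2\sigma^2)$; on all other rounds it is zero.

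Finally, I would relate $\sum_{t}\mathbb{I}\{a_{\rho(t)}\ne a_t,\;a_t\in\{1,2\}\}$ to the number of switches $M$ via double counting: a switch at some round $s$ triggers $a_{\rho(t)}\ne a_t$ only for $t$ with $\rho(t)<s\le t$, and there are at most $|cut(s)|\le\omega(\rho)$ such indices. Summing over all switches yields an upper bound of $\omega(\rho)\,M$; adding the $N$ action-$3$ rounds gives $\omega(\rho)\,M+N\le\omega(\rho)(M+N)$. Combining with the per-round bound $\epsilon^2/(2\sigma^2)$, taking expectations, and chaining with Pinsker completes the proof. The main obstacle is verifying the $\Delta\mu_t=0$ cancellation, which requires a careful induction on the posterior representation of $W_t$ under both measures; the degenerate covariance for $a_t=3$ is then handled via a one-dimensional projection that extracts the effective $\epsilon$-gap so the scalar KL formula still applies.
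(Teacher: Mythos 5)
Your proof follows essentially the same route as the paper's: Pinsker's inequality, the chain rule for relative entropy exploiting that $Y_t$ given the past depends only on $Y_{\rho(t)}$ plus fresh noise $\xi_t$, the observation that the per-round KL vanishes when $a_t=a_{\rho(t)}\in\{1,2\}$ because the $\epsilon$-shift at the parent and at the current round cancel, and the cut-based count $\sum_t \mathbbm{1}_{A_t}\le \omega(\rho)(M+N)$. The only cosmetic differences are that the paper conditions on the ancestor observations $Y_{\rho^*(t)}$ rather than the full history and establishes the cancellation by a direct case-by-case computation of the conditional Gaussians instead of a posterior-mean induction.
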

\begin{proof}
Let $Y_{0:t}$ denote $(Y_0,Y_1,\ldots,Y_t)$ and whenever $Y_t$ is a vector, let $Y_t(i)$ be its $i$-th coordinate. We assume that the player is deterministic. By Yao's minimax principle this is without loss of generality. Thus we have that $a_t$ is a deterministic function of $Y_{0:t-1}$. Using the chain rule for relative entropy and by the construction of $W_t$, we have:
\begin{align*}
    \kl{\cQ_0(Y_{0:T})}{\cQ_1(Y_{0:T})} = \kl{\cQ_0(Y_0)}{\cQ_1(Y_1)} + \sum_{t=1}^T \kl{\cQ_0(Y_t|Y_{\rho*(t)})}{\cQ_1(Y_t|Y_{\rho*(t)})}.
\end{align*}
Let us consider the term $\kl{\cQ_0(Y_t|Y_{\rho*(t)})}{\cQ_1(Y_t|Y_{\rho*(t)})}$. First assume that $a_t = a_{\rho(t)} \neq 3$. Then $Y_t = \cN(Y_{\rho(t)},\sigma^2)$ under both $\cQ_0$ and $\cQ_1$. Next consider the case when $a_t = a_{\rho(t)} = 3$. In this case $Y_t = \cN\left(\begin{pmatrix}Y_{\rho(t)}(2)\\Y_{\rho(t)}(2)\end{pmatrix},\sigma^2\I_2\right)$ under $\cQ_0$ and $Y_t = \cN\left(\begin{pmatrix}Y_{\rho(t)}(2)  - \epsilon\\Y_{\rho(t)}(2) \end{pmatrix},\sigma^2\I_2\right)$ under $\cQ_1$. If $a_t \neq a_{\rho(t)}$ we have 6 options:
\begin{enumerate}
    \item $a_{\rho(t)} = 3$
    \begin{enumerate}
        \item $a_t=1$, in this case $Y_t = \cN( Y_{\rho(t)}(2), \sigma^2)$ under $\cQ_0$ and $Y_t= \cN(Y_{\rho(t)}(2) - \epsilon,\sigma^2)$ under $\cQ_1$;
        \item $a_t = 2$ in this case $Y_t = \cN( Y_{\rho(t)}(2), \sigma^2)$ under $\cQ_0$ and $Y_t= \cN(Y_{\rho(t)}(2),\sigma^2)$ under $\cQ_1$;
    \end{enumerate}
    \item $a_{\rho(t)} = 1$
    \begin{enumerate}
        \item $a_t=3$, in this case $Y_t = \cN\left( \begin{pmatrix}Y_{\rho(t)}\\Y_{\rho(t)}\end{pmatrix}, \sigma^2\I_2\right)$ under $\cQ_0$ and $Y_t = \cN\left( \begin{pmatrix}Y_{\rho(t)} \\Y_{\rho(t)}+\epsilon \end{pmatrix}, \sigma^2\I_2\right)$ under $\cQ_1$;
        \item $a_t = 2$ in this case $Y_t = \cN(Y_{\rho(t)}, \sigma^2)$ under $\cQ_0$ and $Y_t= \cN(Y_{\rho(t)} + \epsilon,\sigma^2)$ under $\cQ_1$;
    \end{enumerate}
    \item $a_{\rho(t)} = 2$
    \begin{enumerate}
        \item $a_t=3$, in this case $Y_t = \cN\left( \begin{pmatrix}Y_{\rho(t)}\\Y_{\rho(t)}\end{pmatrix}, \sigma^2\I_2\right)$ under $\cQ_0$ and $Y_t = \cN\left( \begin{pmatrix}Y_{\rho(t)}-\epsilon\\Y_{\rho(t)}\end{pmatrix}, \sigma^2\I_2\right)$ under $\cQ_1$;
        \item $a_t = 1$ in this case $Y_t = \cN(Y_{\rho(t)}, \sigma^2)$ under $\cQ_0$ and $Y_t= \cN(Y_{\rho(t)} - \epsilon,\sigma^2)$ under $\cQ_1$.
    \end{enumerate}
\end{enumerate}
Thus we have
\begin{align*}
    \kl{\cQ_0(Y_t|Y_{\rho*(t)})}{\cQ_1(Y_t|Y_{\rho*(t)})} &= \cQ_0(a_t = a_{\rho(t)} = 3) \kl{\cN(0,\sigma^2)}{\cN(-\epsilon,\sigma^2)}\\
    &+ \cQ_0(a_{\rho(t) = 3},a_t = 1)\kl{\cN(0,\sigma^2)}{\cN(-\epsilon,\sigma^2)}\\
    &+ \cQ_0(a_{\rho(t) = 1},a_t = 3)\kl{\cN(0,\sigma^2)}{\cN(\epsilon,\sigma^2)}\\
    &+ \cQ_0(a_{\rho(t) = 1},a_t = 2)\kl{\cN(0,\sigma^2)}{\cN(\epsilon,\sigma^2)}\\
    &+ \cQ_0(a_{\rho(t) = 2},a_t = 3)\kl{\cN(0,\sigma^2)}{\cN(-\epsilon,\sigma^2)}\\
    &+ \cQ_0(a_{\rho(t) = 2},a_t = 1)\kl{\cN(0,\sigma^2)}{\cN(-\epsilon,\sigma^2)}\\
    &= \frac{\epsilon^2}{2\sigma^2}\cQ_0(A_t),
\end{align*}
where $A_t$ is the event that either action $3$ was played at round $t$ or there were odd number of switches between actions $1$ and $2$. Let $N$ denote the random number of times action $3$ was played and let $M$ denote the random number of switches between action $1$ and action $2$. Let $S_{1:M}$ denote the random sequence of times during which there was a switch. Then we have 
\begin{align*}
    \sum_{t=1}^T \mathbbm{1}_{A_t} \leq \sum_{r=1}^{M}\sum_{t \in \text{cut}(S_r)}\mathbbm{1}_{A_t} + N\leq \omega(\rho)(M+N),
\end{align*}
where $\text{cut}(t)$ and $\omega(\rho)$ are defined in~\cite{dekel2014bandits}. Thus
\begin{align*}
    \kl{\cQ_0(Y_t|Y_{\rho*(t)})}{\cQ_1(Y_t|Y_{\rho*(t)})} \leq \frac{\epsilon^2\omega(\rho)}{2\sigma^2}\mathbb{E}_{\cQ_0}[M+N].
\end{align*}
Pinsker's inequality that $\tv{\cF}{\cQ_0}{\cQ_1} \leq \frac{\epsilon}{2\sigma}\sqrt{\omega(\rho)\mathbb{E}_{\cQ_0}[M+N]}$
\end{proof}
Next we show that, because of the depth of the random walk, we are able to say that with high probability most of the non-clipped losses will be equal to the clipped losses. The implications of this result are two-fold. First the regret incurred on the non-clipped versions is close to the regret incurred on the clipped version. Secondly, we are able to say that loss of action $3$ is worse by a constant from the losses of actions $1$ and $2$ often enough, so that we also incur constant regret when playing action $3$ as compared to the other two actions. Let $\ell'_t$ denote the non-clipped version of $\ell_t$ and define 
\begin{align*}
    R' &= \sum_{t=1}^T \ell'_t(a_t) + M - \min_{a \in \cA}\sum_{t=1}^T \ell'_t(a)\\
    R &= \sum_{t=1}^T \ell_t(a_t) + M - \min_{a \in \cA}\sum_{t=1}^T \ell_t(a)
\end{align*}
Lemma 4 in~\cite{dekel2014bandits} compares $R'$ to $R$
\begin{lemma}
\label{lem:lem4_dek}
For $T \geq 6$, $\mathbb{E}[R] \geq \mathbb{E}[R'] - \epsilon T/6$.
\end{lemma}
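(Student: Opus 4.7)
The plan is to bound $\mathbb{E}[R' - R] \leq \epsilon T/6$ by exploiting the concentration of the multi-scale random walk $W_t$. Writing
\begin{align*}
R' - R = \sum_{t=1}^T \bigl(\ell'_t(a_t) - \ell_t(a_t)\bigr) + \min_{a \in V}\sum_{t=1}^T \ell_t(a) - \min_{a \in V}\sum_{t=1}^T \ell'_t(a),
\end{align*}
I would first control both terms pointwise by the amount of clipping that occurs each round. Since $\ell'_t(3) = \ell_t(3) = 1$, and for $i \in \{1,2\}$ the clipping error is zero whenever $|W_t| \leq 1/2 - \epsilon$, a direct case analysis yields $|\ell'_t(i) - \ell_t(i)| \leq (|W_t| + \epsilon - 1/2)_+$ for every $i \in V$. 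Combined with the elementary inequality $|\min_a f(a) - \min_a g(a)| \leq \max_a |f(a) - g(a)|$, this gives
\begin{align*}
|R' - R| \leq 2 \sum_{t=1}^T (|W_t| + \epsilon - 1/2)_+.
\end{align*}

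Next, I would use the depth bound $d(\rho) = O(\log T)$: since $W_t$ is a sum of at most $d(\rho)$ independent $\cN(0,\sigma^2)$ increments, $W_t \sim \cN(0, \tau_t^2)$ with $\tau_t^2 \leq d(\rho)\sigma^2$. For the choice of $\sigma$ used in the main lower bound construction---specifically, $\sigma$ small enough that $\sigma \sqrt{d(\rho)} \ll 1/2 - \epsilon$---standard Gaussian tail bounds give an exponentially small upper bound on $\mathbb{E}[(|W_t| + \epsilon - 1/2)_+]$. Summing over $t \in [T]$ and choosing constants carefully yields $\mathbb{E}[|R' - R|] \leq \epsilon T/6$ for $T \geq 6$, which is the desired inequality.

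The main obstacle is balancing the choice of $\sigma$: the same parameter $\sigma$ appears both here (where small $\sigma$ helps by suppressing clipping) and in Lemma~\ref{lem:switching_lem} (where small $\sigma$ hurts by shrinking the total variation distance between $\cQ_0$ and $\cQ_1$). The resolution is to pick $\sigma = \Theta(1/\sqrt{\log T})$, which is the standard choice of \cite{dekel2014bandits} and which makes both bounds hold simultaneously. A minor additional point is that the argument above actually gives a two-sided bound on $\mathbb{E}[|R' - R|]$, which is stronger than the one-sided statement of the lemma; this slack is comfortable since the lemma only requires controlling the $R' - R$ direction.
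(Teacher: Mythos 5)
The paper does not actually prove this lemma: it is imported verbatim as Lemma~4 of \cite{dekel2014bandits} (the text preceding the statement says exactly this), so the only in-paper "proof" is a citation. Your reconstruction follows the same route as the original argument, and its core is sound: the switching term $M$ cancels in $R'-R$; the per-round clipping error is bounded pointwise by $(|W_t|+\epsilon-1/2)_+$ uniformly over actions, which correctly sidesteps the dependence of $a_t$ on the walk; the difference of minima is handled by $|\min_a f(a)-\min_a g(a)|\le\max_a|f(a)-g(a)|$; and the expectation is controlled via $W_t\sim\cN(0,\tau_t^2)$ with $\tau_t^2\le d(\rho)\sigma^2$. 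Bounding $\mathbb{E}[|R'-R|]$ is indeed (slightly more than) enough for the one-sided claim.

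The genuine error is in your final paragraph, where you assert that $\sigma=\Theta(1/\sqrt{\log T})$ is the right choice and that it makes this lemma work. With $d(\rho)=\Theta(\log T)$ that choice gives $d(\rho)\sigma^2=\Theta(1)$, so $W_t$ has constant-order standard deviation, $\mathbb{P}(|W_t|>1/2-\epsilon)$ is bounded below by a constant, and $\mathbb{E}[(|W_t|+\epsilon-1/2)_+]=\Theta(1)$ per round rather than $o(\epsilon)$; since $\epsilon=\Theta(T^{-1/3}/\log{T})$, the conclusion $\mathbb{E}[|R'-R|]\le\epsilon T/6$ then fails badly. What your own tail bound actually requires is $d(\rho)\sigma^2=O(1/\log T)$, i.e.\ $\sigma=O(1/\log T)$. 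That is the choice the paper makes ($\sigma=1/\log{T}$ in the proof of Theorem~\ref{thm:star_graph_lower_bound}, which is what produces the $e^{-9\log{T}/2}$ tail used in Lemma~\ref{lem:rprime_lower_bound}) and is also the choice in \cite{dekel2014bandits}; it is compatible with Lemma~\ref{lem:switching_lem} because the $1/\sigma$ there is absorbed into the $1/\log T$ loss in $\epsilon$. With $\sigma$ corrected to $\Theta(1/\log T)$, your argument goes through.
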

We now lower bound $\mathbb{E}[R']$.
\begin{lemma}
\label{lem:rprime_lower_bound}
Let $\cQ_2$ be the conditional distribution induced by sampling the best action to be equal to $2$. Then
\begin{align*}
    \mathbb{E}[R'] \geq \frac{\epsilon T}{2} - \frac{\epsilon T}{2}(\tv{\cF}{\cQ_0}{\cQ_1} + \tv{\cF}{\cQ_0}{\cQ_2}) + \mathbb{E}\left[M + \frac{N}{7}\right]
\end{align*}
\end{lemma}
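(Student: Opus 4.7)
The plan is to lower-bound $\mathbb{E}[R']$ by averaging over the uniform prior on the best action in $\{1,2\}$, applying a change of measure between $\cQ_1$ and $\cQ_2$ (then using the triangle inequality for total variation to express the correction in terms of $\tv{\cF}{\cQ_0}{\cQ_1}$ and $\tv{\cF}{\cQ_0}{\cQ_2}$), and handling the action-$3$ contribution via concentration of the Gaussian walk $(W_t)$.

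By symmetry, $\mathbb{E}[R'] = \tfrac{1}{2}(\mathbb{E}_{\cQ_1}[R'] + \mathbb{E}_{\cQ_2}[R'])$. Direct expansion of the per-round unclipped differences gives
\[
R' \;=\; \epsilon\, T_{j(a^*)} \;+\; \sum_{t:\, a_t = 3}\!\bigl(\tfrac{1}{2} + \epsilon - W_t\bigr) \;+\; M_{\mathrm{tot}},
\]
where $a^*\in\{1,2\}$ is the best action, $j(a^*)$ is the other element of $\{1,2\}$, $T_k$ counts rounds with $a_t = k$, $N$ counts rounds with $a_t = 3$, and $M_{\mathrm{tot}}$ is the total number of switches. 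For the first term, since each $T_k \in [0,T]$ is a deterministic function of $Y_{0:T}$, the standard bound $|\mathbb{E}_{\cQ_1}[T_k] - \mathbb{E}_{\cQ_2}[T_k]| \leq T\,\tv{\cF}{\cQ_1}{\cQ_2}$ gives $\mathbb{E}_{\cQ_1}[T_2] + \mathbb{E}_{\cQ_2}[T_1] \geq \mathbb{E}_{\cQ_2}[T_1 + T_2] - T\,\tv{\cF}{\cQ_1}{\cQ_2}$, together with the symmetric inequality. Averaging both, using $T_1 + T_2 = T - N$, and invoking $\tv{\cF}{\cQ_1}{\cQ_2} \leq \tv{\cF}{\cQ_0}{\cQ_1} + \tv{\cF}{\cQ_0}{\cQ_2}$ yields
\[
\tfrac{\epsilon}{2}\bigl(\mathbb{E}_{\cQ_1}[T_2] + \mathbb{E}_{\cQ_2}[T_1]\bigr) \;\geq\; \tfrac{\epsilon T}{2} \;-\; \tfrac{\epsilon}{2}\mathbb{E}[N] \;-\; \tfrac{\epsilon T}{2}\bigl(\tv{\cF}{\cQ_0}{\cQ_1} + \tv{\cF}{\cQ_0}{\cQ_2}\bigr).
\]

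For the action-$3$ contribution I would introduce the event $E = \{\max_t W_t \leq 5/14\}$, whose probability depends only on $(W_t)$ and is therefore identical under $\cQ_0,\cQ_1,\cQ_2$. Calibrating $\sigma$ against the depth $d(\rho) = O(\log T)$ via Gaussian tail bounds makes $\Pr[E^c]$ polynomially small in $T$. On $E$, $\tfrac{1}{2} + \epsilon - W_t \geq \tfrac{1}{7} + \epsilon$ pointwise, so $\sum_{t:a_t=3}(\tfrac{1}{2}+\epsilon-W_t) \geq (\tfrac{1}{7} + \epsilon)N$; on $E^c$ the sum is bounded below by $-T\max_t|W_t|$, whose expectation is negligible by Cauchy--Schwarz with the sub-Gaussian moments of $\max_t|W_t|$ against $\Pr[E^c]^{1/2}$. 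Thus $\mathbb{E}\bigl[\sum_{t:a_t=3}(\tfrac{1}{2}+\epsilon-W_t)\bigr] \geq (\tfrac{1}{7} + \epsilon)\mathbb{E}[N] - o(1)$, and combining with the $-\tfrac{\epsilon}{2}\mathbb{E}[N]$ deficit from the previous step leaves a remainder of $\tfrac{1}{7}\mathbb{E}[N] + \tfrac{\epsilon}{2}\mathbb{E}[N] \geq \mathbb{E}[N/7]$.

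Adding $\mathbb{E}[M_{\mathrm{tot}}] \geq \mathbb{E}[M]$, since switches between actions $1$ and $2$ form a subset of all switches, completes the proof. The main obstacle is the concentration step: because $\mathbbm{1}(a_t=3)$ and $W_t$ are dependent (the player's decision at $t$ depends on past observations which carry information about $W_t$ through the random-walk structure), one cannot directly exploit $\mathbb{E}[W_t]=0$ inside the sum; the uniform high-probability bound on $W_t$ circumvents this issue, at the cost of a mild restriction on $\sigma$ that is in any case imposed by the concentration requirements elsewhere in the lower-bound construction.
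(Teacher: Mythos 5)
Your proof is correct and follows essentially the same route as the paper's: lower-bound the per-round regret of the suboptimal arm in $\{1,2\}$ by $\epsilon$ and of arm $3$ by a constant via concentration of $W_t$, then remove the dependence on the unknown best arm through a total-variation change of measure. The only cosmetic differences are that you route the change of measure through $\tv{\cF}{\cQ_1}{\cQ_2}$ and the triangle inequality (the paper compares each $\cQ_i$ to $\cQ_0$ directly, and counts plays of the best arm $B_\chi$ rather than of the suboptimal arm), and that you control the arm-$3$ term with a uniform high-probability bound on $\max_t W_t$ where the paper bounds the expected number of exceedances of $5/6$; both yield the same $\mathbb{E}\left[M + \frac{N}{7}\right]$ term up to the same constant-factor slack already present in the paper's own argument.
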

\begin{proof}
First let us consider the amount of regret the player incurs for picking action $3$ N times. To do this we consider the number of times $1/2+W_t > 5/6$. The expected number of times this occurs is 
\begin{align*}
    \mathbb{E}\sum_{t=1}^T \mathbbm{I}(1/2+W_t > 5/6) \leq \sum_{t=1}^T \mathbb{P}\left(|W_t| + \frac{1}{2} \geq \frac{5}{6}\right) \leq \sum_{t=1}^T e^{-\frac{1}{d(\rho)\sigma^2}} \leq \sum_{t=1}^T e^{-\frac{9\log{T}}{2}} \leq 1.
\end{align*}
Thus in expectation the regret for picking action $2$ N times is at least $(1/6 + \epsilon)(N-1)$. Since we choose $\epsilon = \tilde\Theta(T^{-1/3})$, for sufficiently large $T$ we have that in expectation the regret for picking action $3$ N times is at least $(N-1)/6$. Let $\chi$ denote the uniform random variable over actions $\{1,2\}$, which picks the best action in the beginning of the game. Denote by $B_i$ the number of times action $i$ was played. Then $\mathbb{E}[R'] \geq \mathbb{E}[\epsilon(T - N - B_\chi) + M + (N-1)/6]$ (this is a lower bound since $M$ only tracks the switches between actions $1$ and $2$, so the switches to and from action $2$ are left out). Thus we have
\begin{align*}
    \mathbb{E}[R'] &= \frac{\mathbb{E}[\epsilon(T - N - B_1) + M + (N-1)/6 | \chi=1]+ \mathbb{E}[\epsilon(T - N - B_2) + M + (N-1)/6 | \chi=2]}{2}\\
    &= \epsilon T - \frac{\epsilon}{2}\left(\mathbb{E}_{\cQ_1}[B_1] + \mathbb{E}_{\cQ_2}[B_0]\right) + \mathbb{E}\left[M + \frac{N-1}{6} - \epsilon N\right].
\end{align*}
Since $\epsilon = \tilde\Theta(T^{-1/3})$ we have $\frac{N-1}{6} - \epsilon N \leq \frac{N}{7}$.
Consider $\mathbb{E}_{\cQ_1}[B_1]$, we have
\begin{align*}
    \mathbb{E}_{\cQ_1}[B_1] -  \mathbb{E}_{\cQ_0}[B_1]= \sum_{t=1}^T(\cQ_1(a_t = 1) - \cQ_0(a_t = 1)) \leq T\tv{\cF}{\cQ_0}{\cQ_1}.
\end{align*}
A similar inequality holds for $\mathbb{E}_{\cQ_2}[N_0]$ and thus we get
\begin{align*}
    \mathbb{E}_{\cQ_1}[B_1] + \mathbb{E}_{\cQ_2}[B_0] &\leq T(\tv{\cF}{\cQ_0}{\cQ_1} + \tv{\cF}{\cQ_0}{\cQ_2}) + \mathbb{E}_{\cQ_0}[B_0+ B_1] \\
    &\leq T(\tv{\cF}{\cQ_0}{\cQ_1} + \tv{\cF}{\cQ_0}{\cQ_2}) + T - \mathbb{E}_{\cQ_0}[N].
\end{align*}
The above implies
\begin{align*}
    \mathbb{E}[R'] \geq \frac{\epsilon T}{2} - \frac{\epsilon T}{2}(\tv{\cF}{\cQ_0}{\cQ_1} + \tv{\cF}{\cQ_0}{\cQ_2}) + \mathbb{E}\left[M + \frac{N}{7}\right] + \frac{\epsilon}{2}\mathbb{E}_{\cQ_0}[N].
\end{align*}
% The difference $\frac{\epsilon}{2}\mathbb{E}_{\cQ_0}[N] - \mathbb{E}[\epsilon N]$ can now be bounded by considering the conditional expectation for $\chi = 0$ and $\chi = 1$ to get 
% \begin{align*}
% \frac{\epsilon}{2}\mathbb{E}_{\cQ_0}[N] - \mathbb{E}[\epsilon N] \geq -\frac{\epsilon T}{2}(\tv{\cF}{\cQ_0}{\cQ_1} + \tv{\cF}{\cQ_0}{\cQ_2}) - \frac{\epsilon}{2}\mathbb{E}_{\cQ_0}[N].
% \end{align*}
\end{proof}
Putting the above two lemmas together, we are able to show the following result.
\begin{theorem}
% \label{thm:star_graph_lower_bound}
For any non-complete feedback graph $G$, there exists a sequence of losses on which any algorithm $\cA$ in the informed setting incurs expected regret at least
\begin{align*}
  R_T(\cA) \geq \Omega\left(\frac{T^{2/3}}{\log{T}}\right).
\end{align*}
\end{theorem}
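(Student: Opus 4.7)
The plan is to chain together Lemma~\ref{lem:switching_lem}, Lemma~\ref{lem:lem4_dek}, and Lemma~\ref{lem:rprime_lower_bound}, and then pick the free parameters $\epsilon$ and $\sigma^2$ to balance the resulting terms. First, applying Lemma~\ref{lem:lem4_dek} together with Lemma~\ref{lem:rprime_lower_bound} yields
\begin{align*}
\mathbb{E}[R] \;\geq\; \frac{\epsilon T}{3} \;-\; \frac{\epsilon T}{2}\bigl(\tv{\cF}{\cQ_0}{\cQ_1} + \tv{\cF}{\cQ_0}{\cQ_2}\bigr) \;+\; \mathbb{E}\!\left[M + \tfrac{N}{7}\right].
\end{align*}
Note $\cQ_2$ is symmetric to $\cQ_1$ under swapping actions $1$ and $2$, so Lemma~\ref{lem:switching_lem} applies to both $\tv{\cF}{\cQ_0}{\cQ_1}$ and $\tv{\cF}{\cQ_0}{\cQ_2}$, each upper bounded by $\frac{\epsilon}{2\sigma}\sqrt{\omega(\rho)\mathbb{E}_{\cQ_0}[M+N]}$. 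Substituting gives
\begin{align*}
\mathbb{E}[R] \;\geq\; \frac{\epsilon T}{3} \;-\; \frac{\epsilon^2 T}{2\sigma}\sqrt{\omega(\rho)\,\mathbb{E}_{\cQ_0}[M+N]} \;+\; \frac{1}{7}\,\mathbb{E}_{\cQ_0}[M+N].
\end{align*}

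Next, I would instantiate the multi-scale random walk of \cite{dekel2014bandits} for which $d(\rho) = O(\log T)$ and $\omega(\rho) = O(\log T)$, and choose the noise scale $\sigma^2 = \Theta(1/\log T)$ so that the $e^{-1/(d(\rho)\sigma^2)}$ tail estimate used inside Lemma~\ref{lem:rprime_lower_bound} goes through and the non-clipped losses coincide with the clipped ones except on an $O(1)$-sized set in expectation. With this choice, $\frac{\sqrt{\omega(\rho)}}{\sigma} = O(\log T)$, so the middle term is at most $\frac{\epsilon^2 T \log T}{2}\sqrt{\mathbb{E}_{\cQ_0}[M+N]}$ (absorbing absolute constants).

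To finish, I would apply AM-GM $a\sqrt{X} \leq \lambda X + a^2/(4\lambda)$ with $a = \frac{\epsilon^2 T \log T}{2}$ and $\lambda = \frac{1}{14}$, which gives
\begin{align*}
\mathbb{E}[R] \;\geq\; \frac{\epsilon T}{3} \;-\; c_1\, \epsilon^4 T^2 \log^2 T \;+\; \frac{1}{14}\,\mathbb{E}_{\cQ_0}[M+N],
\end{align*}
for an absolute constant $c_1$. Dropping the non-negative $M+N$ term and setting $\epsilon = c_2/(T^{1/3}\log^{2/3} T)$ for a sufficiently small constant $c_2 > 0$ makes the second term at most half the first, yielding $\mathbb{E}[R] \geq \Omega(T^{2/3}/\log T)$. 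Since the losses in the general non-complete graph $G$ only differ from the three-vertex subgraph $\{v_1, v_2, v_3\}$ by setting all other losses to one (which can only increase the regret of any algorithm, because the best action remains in $\{v_1, v_2\}$), this lower bound transfers to $G$.

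The main obstacle is the bookkeeping of constants in the case analysis above: one must ensure the parameter choice simultaneously (i) keeps $\sigma^2 d(\rho)$ bounded so that tail probabilities used in Lemma~\ref{lem:rprime_lower_bound} are $O(1/T)$, (ii) keeps $\epsilon$ large enough for $\frac{\epsilon T}{3}$ to yield the claimed rate, and (iii) keeps $\epsilon$ small enough that the quadratic penalty from the TV bound is a lower-order term. The logarithmic slack in $\omega(\rho)$ and $d(\rho)$ forces the additional $1/\log T$ factor in the final rate.
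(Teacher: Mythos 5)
Your overall architecture matches the paper's proof (same three lemmas, same quadratic-in-$\sqrt{\mathbb{E}_{\cQ_0}[M+N]}$ optimization, same final reduction to the $\{v_1,v_2,v_3\}$ subgraph), but there is a genuine gap at the point where you substitute the TV bound. Lemma~\ref{lem:rprime_lower_bound} gives you the positive term $\mathbb{E}\left[M+\frac{N}{7}\right]$ under the \emph{mixture} measure (averaging over which of $v_1,v_2$ is best), whereas Lemma~\ref{lem:switching_lem} controls the TV distances in terms of $\mathbb{E}_{\cQ_0}[M+N]$, the expectation under the no-gap measure $\cQ_0$. In your second display you silently replace the former by $\frac{1}{7}\mathbb{E}_{\cQ_0}[M+N]$; these are expectations of the same random variable under different loss distributions, under which the algorithm behaves differently, so the replacement is not free. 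It also cannot be sidestepped by dropping the term, because the AM-GM step needs the $+\frac{1}{7}\mathbb{E}_{\cQ_0}[M+N]$ term to absorb $\lambda\,\mathbb{E}_{\cQ_0}[M+N]$; without it, $\mathbb{E}_{\cQ_0}[M+N]$ can be as large as $T$ and the middle term swamps $\epsilon T/3$. The paper closes this gap by writing $\mathbb{E}_{\cQ_0}[M+N/7]-\mathbb{E}[M+N/7]\leq\frac{\epsilon T}{2}\left(\tv{\cF}{\cQ_0}{\cQ_1}+\tv{\cF}{\cQ_0}{\cQ_2}\right)$, which is valid only when $M+N/7$ is uniformly bounded by $\epsilon T$, and then handles the complementary case by a separate truncation argument: modify any strategy to stop switching once $M+N/7$ exceeds the budget and play a low-loss action thereafter, at the cost of at most a constant factor in regret. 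Both the measure change and this case analysis are missing from your proposal.

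A secondary error: your choice $\sigma^2=\Theta(1/\log T)$ does not make the clipping argument go through. With $d(\rho)=\Theta(\log T)$ the tail bound $e^{-1/(d(\rho)\sigma^2)}$ is then a constant, so a constant fraction of rounds would be clipped, breaking Lemma~\ref{lem:lem4_dek} and the claim that action $v_3$ incurs constant instantaneous regret. You need $\sigma^2=O(1/\log^2 T)$ (the paper takes $\sigma=1/\log T$), which changes the balance to $\epsilon=\Theta(T^{-1/3}/\log T)$ and yields the stated $\Omega(T^{2/3}/\log T)$ rather than the $T^{2/3}/\log^{2/3}T$ your accounting suggests.
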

\begin{proof}[Proof of Theorem~\ref{thm:star_graph_lower_bound}]
First assume that the event $M+N/7 > \epsilon T$ does not occur on losses generated from $\cQ_0$ or $\cQ_i$. This implies $\cQ_0(M+N/7 > \epsilon T) = \cQ_i (M + N/7> \epsilon T) = 0$. Then
\begin{align*}
    \mathbb{E}_{\cQ_0}[M+N/7] - \mathbb{E}[M+N/7] &= \frac{\mathbb{E}_{\cQ_0}[M+N/7] - \mathbb{E}_{\cQ_1}[M+N/7] + \mathbb{E}_{\cQ_0}[M+N/7] - \mathbb{E}_{\cQ_2}[M+N/7]}{2}\\
    &\leq \frac{\epsilon T}{2} (\tv{\cF}{\cQ_0}{\cQ_1} + \tv{\cF}{\cQ_0}{\cQ_2}).
\end{align*}
The above, together with Lemma~\ref{lem:rprime_lower_bound} implies
\begin{align*}
    \mathbb{E}[R'] \geq \frac{\epsilon T}{2} - \epsilon T(\tv{\cF}{\cQ_0}{\cQ_1} + \tv{\cF}{\cQ_0}{\cQ_2}) + \mathbb{E}_{\cQ_0}\left[M + \frac{N}{7}\right].
\end{align*}
Applying Lemma~\ref{lem:lem4_dek} now gives
\begin{align*}
    \mathbb{E}[R] \geq \frac{\epsilon T}{3} - \epsilon T(\tv{\cF}{\cQ_0}{\cQ_1} + \tv{\cF}{\cQ_0}{\cQ_2}) + \mathbb{E}_{\cQ_0}\left[M + \frac{N}{7}\right].
\end{align*}
On the other hand we can bound $(\tv{\cF}{\cQ_0}{\cQ_1} + \tv{\cF}{\cQ_0}{\cQ_2})/2$ by Lemma~\ref{lem:switching_lem} as 
\begin{align*}
    (\tv{\cF}{\cQ_0}{\cQ_1} + \tv{\cF}{\cQ_0}{\cQ_2})/2 \leq \frac{\epsilon}{\sigma\sqrt{2}}\sqrt{\mathbb{E}_{\cQ_0}[M+N]\log{T}}.
\end{align*}
This implies
\begin{align*}
     \mathbb{E}[R] \geq \frac{\epsilon T}{3} - \frac{\sqrt{2}\epsilon^2 T}{\sigma}\sqrt{\mathbb{E}_{\cQ_0}[M+N]\log{T}} + \mathbb{E}_{\cQ_0}\left[M + \frac{N}{7}\right].
\end{align*}
Let $x = \sqrt{\mathbb{E}_{\cQ_0}\left[M + N\right]}$. Then we have
\begin{align*}
     \mathbb{E}[R] \geq \frac{\epsilon T}{3} - \frac{\sqrt{2}\epsilon^2 T\sqrt{\log{T}}}{\sigma}x + \frac{x^2}{7}.
\end{align*}
The quadratic $\frac{x^2}{7} - \frac{\sqrt{2}\epsilon^2 T\sqrt{\log{T}}}{\sigma}x$ has minimum $-\frac{7\log{T}\epsilon^4T^2}{2\sigma^2}$. We set $\epsilon = c \frac{1}{T^{1/3}\log{T}}$ for a constant $c$ to be determined later. We then have
\begin{align*}
    \mathbb{E}[R] \geq \frac{c T^{2/3}}{3\log{T}} - \frac{7c^4}{2}\frac{T^{2/3}}{\log{T}^3\sigma^2}.
\end{align*}
Set $\sigma = \frac{1}{\log{T}}$. The above implies
\begin{align*}
    \mathbb{E}[R] \geq \frac{T^{2/3}}{\log{T}}\left(\frac{c}{3} - \frac{7c^4}{2}\right).
\end{align*}
Choosing $c = \frac{1}{42^{1/3}}$ gives $\frac{c}{3} - \frac{7c^4}{2} \geq \frac{1}{16}$.

Suppose there is some strategy for which $M+N/7 \geq c\frac{T^{2/3}}{\log{T}}$ occurs. Let this strategy have regret $R$. We change the strategy in the following way. Keep track of $M+N/7$ and the moment it exceeds $c\frac{T^{2/3}}{\log{T}}$ pick an action which has had loss smaller than $5/6$. If there is no such action, pick any action and play it until the end of the game. With probability at least $1/T$ we know that such an action exists and that it was set according to the stochastic process construction. Thus the regret of the new strategy $R^*$ is bounded by $\mathbb{E}[R^*] \leq \mathbb{E}[R] + (1-1/T)\epsilon T + 1/T\times T \leq 2\mathbb{E}[R] + 1$. Since the lower bound holds for $\mathbb{E}[R^*]$ the proof is complete.
\end{proof}

\section{Lower Bound for Disjoint Union of Star Graphs}
\label{app:lower_bound_union}
Let $G$ be the graph which is a union of star graphs. Let $R$ be the set of revealing vertices for the star graphs. We denote by $V_i$ the set of vertices associated with the star graph with revealing vertex $v_i$. First for each star graph we sample an \emph{active} vertex uniformly at random from its leaves. Next we sample the best vertex uniformly at random from the set of active vertices. We set the loss of the best vertex to be $clip(W_t + 1/2 - \epsilon)$ and the loss of all other active vertices to $clip(W_t+1/2)$. For any star graph consisting of a single vertex, we treat the vertex as a leaf. The following theorem follows as an easy reduction from the proof of~\cite{dekel2014bandits}.

\begin{theorem}
% \label{thm:union_star_graph_lower_bound}
The expected regret of any algorithm $\cA$ on a disjoint union of star graphs is lower bounded as follows:
\begin{align*}
  R_T(\cA) \geq \Omega\left( \frac{\dnum(G)^{1/3} T^{2/3}}{\log{T}}\right).
\end{align*}
\end{theorem}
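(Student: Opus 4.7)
The plan is to adapt the proof template of Theorem~\ref{thm:star_graph_lower_bound} to an $|I|$-arm setting, where $I$ is the independent set of active vertices picked one per star component, so that $|I| = \dnum(G)$. The structural property enabled by the construction is that since the active vertices lie in disjoint star components, no two of them share any neighbor in $G$. Consequently, at any round $t$, the action played can reveal the random-walk loss of at most one vertex in $I$. Information therefore partitions across the $|I|$ active vertices in a manner analogous to a pure bandit feedback structure on $|I|$ arms, making the Dekel--Ding--Koren--Peres $|V|^{1/3}T^{2/3}$ lower bound applicable with $|V|$ replaced by $|I| = \dnum(G)$.

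Concretely, I would introduce the null measure $\cQ_0$, under which every active vertex has loss $\operatorname{clip}(W_t+1/2)$, and, for each $j$, the measure $\cQ_j$ under which the best action is the $j$-th active vertex $i_j$. Repeating the chain-rule computation of Lemma~\ref{lem:switching_lem} on the $\sigma$-field generated by the player's observations yields $\kl{\cQ_0}{\cQ_j} \leq \tfrac{\epsilon^2}{2\sigma^2}\,\mathbb{E}_{\cQ_0}[N_j]$, where $N_j$ counts rounds $t$ at which exactly one of $t$ and its stochastic parent $\rho(t)$ observes $i_j$. Expanding $N_j \leq \omega(\rho)\, S_j$ with $S_j$ the number of toggles into or out of the observation set of $i_j$, Pinsker's inequality gives $\tv{\cF}{\cQ_0}{\cQ_j} \leq \tfrac{\epsilon}{\sigma}\sqrt{\omega(\rho)\,\mathbb{E}_{\cQ_0}[S_j]/2}$. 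The non-shared-neighbor property is what drives the counting: each round contributes to $S_j$ for at most one $j$, so $\sum_j S_j \leq 2(M+1)$ where $M$ is the total number of switches performed by the player.

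Next, I would adapt Lemma~\ref{lem:rprime_lower_bound} to the $|I|$-arm case. The losses outside $I$ are identically $1$, while under $\cQ_0$ the active-vertex losses concentrate near $1/2$ by the same Gaussian tail bound used previously (since $\sigma = 1/\log T$ and $d(\rho) = O(\log T)$). Hence every round spent on a non-active vertex incurs $\Omega(1)$ expected instantaneous regret, so at the cost of a constant factor we may restrict to strategies that only play vertices in $I$. Averaging over the uniform choice of $\chi$ in $I$ yields
\begin{align*}
\mathbb{E}[R'] \;\geq\; \epsilon T\,\tfrac{|I|-1}{|I|} \;-\; \tfrac{\epsilon T}{|I|}\sum_{j=1}^{|I|} \tv{\cF}{\cQ_0}{\cQ_j} \;+\; \mathbb{E}_{\cQ_0}[M],
\end{align*}
and Cauchy--Schwarz applied to the total-variation sum produces $\sum_j \tv{\cF}{\cQ_0}{\cQ_j} \leq \tfrac{\epsilon}{\sigma}\sqrt{\omega(\rho)\,|I|\,\mathbb{E}_{\cQ_0}[M]}$. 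Combining with Lemma~\ref{lem:lem4_dek} to pass from $R'$ to $R$, and optimizing the resulting quadratic in $\sqrt{\mathbb{E}[M]}$ by choosing $\epsilon = c\,|I|^{1/3}/(T^{1/3}\log T)$ and $\sigma = 1/\log T$, recovers the claimed $\Omega(\dnum(G)^{1/3}T^{2/3}/\log T)$ bound, modulo the usual trick (as at the end of Theorem~\ref{thm:star_graph_lower_bound}) for strategies that push $M$ above the regret budget.

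The main obstacle is verifying the width argument in the multi-vertex setting: one must show that the cut structure of the multi-scale random walk $\rho$ still controls $N_j$ by $\omega(\rho) S_j$ when the ``observation event'' for $i_j$ is the compound event that the played action lies in $\{i_j\}\cup\{r_j\}$ (with $r_j$ the revealing vertex of $i_j$'s star), and that switches between arbitrary actions induce at most one toggle of observation status per affected star, so that the sum $\sum_j S_j$ is cleanly bounded by the total switch count $M$. Once these bookkeeping steps are established, the remainder of the argument is a direct $|I|$-arm extension of the single-active-arm analysis already carried out in Theorem~\ref{thm:star_graph_lower_bound}.
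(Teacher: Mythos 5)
Your proposal is correct in substance, but it does not follow the route the paper designates as \emph{the} proof of this theorem. The paper's primary argument is a three-line reduction: since each revealing vertex $r_j$ has a \emph{unique} active neighbor $i_j$ and observes nothing else random, any strategy on $G$ can be converted to a strategy on the independent set $I$ alone by replacing every play of $r_j$ with a play of $i_j$; the converted strategy observes exactly the same random-walk values, switches no more often, and incurs pointwise smaller loss, so the $\tilde\Omega(|I|^{1/3}T^{2/3})$ bandit-with-switching-costs lower bound of Dekel et al.\ applies verbatim with $|I|=\dnum(G)$. What you propose instead is to re-run the whole KL/total-variation machinery of Theorem~\ref{thm:star_graph_lower_bound} in the $|I|$-arm setting. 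The paper actually records this second route too, as a supplementary ``counting argument'' (Lemmas~\ref{lem:tv_upper_star_graphs}--\ref{lem:rprime_lower_bound_gen2}), and your bookkeeping matches it: the no-shared-neighbor property gives at most one observed active vertex per round, each switch toggles the observation status of at most two stars so $\sum_j S_j\leq 2(M+1)$, concavity/Cauchy--Schwarz averages the per-arm TV bounds, and the choice $\epsilon = c\,|I|^{1/3}/(T^{1/3}\log T)$, $\sigma = 1/\log T$ closes the quadratic. The reduction buys brevity and rigor for free by citing a known result; the counting argument buys a template that generalizes beyond disjoint star unions (the paper uses it, with the quantity $\phi(G)$, to attack general graphs), which is presumably why the paper includes both. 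One soft spot in your write-up: the claim that one may ``restrict to strategies that only play vertices in $I$ at the cost of a constant factor'' because non-active vertices cost $\Omega(1)$ per round is not by itself a valid justification --- plays of $r_j$ also buy information, so you either need the replacement argument above (which is exactly the paper's reduction) or you must keep the count $N$ of revealing-vertex plays in both the information bound and the regret decomposition and absorb it via the $+N/7$ term, as the paper does in Lemma~\ref{lem:rprime_lower_bound_gen2}. Since your ``main obstacle'' paragraph already tracks the compound observation set $\{i_j\}\cup\{r_j\}$, the second option is available to you and the gap is presentational rather than substantive.
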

\begin{proof}[Proof of Theorem~\ref{thm:union_star_graph_lower_bound}]
Let $\cI$ be the set of all possible ways to sample a set of active vertices. Let $\mathbb{E}_i$ be the expectation conditioned on the event that the set of active vertices indexed by $i \in \cI$ is sampled in the beginning of the game. Consider the subgraph induced by the active vertices $I$ and all of their neighbors $R$. Suppose that there exists a player's strategy such that $\mathbb{E}_i[R] \leq o\left(\frac{\dnum(G)^{1/3} T^{2/3}}{\log{T}}\right)$. We claim this strategy implies a regret upper bound for bandits with switching costs of the order $o\left(\frac{\dnum(G)^{1/3} T^{2/3}}{\log{T}}\right)$. We convert the player's strategy over $I \bigcup R$ to a strategy over $I$. For every time that $a_t \in R$ is played, we replace $a_t$ by the \emph{unique} neighbor of $a_t$ in $I$. This updated strategy's regret is at most the regret of the original strategy and thus by our assumption it has regret at most $o\left(\frac{\dnum(G)^{1/3} T^{2/3}}{\log{T}}\right) = \left(\frac{|I|^{1/3} T^{2/3}}{\log{T}}\right)$. This is in contradiction with the result of~\cite{dekel2014bandits} since the subgraph induced by $I$ is precisely modeling bandit feedback and the losses of actions in $I$ are exactly constructed as in~\cite{dekel2014bandits}. Thus we have $\mathbb{E}[R] \geq \frac{1}{|\cI|} \sum_{i \in \cI} \mathbb{E}_i[R] = \tilde\Omega\left(\frac{\dnum(G)^{1/3} T^{2/3}}{\log{T}}\right)$.
\end{proof}

Even though the above theorem is a trivial consequence of the result in~\cite{dekel2014bandits} it can also be proved in another way. Let $I$ denote the set of conditional distributions induced by the observed losses, where the conditioning is with respect to the random sampling of vertices as described in the beginning of the section. The general idea of the complicated proof is to count the number of distributions which each strategy of the player gains information about. For example a strategy which switches between two revealing vertices $v_i$ and $v_j$ will gain information about $deg(v_i)deg(v_j)$ distributions. Now the lower bound follows from a careful counting of the number of distributions for which we gain information by switching between revealing vertices. This counting argument can be generalized beyond union of star graphs, by considering an appropriate pair of minimal dominating/maximal independent sets. We leave a detailed argument for future work.

\subsection{Counting Argument for Theorem~\ref{thm:union_star_graph_lower_bound}}
Let $\cI$ denote the set of all possible ways to sample active vertices. The cardinality of this set is $|\cI| = \prod_{v_i\in R}deg(v_i)$. Denote by $\cQ_0^i$ the conditional distribution generated by the observed losses if all losses for active vertices indexed by $i\in \cI$ were set to $clip(W_t + 1/2)$. Denote by $\cQ_j^i$ the conditional distribution generated by the observed losses when active vertex $j$ is chosen to be the best given the active vertices are indexed by $i\in \cI$. Let $M_j^i$ denote the random variable counting the number of times the player switched from and to an action adjacent to $j$. Let $N_j^i$ denote the random variable counting the number of times the player played an action adjacent to $j$.

\begin{lemma}
\label{lem:tv_upper_star_graphs}
For all $i\in \cI$ and $j\in [|R|]$ it holds that $\tv{\cF}{\cQ_0^i}{\cQ_j^i} \leq \frac{\epsilon}{2\sigma}\sqrt{\omega(\rho)\mathbb{E}_{\cQ_0^i}[M_j^i + N_j^i]}$.
\end{lemma}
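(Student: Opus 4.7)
The plan is to adapt the argument of Lemma~\ref{lem:switching_lem}, with $j$ playing the role of the distinguished active vertex and $N(j) = \{j, r_j\}$ playing the role of the set of actions whose observations carry information about $j$. By Yao's minimax principle I may assume the player is deterministic, so $a_t$ is a function of $Y_{0:t-1}$. Using the chain rule for relative entropy and the tree structure of $W_t$, write
\begin{align*}
  \kl{\cQ_0^i(Y_{0:T})}{\cQ_j^i(Y_{0:T})} = \sum_{t=1}^T \kl{\cQ_0^i(Y_t \mid Y_{\rho^*(t)})}{\cQ_j^i(Y_t \mid Y_{\rho^*(t)})}.
\end{align*}
Since $\cQ_0^i$ and $\cQ_j^i$ agree on every coordinate of $Y_t$ except possibly the one encoding $\ell_t(j)$ (which is shifted by $-\epsilon$ under $\cQ_j^i$), the conditional distribution of $Y_t$ given $Y_{\rho^*(t)}$ is identical under the two measures off the event $\{a_t \in N(j)\}$; in particular, plays of revealing vertices of star graphs other than $j$'s, or of active vertices different from $j$, contribute no divergence.

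On the event $\{a_t \in N(j)\}$, I would perform the analogue of the six-case split in the proof of Lemma~\ref{lem:switching_lem}, taking $j$ to play the role of the best action $1$, $r_j$ to play the role of the common neighbor $3$, and any action outside $N(j)$ to play the role of action $2$. When $a_t = r_j$ the observation $Y_t$ is multi-dimensional but, aside from the $j$-coordinate, all of its entries are the deterministic constant $1$ under both measures; the KL therefore decomposes and reduces to the one-dimensional Gaussian KL $\epsilon^2/(2\sigma^2)$. The outcome of the case analysis is that the per-round conditional KL is bounded by $\frac{\epsilon^2}{2\sigma^2}\mathbbm{1}_{A_t}$, where $A_t$ is the event that either $a_t \in N(j)$ in an ancestor configuration that does not cancel the $\epsilon$-offset, or $t$ lies in the cut of a switch between $N(j)$ and $V\setminus N(j)$.

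The last step is the same width-based counting used in Lemma~\ref{lem:switching_lem}: each of the $M_j^i$ switches involving $N(j)$ can flag at most $\omega(\rho)$ rounds inside its cut, and each of the $N_j^i$ plays of a vertex in $N(j)$ contributes at most one round directly, giving
\begin{align*}
  \sum_{t=1}^T \mathbbm{1}_{A_t} \leq \omega(\rho)(M_j^i + N_j^i).
\end{align*}
Taking expectation under $\cQ_0^i$ and combining the previous display yields $\kl{\cQ_0^i}{\cQ_j^i} \leq \frac{\epsilon^2 \omega(\rho)}{2\sigma^2}\mathbb{E}_{\cQ_0^i}[M_j^i + N_j^i]$, and Pinsker's inequality then gives the stated total variation bound. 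The main obstacle is verifying that the multi-dimensional observation produced by playing the revealing vertex $r_j$ cleanly decomposes so that only the $j$-coordinate contributes to the divergence; once that reduction is in place, the rest of the argument is a direct transcription of the three-vertex case.
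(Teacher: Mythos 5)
Your proof is correct and follows essentially the same route as the paper's: the paper's one-line proof simply invokes the general-graph Lemma~\ref{lem:gen_lower_bound} (itself established by the same KL chain rule, case analysis, and width-based cut counting that you transcribe from Lemma~\ref{lem:switching_lem}) and observes that $|I^*_i|\phi(G_i)=1$ for a disjoint union of star graphs, which is exactly the structural fact you use when you note that only $\{j, r_j\}$ reveals $\ell_t(j)$ and that no vertex observes two active vertices simultaneously. One intermediate sentence is imprecise --- the conditional law of $Y_t$ given $Y_{\rho^*(t)}$ is \emph{not} identical under the two measures off the event $\{a_t\in N(j)\}$, since a divergence also arises when $a_{\rho(t)}\in N(j)$ but $a_t\notin N(j)$ --- yet your event $A_t$ and the cut-of-switch counting already cover that reverse direction (such a round necessarily lies in the cut of a switch into or out of $N(j)$, counted by $M_j^i$), so the final bound is unaffected.
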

\begin{proof}
Fix $i\in \cI$. Repeat the proof of Lemma~\ref{lem:gen_lower_bound}. Due to the construction of the losses we have $|I^*_i|\phi(G_i) = 1$, where $G_i$ is the induced subraph of $G$ by the active vertices and the revealing set $R$ and $I^*_i$ is the set of active vertices. The result follows.
\end{proof}

Let $M_i$ denote the random variable measurable with respect to the draw of $i \in \cI$ which counts the total number of switches. Similarly let $N_i$ count the total number of times a revealing vertex of degree at least $2$ was played. 
\begin{lemma} The following holds
\begin{align*}
\frac{1}{|R||\cI|}\sum_{i \in \cI}\sum_{j\in [|R|]}\tv{\cF}{\cQ_0^i}{\cQ_j^i} \leq \frac{\epsilon}{\sigma\sqrt{2|R|}}\sqrt{\frac{\omega(\rho)}{|\cI|}\sum_{i\in \cI}\mathbb{E}_{\cQ_0^i}[M_i + N_i]}.
\end{align*}
\end{lemma}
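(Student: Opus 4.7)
The plan is to start from Lemma~\ref{lem:tv_upper_star_graphs}, which gives a pointwise bound on each $\tv{\cF}{\cQ_0^i}{\cQ_j^i}$ in terms of $\mathbb{E}_{\cQ_0^i}[M_j^i+N_j^i]$, and then average over $(i,j)$ using Jensen's inequality. Specifically, I would write
\begin{align*}
\frac{1}{|R||\cI|}\sum_{i\in\cI}\sum_{j\in[|R|]}\tv{\cF}{\cQ_0^i}{\cQ_j^i}
\;\leq\; \frac{\epsilon}{2\sigma}\cdot\frac{1}{|R||\cI|}\sum_{i,j}\sqrt{\omega(\rho)\,\mathbb{E}_{\cQ_0^i}[M_j^i+N_j^i]},
\end{align*}
and then apply concavity of $\sqrt{\cdot}$ over the $|R||\cI|$ terms to move the averaging inside the radical, obtaining a factor of $\sqrt{\frac{\omega(\rho)}{|R||\cI|}\sum_{i,j}\mathbb{E}_{\cQ_0^i}[M_j^i+N_j^i]}$.

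The remaining step, which is really the crux of the argument, is to relate the double sum $\sum_{i,j}\mathbb{E}_{\cQ_0^i}[M_j^i+N_j^i]$ to $\sum_i\mathbb{E}_{\cQ_0^i}[M_i+N_i]$. Here I would use that the feedback graph is a \emph{disjoint} union of star graphs, so every vertex lies in exactly one star graph. Consequently, a single switch at time $t$ from action $a$ to action $b$ contributes to $M_j^i$ for at most two indices $j$ (namely the center of $a$'s star graph and the center of $b$'s star graph), yielding $\sum_{j}M_j^i\leq 2M_i$. Similarly, a single play of a revealing vertex belongs to exactly one star graph, so $\sum_{j}N_j^i\leq 2N_i$ (with equality for the indexing that matches Lemma~\ref{lem:tv_upper_star_graphs}). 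Combining, $\sum_j\mathbb{E}_{\cQ_0^i}[M_j^i+N_j^i]\leq 2\,\mathbb{E}_{\cQ_0^i}[M_i+N_i]$.

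Plugging this back in yields
\begin{align*}
\frac{1}{|R||\cI|}\sum_{i,j}\tv{\cF}{\cQ_0^i}{\cQ_j^i}
\;\leq\; \frac{\epsilon}{2\sigma}\sqrt{\frac{2\,\omega(\rho)}{|R||\cI|}\sum_{i}\mathbb{E}_{\cQ_0^i}[M_i+N_i]}
\;=\; \frac{\epsilon}{\sigma\sqrt{2|R|}}\sqrt{\frac{\omega(\rho)}{|\cI|}\sum_i \mathbb{E}_{\cQ_0^i}[M_i+N_i]},
\end{align*}
where the final equality just pulls $\sqrt{2/|R|}$ outside and combines it with $1/(2\sigma)$. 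The main obstacle is really only the bookkeeping in the counting step: one has to convince oneself that when the $i$-th star graph's switches/plays are aggregated across the index $j$, nothing is counted more than twice, which in turn relies crucially on disjointness of the star graphs. Once that is nailed down, the rest is a mechanical combination of Lemma~\ref{lem:tv_upper_star_graphs} with Jensen's inequality.
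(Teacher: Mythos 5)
Your proposal is correct and follows essentially the same route as the paper: invoke the pointwise bound of Lemma~\ref{lem:tv_upper_star_graphs}, average and push the sum inside the square root by concavity, and then use disjointness of the star graphs to bound the double-counting. The only cosmetic difference is that the paper observes $\sum_{j} N_j^i \leq N_i$ (each revealing-vertex play is counted exactly once) rather than your looser $2N_i$, but since $2M_i + N_i \leq 2(M_i+N_i)$ both accountings yield the identical final constant.
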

\begin{proof}
Notice that conditioned on the draw of $i\in \cI$ we have $\sum_{j \in [|R|]} N_i^j \leq N_i$. This happens because there is only one revealing vertex adjacent to the best vertex for every $\cQ_i^j$, i.e., the revealing vertex indexed by $j \in [|R|]$. Similarly we have $\sum_{j \in [|R|]} M_i^j \leq 2M_i$, where the constant two appears because we have counted each switch twice -- once from action $j$ and once to action $j$. Using Lemma~\ref{lem:tv_upper_star_graphs} with concavity of the square root finishes the proof.
\end{proof}

The above lemma was easy to prove because we did not have two vertices which are dominated simultaneously by two different neighbors in $R$. This allowed us to count very easily the number of times we might have over-count $N_i$ for two different choices of the best action. We were also lucky that it was impossible to gain information about the best action proportional to the degree of a revealing vertex. For a general graph both of these events can happen and the counting argument would have to be more careful. Indeed we expect to see a factor similar to $\phi(G)$, which appeared in Lemma~\ref{lem:tv_upper_gen}, however $G$ would be replaced by an appropriate subgraph.

\begin{lemma}
\label{lem:rprime_lower_bound_gen2}
The following holds
\begin{align*}
    \mathbb{E}[R'] \geq \frac{\epsilon T}{2} - \frac{\epsilon T}{|\cI||R|} \sum_{i\in \cI}\sum_{j \in [|R|]}\tv{\cF}{\cQ_0^i}{\cQ_j^i}+ \frac{1}{|\cI|}\sum_{i\in \cI}\mathbb{E}_i\left[M_i + \frac{N_i}{7}\right] 
\end{align*}
\end{lemma}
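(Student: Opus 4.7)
The plan is to mimic the proof of Lemma~\ref{lem:rprime_lower_bound}, replacing the single binary conditioning on the best action (previously $\chi \in \{1,2\}$) with a double conditioning: first on the active-vertex assignment $i \in \cI$, then on the best-vertex index $j \in [|R|]$. Since both are drawn uniformly and independently, one writes $\mathbb{E}[R'] = \tfrac{1}{|\cI||R|}\sum_{i,j}\mathbb{E}_{\cQ_j^i}[R']$, so the task reduces to a per-measure lower bound on $\mathbb{E}_{\cQ_j^i}[R']$ followed by averaging.

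For each fixed $(i,j)$, decompose the $T$ rounds by the type of action played (best vertex, non-best active vertex, or non-active vertex) and note that the unclipped per-round regret is $0$, exactly $\epsilon$, and $\tfrac{1}{2} + \epsilon - W_t$ respectively, since every non-active vertex has loss $1$ by construction. Letting $B_j^i$ count plays of the best vertex and $N_i^{\text{all}}$ count all non-active plays, the concentration bound $\mathbb{E}[|\{t : W_t > 1/3\}|] \leq 1$ that underlies Lemma~\ref{lem:rprime_lower_bound} gives
\begin{align*}
\mathbb{E}_{\cQ_j^i}[R'] \geq \mathbb{E}_{\cQ_j^i}\!\left[M_i + \epsilon(T - N_i^{\text{all}} - B_j^i) + \tfrac{N_i^{\text{all}}-1}{6}\right] \geq \mathbb{E}_{\cQ_j^i}\!\left[M_i + \epsilon(T - B_j^i) + \tfrac{N_i}{7}\right] - O(1),
\end{align*}
where the second inequality uses $N_i^{\text{all}} \geq N_i$ together with the same arithmetic simplification $\tfrac{N-1}{6} - \epsilon N \geq \tfrac{N}{7}$ valid for $\epsilon = \tilde\Theta(T^{-1/3})$ as in the original proof.

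The averaging step mirrors Lemma~\ref{lem:rprime_lower_bound}. A change-of-measure bound $\mathbb{E}_{\cQ_j^i}[B_j^i] \leq \mathbb{E}_{\cQ_0^i}[B_j^i] + T\,\tv{\cF}{\cQ_0^i}{\cQ_j^i}$ applied to the $\cF$-measurable $[0,T]$-valued statistic $B_j^i$, combined with the deterministic observation $\sum_j B_j^i \leq T$ (so that $\tfrac{1}{|R|}\sum_j \mathbb{E}_{\cQ_0^i}[B_j^i] \leq T/|R|$), converts the $\epsilon\, \mathbb{E}_{\cQ_j^i}[B_j^i]$ contribution into $\epsilon T/|R|$ plus the total-variation term. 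Collecting everything yields
\begin{align*}
\mathbb{E}[R'] \geq \epsilon T\!\left(1 - \tfrac{1}{|R|}\right) - \frac{\epsilon T}{|\cI||R|}\sum_{i,j}\tv{\cF}{\cQ_0^i}{\cQ_j^i} + \frac{1}{|\cI|}\sum_i \mathbb{E}_i\!\left[M_i + \tfrac{N_i}{7}\right],
\end{align*}
and since the non-trivial case has $|R| \geq 2$, $1 - 1/|R| \geq 1/2$, which matches the stated bound after identifying $\tfrac{1}{|R|}\sum_j\mathbb{E}_{\cQ_j^i}[\,\cdot\,] = \mathbb{E}_i[\,\cdot\,]$.

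The main obstacle is the accounting around $N_i$. The quantity surfacing naturally in the per-round regret decomposition is the full non-active-play count $N_i^{\text{all}}$, which can strictly exceed $N_i$ whenever the player visits non-active leaves of multi-vertex star graphs (these leaves contribute $\tfrac{1}{2} + \epsilon - W_t$ to regret but carry no information). The loose inequality $N_i^{\text{all}} \geq N_i$ is enough to pass to the $N_i$-based statement required downstream in Lemma~\ref{lem:tv_upper_star_graphs}, and all remaining constants and $O(1)$ additive slacks are absorbed for $\epsilon = \tilde\Theta(T^{-1/3})$ in exactly the same manner as at the end of the proof of Lemma~\ref{lem:rprime_lower_bound}.
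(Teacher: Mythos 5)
Your proof is correct and follows essentially the same route as the paper's: condition on the active-set draw $i\in\cI$ and the best-vertex draw $j\in[|R|]$, use the concentration of $W_t$ to charge $\Omega(1)$ instantaneous regret to non-active plays, and convert $\mathbb{E}_{\cQ_j^i}[B_j^i]$ into $\mathbb{E}_{\cQ_0^i}[B_j^i]$ plus a total-variation term before averaging. Your bookkeeping is in fact slightly cleaner in two spots---you attribute the final factor $\tfrac{1}{2}$ to $|R|\geq 2$ (the paper invokes $|\cI|\geq 2$) and you explicitly separate the full non-active-play count from $N_i$---but these are refinements of the same argument, not a different one.
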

\begin{proof}
Let $\mathbb{E}_i$ denote the conditional distribution for sampling the active vertex set indexed by $i\in \cI$. We have $\mathbb{E}[R'] = \frac{1}{|\cI|}\sum_{i \in \cI}\mathbb{E}_i[R']$. First let us consider the amount of regret the player incurs for picking a revealing action $N_i$ times. To do this we consider the number of times $1/2+W_t > 5/6$. The expected number of times this occurs is 
\begin{align*}
    \mathbb{E}\sum_{t=1}^T \mathbbm{1}_{1/2+W_t > 5/6} \leq \sum_{t=1}^T \mathbb{P}\left(|W_t| + \frac{1}{2} \geq \frac{5}{6}\right) \leq \sum_{t=1}^T e^{-\frac{1}{d(\rho)\sigma^2}} \leq \sum_{t=1}^T e^{-\frac{9\log{T}}{2}} \leq 1.
\end{align*}
Thus in expectation the regret for picking a revealing action $N_i$ times is at least $(1/6 + \epsilon)(N_i-1)$. Let $\chi_i$ denote the uniform random variable over $R$ which picks the best action. Denote by $B_j^i$ the number of times action $j$ was played from the active vertices. Then $\mathbb{E}_i[R'] \geq \mathbb{E}_i[\epsilon(T - N_i - B_{\chi_i}^i) + M_i + N_i/6 - 1/6]$. Thus we have
\begin{align*}
    \mathbb{E}[R'] &= \frac{\sum_{i\in |\cI|}\mathbb{E}_i[\epsilon(T - N_i - B_{\chi_i}^i) + M_i+ N_i/6 - 1/6]}{|\cI|}\\
    &= \epsilon T - \frac{\epsilon}{|\cI|}\sum_{i\in \cI}\mathbb{E}_{i}[B_{\chi_i}^i]  + \frac{1}{|\cI|}\sum_{i\in \cI}\mathbb{E}_i\left[M_i + \frac{N_i}{6} - 1/6 - \epsilon N_i\right].
\end{align*}
Consider $\mathbb{E}_{i}[B_{\chi_i}^i] = \frac{1}{|R|}\sum_{j \in [|R|]} \mathbb{E}_{\cQ_j^i}[B_j^i]$. For each term of the sum we have
\begin{align*}
    \mathbb{E}_{\cQ_j^i}[B_j^i] -  \mathbb{E}_{\cQ_0^i}[B_j^i]= \sum_{t=1}^T(\cQ_j^i(a_t = j) - \cQ_0(a_t = j)) \leq T\tv{\cF}{\cQ_0^i}{\cQ_j^i}.
\end{align*}
Thus we get
\begin{align*}
    \sum_{i\in \cI}\mathbb{E}_{i}[B_{\chi_i}^i] &\leq T\frac{1}{|R|}\sum_{i\in \cI}\sum_{j \in [|R|]}\tv{\cF}{\cQ_0^i}{\cQ_j^i}  + \frac{1}{|R|}\sum_{i\in \cI}\sum_{j \in [|R|]}\mathbb{E}_{\cQ_0^i}[B_j^i] \\
    &\leq \frac{T}{|R|}\sum_{i\in \cI}\sum_{j \in [|R|]}\tv{\cF}{\cQ_0^i}{\cQ_j^i} + T - \frac{1}{|R|}\sum_{i\in \cI}\mathbb{E}_{\cQ_0^i}[N_i].
\end{align*}
Using the assumption that $|\cI| \geq 2$, the above implies
\begin{align*}
    \mathbb{E}[R'] \geq \frac{\epsilon T}{2} - \frac{\epsilon T}{|\cI||R|} \sum_{i\in \cI}\sum_{j \in [|R|]}\tv{\cF}{\cQ_0^i}{\cQ_j^i}+ \frac{1}{|\cI|}\sum_{i\in \cI}\mathbb{E}_i\left[M_i + \frac{N_i}{6} - 1/6 - \epsilon N_i\right] 
    % +\frac{\epsilon}{2|R|}\sum_{i\in \cI}\mathbb{E}_{\cQ_0^i}[N_i].
\end{align*}
Since $\epsilon = \tilde\Theta(T^{-1/3})$ we have $\mathbb{E}_i\left[M_i + \frac{N_i-1}{6} - \epsilon N_i\right] \geq \mathbb{E}_i\left[M_i + \frac{N_i}{7}\right]$.
\end{proof}

Let $M$ denote the random variable counting the total number of switches and $N$ the random variable denoting the total number of times a revealing action with degree at least 2 was played. We can write $\frac{1}{|\cI|}\sum_{i \in \cI} \mathbb{E}_i[M_i] \leq \frac{1}{|\cI|}\sum_{i \in \cI} \mathbb{E}_i[M] = \mathbb{E}[M]$ and similarly $\frac{1}{|\cI|}\sum_{i \in \cI} \mathbb{E}_i[N_i] \leq \mathbb{E}[N]$. The proof of Theorem~\ref{thm:union_star_graph_lower_bound} can now be completed by following the proof of Theorem~\ref{thm:lower_bound}. We note that bounding $M_i$ by $M$ is in general tight for disjoint union of star graphs and equality occurs for all strategies which switch only between revealing vertices. For general graphs this upper bound can become very loose and we should exercise caution when constructing an upper bound. In particular we should carefully count how many distributions are covered by a single switch.

\section{Lower Bound for Arbitrary Graphs}
\label{sec:5g_lower_bound}
In this section we propose a construction leading to a non-tight lower bound for general graphs. We choose to present this construction due to it developing tools which can be useful for a tight generic bound. In particular the way we use Lemma~\ref{lem:gen_lower_bound} in the proof of Lemma~\ref{lem:tv_upper_star_graphs} can be mimicked for general graphs when coupled with a careful counting argument.

Let $G = (V,E)$ be a feedback graph with vertex set $V$ and edge set $E$. Let $\mathcal{I}$ denote the set of all maximal independent sets $I$ of $G$. For any $I$ we say that $I$ is dominated by $S\subseteq V$ if for every $v \in I$, there exists a neighbor of $v$ in $S$. For any $I$ let $S_I$ be a minimal set of vertices which dominates $I$ and let $\mathcal{S}_I$ be the set of all such $S_I$. Let $\delta(S_I)$ equal the maximum number of neighbors in $I$, which a vertex in $S_I$ can have. Let $\delta(\mathcal{S}_I)$ be the maximum over all $\delta(S_I)$ and let $\phi(G) = \min_{I\in\mathcal{I}} \frac{\delta(\mathcal{S}_I)}{|I|}$. Let $I^*$ be a maximal independent set for which $|S_{I^*}| = \phi(G)$. To construct our adversarial loss sequence we begin by uniformly sampling an action $i$ from $I^*$ and setting it to be the action with smallest loss. Let $\cQ_i$ denote the conditional probability measure given the sampled best action was $i$ and let $\cQ_0$ be the probability distribution when all of the actions in $I^*$ are equal i.e. there is no best action. Let $W_t$ be the stochastic process as defined in Section~\ref{app:lower_bound}. We set the losses for actions in $I^*$ to be $clip(W_t + 1/2)$ for $v \in I^*\setminus\{i\}$ and the loss of $i$ to be $clip(W_t + 1/2 - \epsilon)$. The loss of all other actions is set to be $1$. We let $Y_t$ denote the loss vector of observed losses only on $I^*$. WLOG we can disregard other losses, since they will not let us distinguish between $\cQ_i$ and $\cQ_0$. We denote by $Y_t(j)$ the loss of action $j\in I^*$ if that loss was observed at time $t$. Let $\cF$ be the $\sigma$-field generated by $(Y_t)_{t=1}^T$. 

Our intuition behind the definition of $\phi(G)$ and the above construction is the following. First we require that the losses based on the stochastic process $(W_t)_{t=1}^T$ be assigned to vertices in an independent set. Otherwise, there would exist a setting in which the best action would be adjacent to another action with losses generated from $(W_t)_{t=1}^T$ and in this case it is information theoretically possible to obtain $O(\sqrt{T})$ regret by playing the best action or its adjacent action enough times, without switching. For every independent set, once a best action is fixed, from the lower bound in Section~\ref{app:lower_bound} we know two ways to distinguish it. First we switch between the best action and some other action in the independent set (or more generally switch between actions giving information about the best action and another action in the independent set), or play an action which is adjacent to the best action and another action in the independent set. In the general setting there might be an action which is adjacent to multiple actions in the independent set and not adjacent to the best action. In such cases switching between the best action and said action, reveals information proportional to the degree of said action. Similarly if there is an action adjacent to the best action and multiple other actions, selecting it again reveals information proportional to its degree. Since we do not want to assume anything about the strategy of the player, it is natural to select an independent set, such that minimum amount of vertices have a common neighbor. Because the size of the independent set also gives freedom to hide information from the player, we would simultaneously like to maximize its size. This suggests that we search for and independent set which minimizes the ratio in the definition of $\phi(G)$. In Figure~\ref{fig:example_graphs} we give three examples of graphs with different $\phi(G)$. For the first example the independent set $|I^*|$ is the set of all vertices. The set $S_{I^*}$ is also the set of all vertices and $\delta(S_{I^*}) = 1$ thus $\phi(G) = 1/|V|$ and this is exactly equal to $\dnum(G)^{-1}$. For the second example $I^*$ is the set of leafs of the star graph and $S_{I^*}$ is the vertex adjacent to all other vertices. In this case $\delta(S_{I^*}) = |I^*|$ and $\phi(G) = 1$ which again equals the inverse of the dominating number of $G$. Our final example shows that $\phi(G)$ can be arbitrary close to $1$ even though $\dnum(G)^{-1} < 1$. In particular $S_{I^*}$ consists of the bottom $4$ vertices and this is also the minimum dominating set of $G$. However, there exists a vertex (the first vertex of the bottom four) of arbitrary large degree so that $\frac{\delta(S_{I^*})}{|I^*|}$ can be arbitrary close to $1$. The problem with our lower bound construction becomes clear from this example. The player has a strategy in which too much information is revealed by playing the action of arbitrary large degree. To try and fix this problem we could set only one of the vertices adjacent to the action of large degree according to $(W_t)_{t=1}^T$ and the rest of the adjacent actions are set to have loss equal to $1$. This construction can fail for general graphs, as it might happen that there exists another action which is adjacent to exactly the four actions whose losses were chosen according to $(W_t)_{t=1}^T$ in the right most graph of Figure~\ref{fig:example_graphs}.

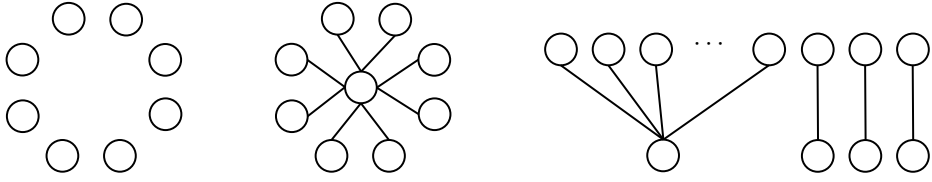
\begin{figure}
\centering
    \begin{subfigure}[b]{0.25\textwidth}
        \tikzset{every picture/.style={line width=0.75pt}} %set default line width to 0.75pt        
        \begin{tikzpicture}[x=0.75pt,y=0.75pt,yscale=-1,xscale=1]
%Shape: Circle [id:dp907210364616495] 
\draw   (100,160) .. controls (100,155.58) and (103.58,152) .. (108,152) .. controls (112.42,152) and (116,155.58) .. (116,160) .. controls (116,164.42) and (112.42,168) .. (108,168) .. controls (103.58,168) and (100,164.42) .. (100,160) -- cycle ;
%Shape: Circle [id:dp2591956654783074] 
\draw   (120,180) .. controls (120,175.58) and (123.58,172) .. (128,172) .. controls (132.42,172) and (136,175.58) .. (136,180) .. controls (136,184.42) and (132.42,188) .. (128,188) .. controls (123.58,188) and (120,184.42) .. (120,180) -- cycle ;
%Shape: Circle [id:dp5659268834923573] 
\draw   (149,180) .. controls (149,175.58) and (152.58,172) .. (157,172) .. controls (161.42,172) and (165,175.58) .. (165,180) .. controls (165,184.42) and (161.42,188) .. (157,188) .. controls (152.58,188) and (149,184.42) .. (149,180) -- cycle ;
%Shape: Circle [id:dp4888327973103742] 
\draw   (172,159) .. controls (172,154.58) and (175.58,151) .. (180,151) .. controls (184.42,151) and (188,154.58) .. (188,159) .. controls (188,163.42) and (184.42,167) .. (180,167) .. controls (175.58,167) and (172,163.42) .. (172,159) -- cycle ;
%Shape: Circle [id:dp1827241742099659] 
\draw   (187.85,131.67) .. controls (187.78,136.09) and (184.14,139.62) .. (179.73,139.55) .. controls (175.31,139.48) and (171.78,135.85) .. (171.85,131.43) .. controls (171.92,127.01) and (175.55,123.48) .. (179.97,123.55) .. controls (184.39,123.62) and (187.92,127.26) .. (187.85,131.67) -- cycle ;
%Shape: Circle [id:dp37864122944903367] 
\draw   (168.16,111.37) .. controls (168.09,115.79) and (164.45,119.31) .. (160.04,119.25) .. controls (155.62,119.18) and (152.09,115.54) .. (152.16,111.12) .. controls (152.23,106.71) and (155.86,103.18) .. (160.28,103.25) .. controls (164.7,103.32) and (168.23,106.95) .. (168.16,111.37) -- cycle ;
%Shape: Circle [id:dp7314026107562552] 
\draw   (139.16,110.92) .. controls (139.09,115.34) and (135.46,118.87) .. (131.04,118.8) .. controls (126.62,118.73) and (123.1,115.1) .. (123.16,110.68) .. controls (123.23,106.26) and (126.87,102.73) .. (131.29,102.8) .. controls (135.7,102.87) and (139.23,106.51) .. (139.16,110.92) -- cycle ;
%Shape: Circle [id:dp8614390306833216] 
\draw   (115.84,131.57) .. controls (115.77,135.99) and (112.14,139.51) .. (107.72,139.44) .. controls (103.3,139.38) and (99.78,135.74) .. (99.84,131.32) .. controls (99.91,126.91) and (103.55,123.38) .. (107.97,123.45) .. controls (112.38,123.51) and (115.91,127.15) .. (115.84,131.57) -- cycle ;
\end{tikzpicture} \end{subfigure}
    \begin{subfigure}[b]{0.25\textwidth}
        \tikzset{every picture/.style={line width=0.75pt}} %set default line width to 0.75pt        
        \begin{tikzpicture}[x=0.75pt,y=0.75pt,yscale=-1,xscale=1]
%uncomment if require: \path (0,300); %set diagram left start at 0, and has height of 300
%Shape: Circle [id:dp907210364616495] 
\draw   (100,160) .. controls (100,155.58) and (103.58,152) .. (108,152) .. controls (112.42,152) and (116,155.58) .. (116,160) .. controls (116,164.42) and (112.42,168) .. (108,168) .. controls (103.58,168) and (100,164.42) .. (100,160) -- cycle ;
%Shape: Circle [id:dp2591956654783074] 
\draw   (120,180) .. controls (120,175.58) and (123.58,172) .. (128,172) .. controls (132.42,172) and (136,175.58) .. (136,180) .. controls (136,184.42) and (132.42,188) .. (128,188) .. controls (123.58,188) and (120,184.42) .. (120,180) -- cycle ;
%Shape: Circle [id:dp5659268834923573] 
\draw   (149,180) .. controls (149,175.58) and (152.58,172) .. (157,172) .. controls (161.42,172) and (165,175.58) .. (165,180) .. controls (165,184.42) and (161.42,188) .. (157,188) .. controls (152.58,188) and (149,184.42) .. (149,180) -- cycle ;
%Shape: Circle [id:dp4888327973103742] 
\draw   (172,159) .. controls (172,154.58) and (175.58,151) .. (180,151) .. controls (184.42,151) and (188,154.58) .. (188,159) .. controls (188,163.42) and (184.42,167) .. (180,167) .. controls (175.58,167) and (172,163.42) .. (172,159) -- cycle ;
%Shape: Circle [id:dp1827241742099659] 
\draw   (187.85,131.67) .. controls (187.78,136.09) and (184.14,139.62) .. (179.73,139.55) .. controls (175.31,139.48) and (171.78,135.85) .. (171.85,131.43) .. controls (171.92,127.01) and (175.55,123.48) .. (179.97,123.55) .. controls (184.39,123.62) and (187.92,127.26) .. (187.85,131.67) -- cycle ;
%Shape: Circle [id:dp37864122944903367] 
\draw   (168.16,111.37) .. controls (168.09,115.79) and (164.45,119.31) .. (160.04,119.25) .. controls (155.62,119.18) and (152.09,115.54) .. (152.16,111.12) .. controls (152.23,106.71) and (155.86,103.18) .. (160.28,103.25) .. controls (164.7,103.32) and (168.23,106.95) .. (168.16,111.37) -- cycle ;
%Shape: Circle [id:dp7314026107562552] 
\draw   (139.16,110.92) .. controls (139.09,115.34) and (135.46,118.87) .. (131.04,118.8) .. controls (126.62,118.73) and (123.1,115.1) .. (123.16,110.68) .. controls (123.23,106.26) and (126.87,102.73) .. (131.29,102.8) .. controls (135.7,102.87) and (139.23,106.51) .. (139.16,110.92) -- cycle ;
%Shape: Circle [id:dp8614390306833216] 
\draw   (115.84,131.57) .. controls (115.77,135.99) and (112.14,139.51) .. (107.72,139.44) .. controls (103.3,139.38) and (99.78,135.74) .. (99.84,131.32) .. controls (99.91,126.91) and (103.55,123.38) .. (107.97,123.45) .. controls (112.38,123.51) and (115.91,127.15) .. (115.84,131.57) -- cycle ;
%Shape: Circle [id:dp19780968144499667] 
\draw   (150.84,145.57) .. controls (150.77,149.99) and (147.14,153.51) .. (142.72,153.44) .. controls (138.3,153.38) and (134.78,149.74) .. (134.84,145.32) .. controls (134.91,140.91) and (138.55,137.38) .. (142.97,137.45) .. controls (147.38,137.51) and (150.91,141.15) .. (150.84,145.57) -- cycle ;
%Straight Lines [id:da1667249152138628] 
\draw    (115.84,131.57) -- (134.84,145.32) ;

%Straight Lines [id:da7531580020824084] 
\draw    (131.04,118.8) -- (142.97,137.45) ;

%Straight Lines [id:da39779364939150297] 
\draw    (160.04,119.25) -- (142.97,137.45) ;

%Straight Lines [id:da471613575039458] 
\draw    (171.85,131.43) -- (150.84,145.57) ;

%Straight Lines [id:da8743456793599891] 
\draw    (150.84,145.57) -- (172,159) ;

%Straight Lines [id:da17164371785842858] 
\draw    (116,160) -- (134.84,145.32) ;

%Straight Lines [id:da6072389136669796] 
\draw    (142.84,153.45) -- (157,172) ;

%Straight Lines [id:da30153696335327673] 
\draw    (142.84,153.45) -- (128,172) ;
\end{tikzpicture}
    \end{subfigure} 
    \begin{subfigure}[b]{0.25\textwidth}
        \tikzset{every picture/.style={line width=0.75pt}} %set default line width to 0.75pt        
        \begin{tikzpicture}[x=0.75pt,y=0.75pt,yscale=-1,xscale=1]
%uncomment if require: \path (0,300); %set diagram left start at 0, and has height of 300
%Shape: Circle [id:dp8962628156097437] 
\draw   (81.84,131.07) .. controls (81.77,135.49) and (78.14,139.01) .. (73.72,138.94) .. controls (69.3,138.88) and (65.78,135.24) .. (65.84,130.82) .. controls (65.91,126.41) and (69.55,122.88) .. (73.97,122.95) .. controls (78.38,123.01) and (81.91,126.65) .. (81.84,131.07) -- cycle ;
%Shape: Circle [id:dp056959648864599255] 
\draw   (105.84,131.07) .. controls (105.77,135.49) and (102.14,139.01) .. (97.72,138.94) .. controls (93.3,138.88) and (89.78,135.24) .. (89.84,130.82) .. controls (89.91,126.41) and (93.55,122.88) .. (97.97,122.95) .. controls (102.38,123.01) and (105.91,126.65) .. (105.84,131.07) -- cycle ;
%Shape: Circle [id:dp7723625502597654] 
\draw   (129.84,131.07) .. controls (129.77,135.49) and (126.14,139.01) .. (121.72,138.94) .. controls (117.3,138.88) and (113.78,135.24) .. (113.84,130.82) .. controls (113.91,126.41) and (117.55,122.88) .. (121.97,122.95) .. controls (126.38,123.01) and (129.91,126.65) .. (129.84,131.07) -- cycle ;
%Shape: Circle [id:dp0029179549373297142] 
\draw   (211.34,131.07) .. controls (211.27,135.49) and (207.64,139.01) .. (203.22,138.94) .. controls (198.8,138.88) and (195.28,135.24) .. (195.34,130.82) .. controls (195.41,126.41) and (199.05,122.88) .. (203.47,122.95) .. controls (207.88,123.01) and (211.41,126.65) .. (211.34,131.07) -- cycle ;
%Shape: Circle [id:dp9172319228241426] 
\draw   (235.34,131.07) .. controls (235.27,135.49) and (231.64,139.01) .. (227.22,138.94) .. controls (222.8,138.88) and (219.28,135.24) .. (219.34,130.82) .. controls (219.41,126.41) and (223.05,122.88) .. (227.47,122.95) .. controls (231.88,123.01) and (235.41,126.65) .. (235.34,131.07) -- cycle ;
%Shape: Circle [id:dp6437038024204833] 
\draw   (259.34,131.07) .. controls (259.27,135.49) and (255.64,139.01) .. (251.22,138.94) .. controls (246.8,138.88) and (243.28,135.24) .. (243.34,130.82) .. controls (243.41,126.41) and (247.05,122.88) .. (251.47,122.95) .. controls (255.88,123.01) and (259.41,126.65) .. (259.34,131.07) -- cycle ;
%Shape: Circle [id:dp1839707411548016] 
\draw   (186.84,131.07) .. controls (186.77,135.49) and (183.14,139.01) .. (178.72,138.94) .. controls (174.3,138.88) and (170.78,135.24) .. (170.84,130.82) .. controls (170.91,126.41) and (174.55,122.88) .. (178.97,122.95) .. controls (183.38,123.01) and (186.91,126.65) .. (186.84,131.07) -- cycle ;
%Shape: Circle [id:dp5753896032928307] 
\draw   (133.34,184.57) .. controls (133.27,188.99) and (129.64,192.51) .. (125.22,192.44) .. controls (120.8,192.38) and (117.28,188.74) .. (117.34,184.32) .. controls (117.41,179.91) and (121.05,176.38) .. (125.47,176.45) .. controls (129.88,176.51) and (133.41,180.15) .. (133.34,184.57) -- cycle ;
%Shape: Circle [id:dp6192334095939023] 
\draw   (211.34,184.81) .. controls (211.27,189.23) and (207.64,192.76) .. (203.22,192.69) .. controls (198.8,192.62) and (195.27,188.99) .. (195.34,184.57) .. controls (195.41,180.15) and (199.05,176.62) .. (203.46,176.69) .. controls (207.88,176.76) and (211.41,180.4) .. (211.34,184.81) -- cycle ;
%Shape: Circle [id:dp7448258270474502] 
\draw   (235.34,184.81) .. controls (235.27,189.23) and (231.64,192.76) .. (227.22,192.69) .. controls (222.8,192.62) and (219.27,188.99) .. (219.34,184.57) .. controls (219.41,180.15) and (223.05,176.62) .. (227.46,176.69) .. controls (231.88,176.76) and (235.41,180.4) .. (235.34,184.81) -- cycle ;
%Shape: Circle [id:dp5505565263164509] 
\draw   (259.34,184.81) .. controls (259.27,189.23) and (255.64,192.76) .. (251.22,192.69) .. controls (246.8,192.62) and (243.27,188.99) .. (243.34,184.57) .. controls (243.41,180.15) and (247.05,176.62) .. (251.46,176.69) .. controls (255.88,176.76) and (259.41,180.4) .. (259.34,184.81) -- cycle ;
%Straight Lines [id:da40575816620448335] 
\draw    (73.72,138.94) -- (125.47,176.45) ;

%Straight Lines [id:da49972389776440673] 
\draw    (97.72,138.94) -- (125.47,176.45) ;

%Straight Lines [id:da6369712412206043] 
\draw    (121.72,138.94) -- (125.47,176.45) ;

%Straight Lines [id:da4114722417792829] 
\draw    (178.72,138.94) -- (125.47,176.45) ;

%Straight Lines [id:da2170494664793] 
\draw    (203.22,138.94) -- (203.46,176.69) ;

%Straight Lines [id:da3174468505394942] 
\draw    (227.22,138.94) -- (227.46,176.69) ;

%Straight Lines [id:da3554617267568565] 
\draw    (251.22,138.94) -- (251.46,176.69) ;

% Text Node
\draw (149.5,128.5) node   {$\cdots $};
\end{tikzpicture}
    \end{subfigure}
    \caption{Example of feedback graphs with different $\phi(G)$.}
    \label{fig:example_graphs}
\end{figure}

\begin{lemma}
\label{lem:gen_lower_bound}
Let $M_i$ be the number of times the player's strategy switched between action adjacent only to $i$ and another action not adjacent to $i$ but adjacent to at least one other action in $I^*$. Let $N_i$ be the number of times the player chose to play an action adjacent to $i$ and another action in $I^*$. Then $\tv{\cF}{\cQ_0}{\cQ_i} \leq \frac{\epsilon}{2\sigma}\sqrt{\omega(\rho)\mathbb{E}_{\cQ_0}[|I^*|\phi(G)M_i+N_i]}$.
\end{lemma}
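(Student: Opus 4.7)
The proof extends Lemma~\ref{lem:switching_lem} to arbitrary graphs. By Yao's minimax principle I may assume the player's strategy is deterministic, so $a_t$ is a fixed function of $Y_{0:t-1}$. The chain rule together with the Markov structure of the multi-scale walk gives
$$\kl{\cQ_0(Y_{0:T})}{\cQ_i(Y_{0:T})} \;=\; \sum_{t=1}^T \kl{\cQ_0(Y_t \mid Y_{\rho^*(t)})}{\cQ_i(Y_t \mid Y_{\rho^*(t)})}.$$
Because the increment $W_t - W_{\rho(t)}$ is a zero-mean Gaussian of variance $\sigma^2$ under both measures, each per-round KL reduces to a Gaussian mean-shift computation entirely analogous to the six-way case split in the proof of Lemma~\ref{lem:switching_lem}. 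I would write each such contribution as $\frac{\epsilon^2}{2\sigma^2}\,c_t^i$, where $c_t^i\ge 0$ counts how many observed $I^*$-coordinates of $Y_t$ have their conditional mean (given $Y_{\rho^*(t)}$) shifted between $\cQ_0$ and $\cQ_i$.

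Next I would enumerate the situations in which $c_t^i>0$, classifying them by the pair $(N(a_{\rho(t)})\cap I^*, N(a_t)\cap I^*)$ and its intersection with $\{i\}$. Two types of rounds survive. Rounds of \emph{type (i)}, where $a_t$ is itself adjacent to $i$ and to some other vertex of $I^*$: the informative contrast is present directly in the observation at time $t$, so $c_t^i$ is bounded by a constant; these are exactly the rounds counted by $N_i$. Rounds of \emph{type (ii)}, where $c_t^i$ is driven by the comparison with an ancestor observation: the only way for the increment to be distinguishing is that there exists an $s$ with $\rho(t)<s\le t$ at which the player switched between an action adjacent only to $i$ and an action adjacent to some vertex of $I^*\setminus\{i\}$ but not to $i$, i.e., a switch counted by $M_i$. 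By the definition of the cut sets of $\rho$, each such switch contributes to at most $\omega(\rho)$ rounds of type (ii).

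The core step is to bound $c_t^i$ in type (ii) rounds. In each such round one of $a_t,a_{\rho(t)}$ observes $i$ while the other observes a subset of $I^*\setminus\{i\}$; the number of relative mean-shifts between $\cQ_0$ and $\cQ_i$ equals the $I^*$-degree of this ``non-$i$'' action. I would argue that this degree can be replaced by $\delta(\mathcal{S}_{I^*}) = |I^*|\phi(G)$ by exploiting the definition of $\phi(G)$ and substituting the non-$i$ action by a representative of a minimal dominating set of $I^*$ without reducing its informative content. Combining these bounds yields the deterministic inequality
$$\sum_{t=1}^T c_t^i \;\le\; \omega(\rho)\bigl(|I^*|\phi(G)\,M_i + N_i\bigr),$$
so $\kl{\cQ_0}{\cQ_i}\le \frac{\epsilon^2}{2\sigma^2}\,\omega(\rho)\,\mathbb{E}_{\cQ_0}[|I^*|\phi(G)M_i+N_i]$, and Pinsker's inequality produces the claimed total variation bound.

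The main obstacle is the per-switch bound by $|I^*|\phi(G)$. Unlike the star-graph case of Lemma~\ref{lem:switching_lem}, in which the unique common neighbor immediately controls the coordinate count, a general action on the non-$i$ side of an $M_i$-switch may a priori be adjacent to many vertices of $I^*$; the argument must carefully either charge such contributions to $N_i$ (whenever the action is also adjacent to $i$, which is exactly the definition of $N_i$) or reduce to a minimum-dominating-set representative whose $I^*$-degree is capped by $\delta(\mathcal{S}_{I^*})$, thereby connecting the purely combinatorial quantity $\phi(G)$ to the information-theoretic bound.
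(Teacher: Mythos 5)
Your proposal follows essentially the same route as the paper's proof: Yao's principle to reduce to a deterministic player, the chain rule over the multi-scale walk to reduce the KL divergence to per-round Gaussian mean-shift terms, a case analysis charging informative rounds either to $N_i$ directly or, via the cut sets of $\rho$, to $M_i$-switches with per-switch multiplicity bounded by $|I^*|\phi(G) = \delta(\mathcal{S}_{I^*})$, and finally Pinsker's inequality. The only difference is presentational (your $c_t^i$ bookkeeping versus the paper's explicit six-case enumeration and events $A_t$, $B_t$), so the argument is correct to the same extent as the paper's.
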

\begin{proof}
Using Yao's minimax principle we can assume the player is deterministic and thus their $t$-th action $a_t$ is a deterministic function of $Y_{0:t-1}$.Using the chain rule for relative entropy and by the construction of $W_t$, we have:
\begin{align*}
    \kl{\cQ_0(Y_{0:T})}{\cQ_i(Y_{0:T})} = \kl{\cQ_0(Y_0)}{\cQ_i(Y_1)} + \sum_{t=1}^T \kl{\cQ_0(Y_t|Y_{\rho*(t)})}{\cQ_i(Y_t|Y_{\rho*(t)})}.
\end{align*}
Let us consider the term $\kl{\cQ_0(Y_t|Y_{\rho*(t)})}{\cQ_i(Y_t|Y_{\rho*(t)})}$. First assume that $a_t = a_{\rho(t)}$ is not an action adjacent to $i$ or $a_t = a_{\rho(t)} = i$. Then for any observed $j\in I^*$ we have $Y_t(j) = \cN(Y_{\rho(t)},\sigma^2)$ under both $\cQ_0$ and $\cQ_i$. Next consider the case when $a_t = a_{\rho(t)}$ is an action adjacent to $i$ and some other $j\in I^*$. In this case $Y_t(j) = Y_t(i) = \cN(Y_{\rho(t)}(j),\sigma^2)$ under $\cQ_0$ and $Y_t(i) = \cN(Y_{\rho(t)}(j)-\epsilon,\sigma^2)$, $Y_t(j)= \cN(Y_{\rho(t)}(j),\sigma^2)$ under $\cQ_i$ for all observed $j\in I^*\setminus\{i\}$. If $a_t \neq a_{\rho(t)}$ we have 6 options:
\begin{enumerate}
    \item $a_{\rho(t)}$ is an action adjacent to $i$ and another action $j\in I^*\setminus\{i\}$
    \begin{enumerate}
        \item $a_t$ is an action adjacent to $i$, in this case $Y_t(j) = Y_t(i) = \cN( Y_{\rho(t)}(j'), \sigma^2)$ under $\cQ_0$ for all observed $j'\in I^*$ and $Y_t(i)= \cN(Y_{\rho(t)}(j) - \epsilon,\sigma^2)$, $Y_t(j')= \cN(Y_{\rho(t)}(j),\sigma^2)$ under $\cQ_i$ for all observed $j'\in I^*$;
        \item $a_t$ is an action not adjacent to $i$ in this case $Y_t(j') = \cN( Y_{\rho(t)}(j), \sigma^2)$ under $\cQ_0$ and $Y_t(j')= \cN(Y_{\rho(t)}(j),\sigma^2)$ under $\cQ_i$ for all observed $j'$ in $I^*$;
    \end{enumerate}
    \item $a_{\rho(t)}$ is an action not adjacent to $i$ but adjacent to $j$
    \begin{enumerate}
        \item $a_t$ is an action adjacent to $i$, in this case $Y_t(j') = Y_t(i) = \cN(Y_t(j) ,\sigma^2)$ under $\cQ_0$ and $Y_t(i) = \cN(Y_{\rho(t)}(j) - \epsilon, \sigma^2)$, $Y_t(j') = \cN(Y_{\rho(t)}(j), \sigma^2)$ under $\cQ_i$ for all observed $j'$;
        \item $a_t$ is an action not adjacent to $i$, in this case $Y_t(j') = \cN(Y_{\rho(t)}(j), \sigma^2)$ under $\cQ_0$ and $Y_t(j')= \cN(Y_{\rho(t)}(j), \sigma^2)$ under $\cQ_i$ for all observed $j'$;
    \end{enumerate}
    \item $a_{\rho(t)}$ is an action only adjacent to $i$ and no other $j\in I^*$
    \begin{enumerate}
        \item $a_{t}$ is an action adjacent to $i$, in this case $Y_t(j') = Y_t(i) = \cN(Y_{\rho(t)}(i), \sigma^2)$ under $\cQ_0$ and $Y_t(i) = \cN(Y_{\rho(t)}(i),\sigma^2)$, $Y_t(j') = \cN(Y_{\rho(t)}(j') +\epsilon,\sigma^2)$ under $\cQ_i$ for all observed $j'$;
        \item $a_t$ is an action not adjacent to $i$, in this case $Y_t(j') = \cN(Y_{\rho(t)}(i), \sigma^2)$ under $\cQ_0$ and $Y_t(j') = \cN(Y_{\rho(t)}(i) + \epsilon,\sigma^2)$ under $\cQ_i$ for all observed $j'$.
    \end{enumerate}
\end{enumerate}
Thus we have
\begin{align*}
    \kl{\cQ_0(Y_t|Y_{\rho*(t)})}{\cQ_i(Y_t|Y_{\rho*(t)})} \leq \frac{\epsilon^2}{2\sigma^2}\cQ_0(A_t) + |I^*|\phi(G)\frac{\epsilon^2}{2\sigma^2}\cQ_{i}(B_t)
\end{align*}
where $A_t$ is the event that $a_{\rho(t)}$ was adjacent to at least one action in $I^*\setminus \{i\}$ and at time $t$ action $i$ was observed and $B_t$ is the event that $a_{\rho(t)}$ was adjacent only to $i$ and the player switched at time $t$ to an action which is adjacent to an action in $I^*\setminus \{i\}$. Let $N_i$ denote the random number of times an action adjacent to $i$ was played and let $M_i$ denote the random number of switches between an action adjacent to $i$ and an action not adjacent to $i$. Let $S_{1:M}$ denote the random sequence of times during which there was a switch. Then we have 
\begin{align*}
    \sum_{t=1}^T \mathbbm{1}_{A_t} + \mathbbm{1}_{B_t} \leq \sum_{r=1}^{M}\sum_{t \in \text{cut}(S_r)}\mathbbm{1}_{A_t} + N_i\leq \omega(\rho)(M_i+N_i),
\end{align*}
where $\text{cut}(t)$ and $\omega(\rho)$ are defined in~\cite{dekel2014bandits}. Thus
\begin{align*}
    \kl{\cQ_0(Y_t|Y_{\rho*(t)})}{\cQ_i(Y_t|Y_{\rho*(t)})} \leq \frac{\epsilon^2\omega(\rho)}{2\sigma^2}\mathbb{E}_{\cQ_0}[|I^*|\phi(G)M_i+N_i].
\end{align*}
Pinsker's inequality that $\tv{\cF}{\cQ_0}{\cQ_i} \leq \frac{\epsilon}{2\sigma}\sqrt{\omega(\rho)\mathbb{E}_{\cQ_0}[|I^*|\phi(G)M_i+N_i]}$.
\end{proof}
Let $M$ denote the total number of switches and $N$ the total number of times an action revealing adjacent to at least two vertices in $I^*$ is played.
\begin{lemma}
\label{lem:tv_upper_gen}
It holds that $\frac{1}{|I^*|}\sum_{i\in I^*}\tv{\cF}{\cQ_0}{\cQ_i} \leq \frac{\epsilon}{\sigma} \sqrt{\frac{\omega(\rho)\phi(G)}{2}} \sqrt{\mathbb{E}_{\cQ_0}[M+N]}$.
\end{lemma}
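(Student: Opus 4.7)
The plan is to start from the per-distribution bound in Lemma~\ref{lem:gen_lower_bound} and average over $i\in I^*$, then use Jensen's inequality together with combinatorial bounds on $\sum_i M_i$ and $\sum_i N_i$ in terms of $M$ and $N$.

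First, I would apply Lemma~\ref{lem:gen_lower_bound} to each $i\in I^*$ and average:
\begin{align*}
\frac{1}{|I^*|}\sum_{i\in I^*}\tv{\cF}{\cQ_0}{\cQ_i}
\;\leq\;\frac{\epsilon}{2\sigma}\cdot\frac{1}{|I^*|}\sum_{i\in I^*}\sqrt{\omega(\rho)\,\mathbb{E}_{\cQ_0}\bigl[|I^*|\phi(G)M_i+N_i\bigr]}.
\end{align*}
By concavity of the square root (Jensen's inequality), I can pull the average inside to obtain
\begin{align*}
\frac{1}{|I^*|}\sum_{i\in I^*}\tv{\cF}{\cQ_0}{\cQ_i}
\;\leq\;\frac{\epsilon}{2\sigma}\sqrt{\frac{\omega(\rho)}{|I^*|}\,\mathbb{E}_{\cQ_0}\!\left[|I^*|\phi(G)\sum_{i\in I^*}M_i+\sum_{i\in I^*}N_i\right]}.
\end{align*}

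The core of the argument is now to bound $\sum_i M_i$ and $\sum_i N_i$ in terms of the global counters $M$ and $N$. For the switch counter, recall that $M_i$ only counts switches where one of the two endpoints is adjacent \emph{only} to $i$ (in $I^*$). This means that a single switch can contribute to at most one $M_i$, since the identity of $i$ is uniquely pinned down by that endpoint. Therefore $\sum_{i\in I^*}M_i \leq M$. For the play counter, $N_i$ counts plays of actions adjacent to $i$ and to at least one other vertex of $I^*$. A single play of a vertex $v\in V$ whose neighborhood in $I^*$ has size $k_t\geq 2$ contributes $k_t$ to $\sum_i N_i$. The key combinatorial point is that $k_t\leq\delta(\mathcal{S}_{I^*})=|I^*|\phi(G)$: any vertex $v\in V$ can be inserted into some minimal dominating set of $I^*$ (by using $v$ to dominate its neighbors in $I^*$ and extending minimally on the rest), so its $I^*$-degree is bounded by $\delta(\mathcal{S}_{I^*})$. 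Consequently $\sum_{i\in I^*}N_i \leq |I^*|\phi(G)\,N$.

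Plugging both bounds in gives
\begin{align*}
|I^*|\phi(G)\sum_{i\in I^*}M_i+\sum_{i\in I^*}N_i
\;\leq\;|I^*|\phi(G)\,(M+N),
\end{align*}
and substituting back yields the claim
\begin{align*}
\frac{1}{|I^*|}\sum_{i\in I^*}\tv{\cF}{\cQ_0}{\cQ_i}
\;\leq\;\frac{\epsilon}{2\sigma}\sqrt{\omega(\rho)\phi(G)\,\mathbb{E}_{\cQ_0}[M+N]}
\;\leq\;\frac{\epsilon}{\sigma}\sqrt{\frac{\omega(\rho)\phi(G)}{2}\,\mathbb{E}_{\cQ_0}[M+N]}.
\end{align*}
The main obstacle is the bookkeeping step establishing $\sum_i N_i\leq|I^*|\phi(G)N$: one must verify that the definition of $\phi(G)$ genuinely controls $|N(v)\cap I^*|$ for every $v\in V$ (not merely for vertices of some fixed minimum dominating set), which is exactly the argument that any $v$ can be embedded into some minimal dominating set of $I^*$. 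Once that combinatorial fact is in hand, the rest is Jensen plus the per-$i$ bound from Lemma~\ref{lem:gen_lower_bound}.
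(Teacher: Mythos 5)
Your proposal follows essentially the same route as the paper: apply Lemma~\ref{lem:gen_lower_bound} to each $i\in I^*$, average, pull the sum inside the square root by concavity, and then bound $\sum_i M_i$ and $\sum_i N_i$ combinatorially. Your treatment of $\sum_i N_i \leq |I^*|\phi(G)N$ is correct and in fact supplies a justification (embedding an arbitrary vertex $v$ with $|N(v)\cap I^*|\geq 2$ into a minimal dominating set of $I^*$, e.g.\ $\{v\}\cup(I^*\setminus N(v))$, using that $I^*$ is independent) that the paper only asserts.

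The one step that is wrong as stated is $\sum_{i\in I^*}M_i\leq M$. Your argument that ``the identity of $i$ is uniquely pinned down'' by the endpoint adjacent only to $i$ overlooks that a single switch can satisfy the defining condition of $M_i$ for \emph{two} different indices: if the player switches from an action $a$ adjacent (in $I^*$) only to $i$ to an action $b$ adjacent only to $j\neq i$, then this switch is counted in $M_i$ (with $a$ as the distinguished endpoint) and also in $M_j$ (with $b$ as the distinguished endpoint). The correct bound is therefore $\sum_{i\in I^*}M_i\leq 2M$, which is exactly what the paper uses (``we count each switch twice, once from $i$ and once to $i$''). This does not sink your proof: substituting $2M$ in place of $M$ gives $|I^*|\phi(G)\sum_i M_i+\sum_i N_i\leq 2|I^*|\phi(G)(M+N)$, and the resulting bound is $\frac{\epsilon}{2\sigma}\sqrt{2\omega(\rho)\phi(G)\,\mathbb{E}_{\cQ_0}[M+N]}=\frac{\epsilon}{\sigma}\sqrt{\frac{\omega(\rho)\phi(G)}{2}\mathbb{E}_{\cQ_0}[M+N]}$, i.e.\ precisely the statement of the lemma. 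In other words, the factor $\sqrt{2}$ you ``give away'' in your last display is not slack at all --- it is exactly what is needed to absorb the double counting of switches, so you should replace $\sum_i M_i\leq M$ with $\sum_i M_i\leq 2M$ and remove the final loosening step.
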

\begin{proof}
From concavity of square root and Lemma~\ref{lem:gen_lower_bound} we have
\begin{align*}
    \frac{1}{|I^*|}\sum_{i\in I^*}\tv{\cF}{\cQ_0}{\cQ_i} \leq \frac{\epsilon\sqrt{\omega(\rho)}}{2\sigma}\sqrt{ \frac{1}{|I^*|}\mathbb{E}_{\cQ_0}\left[\sum_{i\in I^*} |I^*|\phi(G)M_i + N_i\right]}.
\end{align*}
Now $\sum_{i\in I^*} M_i = 2M$ since we count each switch twice, once from $i$ and once to $i$. On the other hand each action which is adjacent to $n$ actions in $I^*$ has been overcounted $n$ times. Since $n \leq |I^*|\phi(G)$ we have $\sum_{i\in I^*} N_i \leq |I^*|\phi(G) N$.
\end{proof}

\begin{lemma}
\label{lem:rprime_lower_bound_gen}
It holds that
\begin{align*}
    \mathbb{E}[R'] \geq \frac{\epsilon T}{2} - \epsilon T \frac{1}{|I^*|}\sum_{i\in I^*}\tv{\cF}{\cQ_0}{\cQ_i} + \mathbb{E}\left[M + \frac{N}{7}\right].
\end{align*}
\end{lemma}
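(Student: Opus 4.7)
The plan is to mirror the argument of Lemma~\ref{lem:rprime_lower_bound_gen2}, adapted to the general-graph setting where the best action is sampled uniformly from the maximal independent set $I^*$ rather than from a set of revealing vertices. First I would write $\mathbb{E}[R'] = \frac{1}{|I^*|}\sum_{i\in I^*}\mathbb{E}_{\cQ_i}[R']$ and lower bound each conditional expectation separately. Let $B_i$ denote the number of rounds in which action $i$ is played and let $N'$ denote the total number of rounds in which an action outside $I^*$ is played. Comparing against the benchmark loss $\sum_t \ell_t'(i) = \sum_t(W_t + 1/2 - \epsilon)$, any round with $a_t \in I^* \setminus\{i\}$ contributes at least $\epsilon$ to $R'$, while any round with $a_t \notin I^*$ contributes $1/2 - W_t + \epsilon$ (since outside actions have unclipped loss equal to $1$).

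Next, using $d(\rho) = O(\log T)$ together with $\sigma = 1/\log T$, exactly the same Gaussian tail computation as in Lemma~\ref{lem:rprime_lower_bound_gen2} gives $\mathbb{E}[\sum_t \mathbb{I}(W_t + 1/2 > 5/6)] \leq 1$, so the cumulative contribution of the outside-$I^*$ rounds is at least $(1/6+\epsilon)(N'-1)$ in expectation. Collecting terms,
\[
\mathbb{E}_{\cQ_i}[R'] \;\geq\; \epsilon T - \epsilon\,\mathbb{E}_{\cQ_i}[B_i] + \frac{\mathbb{E}_{\cQ_i}[N']}{6} - \frac{1}{6} - \epsilon + \mathbb{E}_{\cQ_i}[M].
\]
I would then swap back to the null distribution via $\mathbb{E}_{\cQ_i}[B_i] - \mathbb{E}_{\cQ_0}[B_i] \leq T\,\tv{\cF}{\cQ_0}{\cQ_i}$ and use the identity $\sum_{i\in I^*}B_i = T - N'$ to obtain $\sum_{i\in I^*}\mathbb{E}_{\cQ_0}[B_i] = T - \mathbb{E}_{\cQ_0}[N']$. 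Averaging over $i$ and exploiting $|I^*| \geq 2$ (a non-complete graph has at least two non-adjacent vertices, so any maximal independent set has size at least two), the leading term yields
\[
\epsilon T - \frac{\epsilon\bigl(T - \mathbb{E}_{\cQ_0}[N']\bigr)}{|I^*|} \;\geq\; \frac{\epsilon T}{2},
\]
with the $\mathbb{E}_{\cQ_0}[N']$ term only helping.

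To finish, I would bound the remaining average $\frac{1}{|I^*|}\sum_{i}\mathbb{E}_{\cQ_i}\!\left[N'/6 - 1/6 - \epsilon + M\right]$ by $\mathbb{E}[M + N/7]$ using $N' \geq N$ and $\epsilon = \tilde\Theta(T^{-1/3})$, exactly as in the step following Lemma~\ref{lem:rprime_lower_bound_gen2} where $(N-1)/6 - \epsilon N \geq N/7$ is invoked. The main obstacle is the bookkeeping mismatch between $N'$ (the full count of outside-$I^*$ rounds, which is what arises naturally from the walk-concentration argument) and the more restrictive $N$ that appears in the statement (which only counts outside actions adjacent to at least two vertices of $I^*$). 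Since $N \leq N'$, downgrading $N'$ to $N$ is permissible in the lower bound, and the small constant additive losses $-1/6-\epsilon$ are absorbed into the $1/6 \to 1/7$ slack for $T$ large enough, leaving the claimed form.
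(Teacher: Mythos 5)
Your proposal follows essentially the same route as the paper's proof: decompose $\mathbb{E}[R']$ over the uniform choice of the best action in $I^*$, charge $\epsilon$ per round off the best action plus an extra constant per round spent outside $I^*$ (via the Gaussian tail bound on the event $1/2+W_t>5/6$), pass from $\mathbb{E}_{\cQ_i}[B_i]$ to $\mathbb{E}_{\cQ_0}[B_i]$ through the total variation distance, and invoke $|I^*|\geq 2$ to absorb the $\epsilon T/|I^*|$ term. The only cosmetic difference is that you run the accounting with the full count $N'$ of outside-$I^*$ rounds and downgrade to $N\leq N'$ at the end, whereas the paper works with $N$ directly; both versions carry the same harmless additive $O(1)$ slack in the step $(N-1)/6-\epsilon N\geq N/7$.
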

\begin{proof}
First let us consider the amount of regret the player incurs for picking action adjacent to two actions in $I^*$ N times. To do this we consider the number of times $1/2+W_t > 5/6$. The expected number of times this occurs is 
\begin{align*}
    \mathbb{E}\sum_{t=1}^T \mathbbm{1}_{1/2+W_t > 5/6} \leq \sum_{t=1}^T \mathbb{P}\left(|W_t| + \frac{1}{2} \geq \frac{5}{6}\right) \leq \sum_{t=1}^T e^{-\frac{1}{d(\rho)\sigma^2}} \leq \sum_{t=1}^T e^{-\frac{9\log{T}}{2}} \leq 1.
\end{align*}
Thus in expectation the regret for picking an action adjacent to actions in $I^*$ N times is at least $(1/6 + \epsilon)(N-1)$. 
% Since we choose $\epsilon = \tilde\Theta(T^{-1/3})$, for sufficiently large $T$ we have that in expectation the regret for picking action $2$ N times is at least $N/6$.
Let $\chi$ denote the uniform random variable over actions in $I^*$, which picks the best action in the beginning of the game. Denote by $B_i$ the number of times action $i \in I^*$ was played. Then $\mathbb{E}[R'] \geq \mathbb{E}[\epsilon(T - N - B_\chi) + M + N/6]$. Thus we have
\begin{align*}
    \mathbb{E}[R'] &= \frac{\sum_{i\in I^*}\mathbb{E}[\epsilon(T - N - B_i) + M + (N-1)/6 | \chi=i]}{|I^*|}\\
    &= \epsilon T - \frac{\epsilon}{|I^*|}\sum_{i\in I^*}\mathbb{E}_{\cQ_i}[B_i]  + \mathbb{E}\left[M + \frac{N-1}{6} - \epsilon N\right].
\end{align*}
Consider $\mathbb{E}_{\cQ_i}[B_i]$, we have
\begin{align*}
    \mathbb{E}_{\cQ_i}[B_i] -  \mathbb{E}_{\cQ_0}[B_i]= \sum_{t=1}^T(\cQ_i(a_t = i) - \cQ_0(a_t = i)) \leq T\tv{\cF}{\cQ_0}{\cQ_i}.
\end{align*}
Thus we get
\begin{align*}
    \sum_{i\in I^*}\mathbb{E}_{\cQ_i}[B_i] &\leq T\sum_{i\in I^*}\tv{\cF}{\cQ_0}{\cQ_i}  + \sum_{i\in I^*}\mathbb{E}_{\cQ_0}[B_i] \\
    &\leq T\sum_{i\in I^*}\tv{\cF}{\cQ_0}{\cQ_i} + T - \mathbb{E}_{\cQ_0}[N].
\end{align*}
Using the assumption that $|I^*| \geq 2$, the above implies
\begin{align*}
    \mathbb{E}[R'] \geq \frac{\epsilon T}{2} - \frac{\epsilon T}{|I^*|}\sum_{i\in I^*}\tv{\cF}{\cQ_0}{\cQ_i} + \mathbb{E}\left[M + \frac{N-1}{6} - \epsilon N\right] + \frac{\epsilon}{2}\mathbb{E}_{\cQ_0}[N].
\end{align*}
Since $\epsilon = \tilde \Theta(T^{-1/3})$ we have $\mathbb{E}\left[M + \frac{N-1}{6} - \epsilon N\right] + \frac{\epsilon}{2}\mathbb{E}_{\cQ_0}[N] \geq \mathbb{E}\left[M + \frac{N}{7}\right]$
\end{proof}

\begin{theorem}
\label{thm:lower_bound}
The expected regret of a deterministic player is at least
\begin{align*}
   \mathbb{E}[R] \geq 4\frac{T^{2/3}}{\log{T}\phi(G)^{1/3}}
\end{align*}
\end{theorem}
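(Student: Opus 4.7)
The plan is to mimic the argument used for Theorem~\ref{thm:star_graph_lower_bound}, but with the sharper total variation bound from Lemma~\ref{lem:tv_upper_gen} in place of the two-vertex bound of Lemma~\ref{lem:switching_lem}. First I would reduce the setting via Yao's principle to deterministic players, so that $M$, $N$, and the action choices are deterministic functions of the observed history. Then I would combine Lemma~\ref{lem:rprime_lower_bound_gen} with the observation
\begin{align*}
\mathbb{E}_{\cQ_0}\Big[M+\tfrac{N}{7}\Big] - \mathbb{E}\Big[M+\tfrac{N}{7}\Big]
= \frac{1}{|I^*|}\sum_{i\in I^*}\Big(\mathbb{E}_{\cQ_0}-\mathbb{E}_{\cQ_i}\Big)\!\Big[M+\tfrac{N}{7}\Big]
\leq \frac{T}{|I^*|}\sum_{i\in I^*}\tv{\cF}{\cQ_0}{\cQ_i},
\end{align*}
assuming without loss of generality that $M+N/7\leq \epsilon T$ holds $\cQ_0$- and $\cQ_i$-almost surely (this is the standard truncation: if the player's strategy could exceed this, one modifies it to stop after $M+N/7$ reaches the threshold and play a safe action, incurring at most a constant blow-up in regret, exactly as in the last paragraph of the proof of Theorem~\ref{thm:star_graph_lower_bound}). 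This yields
\begin{align*}
\mathbb{E}[R'] \geq \frac{\epsilon T}{2} - \frac{2\epsilon T}{|I^*|}\sum_{i\in I^*}\tv{\cF}{\cQ_0}{\cQ_i} + \mathbb{E}\!\left[M+\tfrac{N}{7}\right].
\end{align*}

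Next I would invoke Lemma~\ref{lem:lem4_dek} to obtain $\mathbb{E}[R]\geq \mathbb{E}[R']-\epsilon T/6$, and plug in the TV estimate from Lemma~\ref{lem:tv_upper_gen}, giving (after setting $x=\sqrt{\mathbb{E}_{\cQ_0}[M+N]}$)
\begin{align*}
\mathbb{E}[R] \geq \frac{\epsilon T}{3} - \frac{c_1\,\epsilon^2 T\sqrt{\omega(\rho)\phi(G)}}{\sigma}\,x + \frac{x^2}{7}
\end{align*}
for an absolute constant $c_1$. Minimizing the right-hand side in $x$ contributes $-c_2\,\epsilon^4 T^2 \omega(\rho)\phi(G)/\sigma^2$.

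Finally I would tune the parameters exactly as in the star-graph lower bound, but with $\phi(G)$ folded in: set $\sigma=1/\log T$ (so that $\omega(\rho)=O(\log T)$ and the Gaussian tail bounds used implicitly in Lemma~\ref{lem:rprime_lower_bound_gen} go through), and pick $\epsilon = c\,T^{-1/3}\log(T)^{-1}\phi(G)^{-1/3}$ for a small absolute constant $c$. Then $\epsilon T/3$ dominates the negative quadratic contribution, and one obtains
\begin{align*}
\mathbb{E}[R] \geq \Omega\!\left(\frac{T^{2/3}}{\log(T)\,\phi(G)^{1/3}}\right),
\end{align*}
which is the claimed bound.

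The main obstacle is the same as in Theorem~\ref{thm:star_graph_lower_bound}: Lemma~\ref{lem:rprime_lower_bound_gen} only controls the \emph{conditional} expectations $\mathbb{E}_{\cQ_0}[M+N/7]$, but the regret bound involves the unconditional $\mathbb{E}[M+N/7]$. Bridging the two requires the TV estimate, and this is precisely where the truncation trick is needed; once truncation is in place the only remaining work is the quadratic optimization in $x$, where the factor $\phi(G)$ inside the square root in Lemma~\ref{lem:tv_upper_gen} is exactly what produces the $\phi(G)^{-1/3}$ improvement over the star-graph bound.
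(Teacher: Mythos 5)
Your proposal is correct and follows essentially the same route as the paper's proof: Lemma~\ref{lem:rprime_lower_bound_gen} plus the truncation argument to pass from $\mathbb{E}[M+N/7]$ to $\mathbb{E}_{\cQ_0}[M+N/7]$, Lemma~\ref{lem:lem4_dek}, the total-variation bound of Lemma~\ref{lem:tv_upper_gen}, and the quadratic optimization in $x=\sqrt{\mathbb{E}_{\cQ_0}[M+N]}$ with the same parameter tuning (the paper folds the $\phi(G)^{-1/3}$ into the constant $c$ in $\epsilon$, which is equivalent to your choice). Only a minor bookkeeping slip: the intermediate display comparing $\mathbb{E}_{\cQ_0}$ and $\mathbb{E}$ should carry a factor $\epsilon T$ rather than $T$, and the resulting $-\frac{2\epsilon T}{|I^*|}\sum_i \tv{\cF}{\cQ_0}{\cQ_i}$ term should accompany $\mathbb{E}_{\cQ_0}[M+N/7]$ rather than the unconditional expectation, exactly as in the paper.
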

\begin{proof}
First assume that the event $M+N/7 > \epsilon T$ does not occur on losses generated from $\cQ_0$ or $\cQ_i$ for a deterministic player strategy. This implies $\cQ_0(M+N/7> \epsilon T) = \cQ_i (M + N/7> \epsilon T) = 0$. Then
\begin{align*}
    \mathbb{E}_{\cQ_0}[M+N/7] - \mathbb{E}[M+N/7] &= \frac{1}{|I^*|}\sum_{i\in I^*} \left(\mathbb{E}_{\cQ_0}[M+N/7] - \mathbb{E}_{\cQ_i}[M+N/7]\right)\\
    &\leq \frac{\epsilon T}{|I^*|} \sum_{i\in I^*} \tv{\cF}{\cQ_0}{\cQ_i}.
\end{align*}
The above, together with Lemma~\ref{lem:rprime_lower_bound_gen} implies
\begin{align*}
    \mathbb{E}[R'] \geq \frac{\epsilon T}{2} -  \frac{2\epsilon T}{|I^*|} \sum_{i\in I^*} \tv{\cF}{\cQ_0}{\cQ_i} + \mathbb{E}_{\cQ_0}\left[M + \frac{1}{7} N\right].
\end{align*}
Applying Lemma~\ref{lem:lem4_dek} now gives
\begin{align*}
    \mathbb{E}[R] \geq \frac{\epsilon T}{3} -  \frac{2\epsilon T}{|I^*|} \sum_{i\in I^*} \tv{\cF}{\cQ_0}{\cQ_i} + \mathbb{E}_{\cQ_0}\left[M + \frac{1}{7} N\right].
\end{align*}
On the other hand we can bound $\frac{1}{|I^*|} \sum_{i\in I^*} \tv{\cF}{\cQ_0}{\cQ_i}$ by Lemma~\ref{lem:tv_upper_gen} as 
\begin{align*}
    \frac{1}{|I^*|} \sum_{i\in I^*} \tv{\cF}{\cQ_0}{\cQ_i} \leq \frac{\epsilon}{\sigma} \sqrt{\frac{\log{T}\phi(G)}{2}} \sqrt{\mathbb{E}_{\cQ_0}[M+N]}.
\end{align*}
This implies
\begin{align*}
     \mathbb{E}[R] \geq \frac{\epsilon T}{3} -  \frac{\sqrt{2}\epsilon^2 T}{\sigma} \sqrt{\phi(G)\log{T}\mathbb{E}_{\cQ_0}[M+N]} + \mathbb{E}_{\cQ_0}\left[M + \frac{1}{7} N\right].
\end{align*}
Let $x = \sqrt{\mathbb{E}_{\cQ_0}\left[M + N\right]}$. Then we have
\begin{align*}
     \mathbb{E}[R] \geq \frac{\epsilon T}{3} - \frac{\sqrt{2}\epsilon^2 T\sqrt{\log{T}\phi(G)}}{\sigma}x + \frac{x^2}{7}.
\end{align*}
The quadratic $\frac{x^2}{7} - \frac{\epsilon^2 T\sqrt{2\log{T}\phi(G)}}{\sigma} x$ has global minimum $-\frac{\epsilon^4 T^2\log{T}\phi(G)}{14}$ We set $\epsilon = c \frac{1}{T^{1/3}\log{T}}$ for a constant $c$ to be determined later. We then have
\begin{align*}
    \mathbb{E}[R] \geq \frac{c T^{2/3}}{3\log{T}} - \frac{c^4}{14}\frac{T^{2/3}\phi(G)}{\log{T}^3\sigma^2}.
\end{align*}
Set $\sigma = \frac{1}{\log{T}}$. The above implies
\begin{align*}
    \mathbb{E}[R] \geq \frac{T^{2/3}}{\log{T}}\left(\frac{c}{3} - \frac{c^4\phi(G)}{14}\right).
\end{align*}
Choosing $c = \left(\frac{7}{6\phi(G)}\right)^{1/3}$ guarantees $\mathbb{E}[R] \geq \frac{T^{2/3}}{16\log{T}\phi(G)^{1/3}}$.

The case when $M+ N/7 > \epsilon T$ is treated in the same way as in the proof of Theorem~\ref{thm:star_graph_lower_bound}
\end{proof}

\section{Lower bound for a sequence of feedback graphs in the uninformed setting.}
\label{app:lower_bound_seq}

\begin{wrapfigure}{r}{0.45\textwidth}
\vspace*{-15pt}
    \centering
\tikzset{every picture/.style={line width=0.75pt}} %set default line width to 0.75pt        
\begin{tikzpicture}[x=0.75pt,y=0.75pt,yscale=-1,xscale=1]
%uncomment if require: \path (0,300); %set diagram left start at 0, and has height of 300
%Shape: Circle [id:dp9593673409326334] 
\draw  [fill={rgb, 255:red, 74; green, 144; blue, 226 }  ,fill opacity=1 ] (287.22,145.57) .. controls (287.15,149.99) and (283.52,153.51) .. (279.1,153.44) .. controls (274.68,153.38) and (271.15,149.74) .. (271.22,145.32) .. controls (271.29,140.9) and (274.93,137.38) .. (279.34,137.45) .. controls (283.76,137.51) and (287.29,141.15) .. (287.22,145.57) -- cycle ;
%Shape: Circle [id:dp14649584594603704] 
\draw  [fill={rgb, 255:red, 74; green, 144; blue, 226 }  ,fill opacity=1 ] (311.6,187.8) .. controls (311.53,192.22) and (307.9,195.75) .. (303.48,195.68) .. controls (299.06,195.61) and (295.54,191.97) .. (295.6,187.56) .. controls (295.67,183.14) and (299.31,179.61) .. (303.73,179.68) .. controls (308.14,179.75) and (311.67,183.38) .. (311.6,187.8) -- cycle ;
%Shape: Circle [id:dp21636973902693302] 
\draw  [fill={rgb, 255:red, 74; green, 144; blue, 226 }  ,fill opacity=1 ] (360.37,187.8) .. controls (360.3,192.22) and (356.67,195.75) .. (352.25,195.68) .. controls (347.83,195.61) and (344.3,191.97) .. (344.37,187.56) .. controls (344.44,183.14) and (348.08,179.61) .. (352.49,179.68) .. controls (356.91,179.75) and (360.44,183.38) .. (360.37,187.8) -- cycle ;
%Shape: Circle [id:dp050390118491295155] 
\draw  [fill={rgb, 255:red, 74; green, 144; blue, 226 }  ,fill opacity=1 ] (311.6,103.33) .. controls (311.53,107.75) and (307.9,111.28) .. (303.48,111.21) .. controls (299.06,111.14) and (295.54,107.51) .. (295.6,103.09) .. controls (295.67,98.67) and (299.31,95.14) .. (303.73,95.21) .. controls (308.14,95.28) and (311.67,98.92) .. (311.6,103.33) -- cycle ;
%Shape: Circle [id:dp5213604458321035] 
\draw  [fill={rgb, 255:red, 74; green, 144; blue, 226 }  ,fill opacity=1 ] (360.37,103.33) .. controls (360.3,107.75) and (356.67,111.28) .. (352.25,111.21) .. controls (347.83,111.14) and (344.3,107.51) .. (344.37,103.09) .. controls (344.44,98.67) and (348.08,95.14) .. (352.49,95.21) .. controls (356.91,95.28) and (360.44,98.92) .. (360.37,103.33) -- cycle ;
%Shape: Circle [id:dp7212172748136578] 
\draw  [fill={rgb, 255:red, 74; green, 144; blue, 226 }  ,fill opacity=1 ] (384.75,145.57) .. controls (384.69,149.99) and (381.05,153.51) .. (376.63,153.44) .. controls (372.21,153.38) and (368.69,149.74) .. (368.76,145.32) .. controls (368.82,140.9) and (372.46,137.38) .. (376.88,137.45) .. controls (381.29,137.51) and (384.82,141.15) .. (384.75,145.57) -- cycle ;
%Shape: Circle [id:dp7570409978354806] 
\draw  [fill={rgb, 255:red, 208; green, 2; blue, 27 }  ,fill opacity=1 ] (288.42,63.19) .. controls (288.35,67.6) and (284.72,71.13) .. (280.3,71.06) .. controls (275.88,70.99) and (272.36,67.36) .. (272.42,62.94) .. controls (272.49,58.52) and (276.13,55) .. (280.55,55.06) .. controls (284.96,55.13) and (288.49,58.77) .. (288.42,63.19) -- cycle ;
%Shape: Circle [id:dp8253699893958653] 
\draw  [fill={rgb, 255:red, 208; green, 2; blue, 27 }  ,fill opacity=1 ] (383.55,63.19) .. controls (383.48,67.6) and (379.85,71.13) .. (375.43,71.06) .. controls (371.01,70.99) and (367.48,67.36) .. (367.55,62.94) .. controls (367.62,58.52) and (371.26,55) .. (375.67,55.06) .. controls (380.09,55.13) and (383.62,58.77) .. (383.55,63.19) -- cycle ;
%Shape: Circle [id:dp543664969622524] 
\draw  [fill={rgb, 255:red, 208; green, 2; blue, 27 }  ,fill opacity=1 ] (431.11,145.57) .. controls (431.05,149.99) and (427.41,153.51) .. (422.99,153.44) .. controls (418.57,153.38) and (415.05,149.74) .. (415.12,145.32) .. controls (415.18,140.9) and (418.82,137.38) .. (423.24,137.45) .. controls (427.65,137.51) and (431.18,141.15) .. (431.11,145.57) -- cycle ;
%Shape: Circle [id:dp3780763935053133] 
\draw  [fill={rgb, 255:red, 208; green, 2; blue, 27 }  ,fill opacity=1 ] (240.86,145.57) .. controls (240.79,149.99) and (237.15,153.51) .. (232.74,153.44) .. controls (228.32,153.38) and (224.79,149.74) .. (224.86,145.32) .. controls (224.93,140.9) and (228.57,137.38) .. (232.98,137.45) .. controls (237.4,137.51) and (240.93,141.15) .. (240.86,145.57) -- cycle ;
%Shape: Circle [id:dp4379297731480497] 
\draw  [fill={rgb, 255:red, 208; green, 2; blue, 27 }  ,fill opacity=1 ] (288.42,227.95) .. controls (288.35,232.37) and (284.72,235.89) .. (280.3,235.83) .. controls (275.88,235.76) and (272.36,232.12) .. (272.42,227.7) .. controls (272.49,223.29) and (276.13,219.76) .. (280.55,219.83) .. controls (284.96,219.9) and (288.49,223.53) .. (288.42,227.95) -- cycle ;
%Shape: Circle [id:dp6644704640386513] 
\draw  [fill={rgb, 255:red, 208; green, 2; blue, 27 }  ,fill opacity=1 ] (383.55,227.95) .. controls (383.48,232.37) and (379.85,235.89) .. (375.43,235.83) .. controls (371.01,235.76) and (367.48,232.12) .. (367.55,227.7) .. controls (367.62,223.29) and (371.26,219.76) .. (375.67,219.83) .. controls (380.09,219.9) and (383.62,223.53) .. (383.55,227.95) -- cycle ;
%Straight Lines [id:da6101067081904378] 
\draw    (303.48,111.21) -- (352.49,179.68) ;
%Straight Lines [id:da6046193496563703] 
\draw    (303.48,111.21) -- (287.22,145.57) ;
%Straight Lines [id:da4918456373008019] 
\draw    (287.22,145.57) -- (303.73,179.68) ;
%Straight Lines [id:da10677959666765302] 
\draw    (303.48,111.21) -- (303.73,179.68) ;
%Straight Lines [id:da4589927824412309] 
\draw    (303.48,111.21) -- (368.76,145.32) ;
%Straight Lines [id:da4136696986536269] 
\draw    (303.48,111.21) -- (352.25,111.21) ;
%Straight Lines [id:da5395249097171545] 
\draw    (352.25,111.21) -- (368.76,145.32) ;
%Straight Lines [id:da4816724958415006] 
\draw    (352.25,111.21) -- (352.49,179.68) ;
%Straight Lines [id:da42856336445241194] 
\draw    (352.25,111.21) -- (303.73,179.68) ;
%Straight Lines [id:da4436943726133966] 
\draw    (352.25,111.21) -- (287.22,145.57) ;
%Straight Lines [id:da8881463011060655] 
\draw    (368.76,145.32) -- (352.49,179.68) ;
%Straight Lines [id:da11670052482805804] 
\draw    (368.76,145.32) -- (303.73,179.68) ;
%Straight Lines [id:da835124319191619] 
\draw    (368.76,145.32) -- (287.22,145.57) ;
%Straight Lines [id:da8693372538921467] 
\draw    (303.73,179.68) -- (352.49,179.68) ;
%Straight Lines [id:da38660592665782123] 
\draw    (287.22,145.57) -- (352.49,179.68) ;
%Straight Lines [id:da9555756828428686] 
\draw    (280.3,71.06) -- (303.73,95.21) ;
%Straight Lines [id:da21197029799034184] 
\draw    (303.73,95.21) -- (375.43,71.06) ;
%Curve Lines [id:da4509852331678884] 
\draw    (232.98,137.45) .. controls (261.5,108) and (254.5,104) .. (303.73,95.21) ;
%Curve Lines [id:da3922118698269197] 
\draw    (280.55,219.83) .. controls (257.5,176) and (254.5,102) .. (303.73,95.21) ;
%Curve Lines [id:da38143174074804775] 
\draw    (423.24,137.45) .. controls (420.5,87) and (370.5,63) .. (303.73,95.21) ;
%Curve Lines [id:da11857022903213144] 
\draw    (375.67,219.83) .. controls (415.67,189.83) and (416.5,56) .. (303.73,95.21) ;
% Text Node
\draw (304.6,102.21) node [scale=0.9]  {$r_{t}$};
\end{tikzpicture}
    \caption{$G_t$}
    \label{fig:lower_bound_seq}
\end{wrapfigure}

As we already mentioned, the statement of Theorem 1 of~\cite{rangi2018online} does not hold, at least in the informed setting for a fixed feedback graph sequence, where $G_t = G, \forall t \in[T]$. We will show that in the uninformed setting, when we allow the graphs to be chosen by the adversary, there exists a sequence $(G_t)_{t=1}^T$ such that for all $t\in[T]$, $\dnum(G_t) = 1$, $\alpha(G_t) \gg 1$ and $\alpha(G_{1:t}) = \Theta(\alpha(G_t))$, for which any player's strategy will incur regret of the order $\tilde \Omega(\alpha(G_{1:t})^{1/3}T^{2/3})$. In particular, there is a non-trivial example of a sequence of graphs for which the independence number is arbitrarily larger than the domination number and every strategy has to incur regret depending on the independence number. 

We now present our construction. Fix $\alpha \gg 1$ and let $|V| = 2\alpha$. Let $I$ be a subset of $V$ of size $\alpha$ and let $R = V\setminus I$. Set the losses of actions in $I$ according to the construction of \cite{dekel2014bandits}, as described in Section~\ref{app:lower_bound_union}. Set the losses of actions in $R$ equal to one. The edges of the graph $G_t = (V,E_t)$ 
at round $t$ are defined as follows. The vertices in $R$ form a clique. A vertex $r$ is sampled uniformly at random from $R$ to be the revealing action and all edges $(r,v_i), v_i \in I$ are also added to $E_t$. We note that $\alpha(G_t) = \alpha+1$, $\dnum(G_t) = 1$ for all $t\in [T]$ and $\alpha(G_{1:T}) = \alpha$. We present an illustration for our construction in Figure~\ref{fig:lower_bound_seq}. Here $\alpha=6$, the set $I$ are the vertices in red, the set $R$ are the vertices in blue.

The intuition behind our construction is that the player needs on average $\alpha$ rounds to observe the losses of all actions, due to the randomization over the revealing vertex $r$. The switching cost again contributes to the $T^{2/3}$ time-horizon regret.

Again assume that the strategy of the player is deterministic. As in Section~\ref{sec:5g_lower_bound}, we let $\cQ_i$ denote the conditional distribution generated by the observed losses, when the best action was sampled to be $v_i \in I$ and $\cQ_0$ denotes the distribution over observed losses when there is no best action in $I$. Let $M_i$ be the number of times the player's strategy switched between an action in $I \setminus \{i\}$ and $i$. Let $M_i'$ be the number of times that the player switched between $i$ and the revealing action. Let $N$ be the total number of times a vertex in $R$ was played and let $N'$ be the total number of times a revealing vertex was played. We have the following.
\begin{lemma}
\label{lem:rev_frac_seq}
For all $i \in [|I|]\bigcup \{0\}$
\begin{align*}
\frac{1}{\alpha}\mathbb{E}_{\cQ_i}[N] = \mathbb{E}_{\cQ_i}[N'].
\end{align*}
\end{lemma}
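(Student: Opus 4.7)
The plan is to exploit the fact that the revealing vertex $r_t$ is drawn uniformly from $R$ independently of the player's action at round $t$. Writing $N = \sum_{t=1}^T \mathbb{I}(a_t \in R)$ and $N' = \sum_{t=1}^T \mathbb{I}(a_t = r_t)$, the identity will follow by a per-round conditioning argument and linearity of expectation.

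More concretely, let $\mathcal{H}_{t-1}$ denote the $\sigma$-field generated by $(a_s, r_s, Y_s)_{s \leq t-1}$, i.e.\ everything the player has seen and done before round $t$. Since the player is deterministic (we may assume so by Yao's principle, exactly as in the proofs of the earlier lower bounds), $a_t$ is $\mathcal{H}_{t-1}$-measurable. The adversary's randomization picks $r_t$ uniformly from $R$ independently of $\mathcal{H}_{t-1}$ and of $a_t$, so for every $v \in R$,
\begin{align*}
\mathbb{P}_{\cQ_i}(a_t = r_t,\, a_t = v \mid \mathcal{H}_{t-1})
&= \mathbb{I}(a_t = v)\,\mathbb{P}_{\cQ_i}(r_t = v \mid \mathcal{H}_{t-1}, a_t) \\
&= \mathbb{I}(a_t = v)\,\tfrac{1}{\alpha}.
\end{align*}
Summing over $v \in R$ gives $\mathbb{E}_{\cQ_i}[\mathbb{I}(a_t = r_t) \mid \mathcal{H}_{t-1}] = \tfrac{1}{\alpha}\,\mathbb{I}(a_t \in R)$, since $r_t \in R$ always.

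Taking expectations, summing over $t = 1,\dots,T$, and exchanging the sum with the expectation yields $\mathbb{E}_{\cQ_i}[N'] = \tfrac{1}{\alpha}\,\mathbb{E}_{\cQ_i}[N]$, as claimed. The only subtlety worth being careful about is the independence of $r_t$ from $a_t$: the adversary is oblivious (the losses for $I$ are pre-sampled via the multi-scale random walk, and the $r_t$'s are i.i.d.\ uniform on $R$), and in the uninformed setting the graph $G_t$ is revealed only after $a_t$ is chosen, so $r_t$ is indeed independent of $a_t$ given the past. This is really the only non-trivial ingredient; everything else is bookkeeping.
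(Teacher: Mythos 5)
Your proof is correct and follows essentially the same route as the paper's: both reduce the claim to the per-round identity $\mathbb{E}_{\cQ_i}[\mathbb{I}(a_t = r_t)] = \frac{1}{\alpha}\,\cQ_i(a_t \in R)$, which rests on the fact that $r_t$ is drawn uniformly from $R$ independently of the (deterministic, history-measurable) action $a_t$. Your version conditions on the full history $\mathcal{H}_{t-1}$ rather than merely on the event $\{a_t \in R\}$ as the paper does, which makes the independence step slightly more explicit, but it is the same argument.
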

\begin{proof}
Let $r_t$ denote the revealing action at time $t$.
\begin{align*}
   \mathbb{E}_{\cQ_i}[N']&=\sum_{t=1}^T\mathbb{E}_{\cQ_i}[\mathbb{I}(a_t=r_t)] = \sum_{t=1}^T{\cQ_i}(a_t\in R)\mathbb{E}_{\cQ_i}[\mathbb{I}(a_t=r_t)|a_t\in R]\\
   &+ \sum_{t=1}^T{\cQ_i}(a_t\not\in R)\mathbb{E}_{\cQ_i}[\mathbb{I}(a_t=r_t)|a_t\not\in R]\\
   &= \sum_{t=1}^T{\cQ_i}(a_t\in R)\mathbb{E}_{\cQ_i}[\mathbb{I}(a_t=r_t)|a_t\in R]\\
   &= \sum_{t=1}^T{\cQ_i}(a_t\in R)\frac{1}{\alpha} = \frac{1}{\alpha}\sum_{t=1}^T\mathbb{E}_{\cQ_i}[\mathbb{I}(a_t\in R)] = \frac{1}{\alpha}\mathbb{E}_{\cQ_i}[N].
\end{align*}
This completes the proof.
\end{proof}

Let $M$ denote the random variable counting the total number of switches.
\begin{lemma}
\label{lem:tv_upper_seq}
The following inequality holds: $\frac{1}{\alpha}\sum_{v_i\in I}\tv{\cF}{\cQ_0}{\cQ_i} \leq \frac{\epsilon}{\sigma} \sqrt{\frac{\omega(\rho)}{2\alpha}} \sqrt{\mathbb{E}_{\cQ_0}[M+N]}$.
\end{lemma}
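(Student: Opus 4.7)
The plan is to follow the same divergence-decomposition template as Lemma~\ref{lem:switching_lem} and Lemma~\ref{lem:gen_lower_bound}, but adapted to the fact that now the revealing vertex $r_t\in R$ is resampled uniformly at each round and is the only neighbor of $I$ outside $I$ itself. First I would fix $v_i\in I$ and, assuming a deterministic player, decompose $\kl{\cQ_0(Y_{0:T})}{\cQ_i(Y_{0:T})}$ along the parent function $\rho$. Since the losses of every vertex outside of $\{v_i\}$ are identically distributed under $\cQ_0$ and $\cQ_i$, the only per-round conditional KL that can be nonzero is the one corresponding to rounds $t$ at which the loss of $v_i$ is newly observed relative to the parent observation. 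Following the case analysis in Lemma~\ref{lem:switching_lem}, each such round contributes at most $\tfrac{\epsilon^2}{2\sigma^2}$, yielding, after the standard cut argument controlled by $\omega(\rho)$ (see~\cite{dekel2014bandits}),
\begin{equation*}
\tv{\cF}{\cQ_0}{\cQ_i}\;\leq\;\tfrac{\epsilon}{2\sigma}\sqrt{\omega(\rho)\,\mathbb{E}_{\cQ_0}\!\left[M_i+M_i'+N'\right]},
\end{equation*}
via Pinsker's inequality. Here the $M_i+M_i'$ term accounts for switches that create a boundary between observing and not observing $v_i$, while $N'$ accounts for the rounds in which $v_i$ is observed by playing a revealing vertex (observed irrespective of $i$ since the revealing vertex in $G_t$ is adjacent to every element of $I$).

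Next I would average over $v_i\in I$ and apply Jensen's inequality to the concave square root:
\begin{equation*}
\frac{1}{\alpha}\sum_{v_i\in I}\tv{\cF}{\cQ_0}{\cQ_i}\;\leq\;\tfrac{\epsilon}{2\sigma}\sqrt{\omega(\rho)\cdot\tfrac{1}{\alpha}\,\mathbb{E}_{\cQ_0}\!\left[\textstyle\sum_{v_i\in I}(M_i+M_i'+N')\right]}.
\end{equation*}
Two combinatorial facts then do the rest. Any single switch contributes to $M_i+M_i'$ for at most two indices $i$ (the two endpoints of that switch, and only when an endpoint lies in $I$), so $\sum_{v_i\in I}(M_i+M_i')\leq 2M$. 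The quantity $N'$ does not depend on $i$, so $\sum_{v_i\in I}N'=\alpha N'$, and by Lemma~\ref{lem:rev_frac_seq} this equals $\mathbb{E}_{\cQ_0}[N]$ in expectation (since $\alpha N'=N$ in distribution under $\cQ_0$ after taking expectations). Therefore $\tfrac{1}{\alpha}\mathbb{E}_{\cQ_0}[\sum_i(M_i+M_i'+N')]\leq \tfrac{1}{\alpha}(2\mathbb{E}_{\cQ_0}[M]+\mathbb{E}_{\cQ_0}[N])\leq \tfrac{2}{\alpha}\mathbb{E}_{\cQ_0}[M+N]$, which when plugged into the displayed bound gives exactly
\begin{equation*}
\tfrac{\epsilon}{2\sigma}\sqrt{\tfrac{2\omega(\rho)}{\alpha}\mathbb{E}_{\cQ_0}[M+N]}=\tfrac{\epsilon}{\sigma}\sqrt{\tfrac{\omega(\rho)}{2\alpha}}\sqrt{\mathbb{E}_{\cQ_0}[M+N]},
\end{equation*}
as claimed.

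The main obstacle is the per-$i$ divergence bound: identifying which rounds' conditional KL contributions are actually nonzero and then showing that the indicator sum over those rounds is dominated by $\omega(\rho)(M_i+M_i'+N')$. This mirrors the case analysis in Lemma~\ref{lem:switching_lem} almost verbatim, with the novelty being that playing the revealing vertex simultaneously provides information about all $\alpha$ vertices in $I$, which is what is ultimately responsible for the $1/\alpha$ savings in the final TV bound.
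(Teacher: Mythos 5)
Your overall architecture matches the paper's: a per-$i$ KL/TV bound via the chain rule along the parent function $\rho$, followed by averaging over $v_i\in I$, Jensen's inequality, and the combinatorial facts $\sum_i M_i\leq 2M$, $\sum_i M_i'\leq N'$, and $\mathbb{E}_{\cQ_0}[N']=\mathbb{E}_{\cQ_0}[N]/\alpha$ (Lemma~\ref{lem:rev_frac_seq}). However, your per-$i$ bound
\[
\tv{\cF}{\cQ_0}{\cQ_i}\leq\frac{\epsilon}{2\sigma}\sqrt{\omega(\rho)\,\mathbb{E}_{\cQ_0}[M_i+M_i'+N']}
\]
is not what the divergence decomposition yields, and the step ``each such round contributes at most $\epsilon^2/(2\sigma^2)$'' is where it breaks. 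Consider a round $t$ whose parent $\rho(t)$ observed only $\ell_{\rho(t)}(v_i)$ (the player sat at $v_i$) and at which the player plays the revealing vertex $r_t$, so that all $\alpha$ coordinates of $I$ are observed. Under $\cQ_0$ the conditional observation is $\cN(Y_{\rho(t)}\mathbf{1},\sigma^2\I_\alpha)$, while under $\cQ_i$ it is $\cN(Y_{\rho(t)}\mathbf{1}+\epsilon(\mathbf{1}-e_i),\sigma^2\I_\alpha)$: every coordinate $j\neq i$ shifts by $\epsilon$, so the conditional KL at that round is $(\alpha-1)\epsilon^2/(2\sigma^2)$, not $\epsilon^2/(2\sigma^2)$. (Such a round does not even belong to your set of rounds ``at which the loss of $v_i$ is newly observed,'' since $v_i$ is observed at both $\rho(t)$ and $t$, so your accounting would assign it zero.) This is exactly case 3 of the case analysis in Lemma~\ref{lem:gen_lower_bound} and is the origin of the $|I^*|\phi(G)$ inflation there; the paper's per-$i$ bound is accordingly $\tv{\cF}{\cQ_0}{\cQ_i}\leq\frac{\epsilon}{2\sigma}\sqrt{\omega(\rho)\,\mathbb{E}_{\cQ_0}[\alpha M_i'+M_i+N']}$, with each switch between $v_i$ and the revealing action charged $\alpha$.

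The lemma is still recovered from the correct per-$i$ bound by the same averaging you perform, because $\sum_{v_i\in I}\alpha M_i'\leq\alpha N'$ and $\mathbb{E}_{\cQ_0}[\alpha N']=\mathbb{E}_{\cQ_0}[N]$, so the extra factor of $\alpha$ is absorbed into the $N$ term: $\frac{1}{\alpha}\sum_i(\alpha M_i'+M_i+N')\leq\frac{1}{\alpha}(2M+2\alpha N')$, whose expectation is at most $\frac{2}{\alpha}\mathbb{E}_{\cQ_0}[M+N]$, giving the stated constant. You arrived at the same final expression only because your weaker (and unjustified) intermediate inequality happens to aggregate to the same quantity; the fix is to replace $M_i'$ by $\alpha M_i'$ in the per-$i$ bound and rerun your averaging step as above.
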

\begin{proof}
The proof of Lemma~\ref{lem:gen_lower_bound} implies that for any $\cQ_i$ we have 
\begin{align*}
    \tv{\cF}{\cQ_0}{\cQ_i} \leq \frac{\epsilon}{2\sigma}\sqrt{\omega(\rho) \mathbb{E}_{\cQ_0}[\alpha M_i' + M_i + N']},
\end{align*}
since the amount of information that can be revealed by a switch is at most $\alpha$ and this precisely happens when the player switches from $i$ to the revealing action. Notice that $\sum_{v_i \in I} M_i' \leq N'$, because the number of switches between any $i$ and a revealing action is bounded by the number of times a revealing action is played. Lemma~\ref{lem:rev_frac_seq} implies that $\mathbb{E}_{\cQ_0}[\alpha M_i' + M_i + N'] \leq \mathbb{E}_{\cQ_0}[N/\alpha + M_i + \alpha M_i']$. Next, we note that $\sum_{i \in [|I|]} M_i \leq 2M$ as each switch is counted at most twice by $M_i$. Thus we have 
\begin{align*}
    \frac{1}{\alpha}\sum_{v_i\in I}\tv{\cF}{\cQ_0}{\cQ_i} &\leq \frac{1}{\alpha}\frac{\epsilon}{2\sigma}\sum_{v_i \in I} \sqrt{\omega(\rho) \mathbb{E}_{\cQ_0}[N/\alpha + M_i + \alpha M_i']}\\
    &\leq \frac{\epsilon}{2\sigma}\sqrt{\frac{\omega(\rho)}{\alpha}\mathbb{E}_{\cQ_0}\left[\sum_{v_i\in I} N/\alpha + M_i + \alpha M_i'\right]}\\
    &\leq \frac{\epsilon}{\sigma} \sqrt{\frac{\omega(\rho)}{2\alpha}} \sqrt{\mathbb{E}_{\cQ_0}[M+N]},
\end{align*}
where the second to last inequality follows again from Lemma~\ref{lem:rev_frac_seq}.
\end{proof}
Repeating the rest of the arguments in Section~\ref{sec:5g_lower_bound} with $\phi(G)$ replaced by $\frac{1}{\alpha}$ shows the following theorem.
\begin{theorem}
\label{thm:lower_bound_seq}
For any $\alpha > 1, \alpha \in \mathbb{N}$, there exists an adversarially generated sequence of feedback graphs $(G_t)_{t=1}^T$, with $\alpha(G_t) = \alpha + 1,\dnum(G_t) = 1, \forall t\in [T]$ and $\alpha(G_{1:T}) = \alpha$, such that the expected regret of any strategy in the uninformed setting is at least
\begin{align*}
    \mathbb{E}[R] \geq \frac{\alpha^{1/3}T^{2/3}}{16\log{T}}.
\end{align*}
\end{theorem}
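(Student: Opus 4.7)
The plan is to follow the template of the proof of Theorem~\ref{thm:lower_bound} almost verbatim, substituting $\phi(G)$ with $1/\alpha$, and using Lemma~\ref{lem:tv_upper_seq} as the key ingredient that captures the fact that randomizing the revealing vertex over $R$ effectively dilutes the information gain by a factor $\alpha$. First, by Yao's minimax principle, I reduce to the case of a deterministic player strategy, so that actions become deterministic functions of the observed losses. Next, I establish an analogue of Lemma~\ref{lem:rprime_lower_bound_gen}: conditioning on the identity of the best action $v_i \in I$ sampled uniformly at random, the unclipped regret $R'$ satisfies
\begin{align*}
    \mathbb{E}[R'] \geq \frac{\epsilon T}{2} - \frac{\epsilon T}{\alpha}\sum_{v_i\in I}\tv{\cF}{\cQ_0}{\cQ_i} + \mathbb{E}_{\cQ_0}\left[M + \frac{N}{7}\right],
\end{align*}
which follows by decomposing the expected losses according to the conditional measures $\cQ_i$, using that the $N$ rounds where a revealing vertex is played each incur $\Theta(1)$ instantaneous regret with probability arbitrarily close to one (controlled by the depth of the random walk, as in the proof of Theorem~\ref{thm:lower_bound}).

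Second, I invoke Lemma~\ref{lem:lem4_dek} to pass from $R'$ to the clipped regret $R$ at the cost of an additive $\epsilon T/6$ term, giving
\begin{align*}
    \mathbb{E}[R] \geq \frac{\epsilon T}{3} - \frac{2\epsilon T}{\alpha}\sum_{v_i \in I}\tv{\cF}{\cQ_0}{\cQ_i} + \mathbb{E}_{\cQ_0}\left[M + \frac{N}{7}\right].
\end{align*}
Here I also need the standard argument that restricts attention to the event $\{M + N/7 \leq \epsilon T\}$; the complementary event is handled exactly as at the end of the proof of Theorem~\ref{thm:star_graph_lower_bound} by modifying the strategy to switch to a cheap action once too many switches are accumulated, which only doubles the regret up to lower order terms.

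Third, I plug in Lemma~\ref{lem:tv_upper_seq} to bound the TV term by $\tfrac{\epsilon}{\sigma}\sqrt{\omega(\rho)\mathbb{E}_{\cQ_0}[M+N]/(2\alpha)}$. Writing $x = \sqrt{\mathbb{E}_{\cQ_0}[M+N]}$, the lower bound becomes a quadratic
\begin{align*}
    \mathbb{E}[R] \geq \frac{\epsilon T}{3} - \frac{\sqrt{2}\epsilon^2 T\sqrt{\omega(\rho)/\alpha}}{\sigma}\,x + \frac{x^2}{7}.
\end{align*}
Minimizing the right-hand side over $x \geq 0$ gives a residual of $-\tfrac{7\epsilon^4 T^2 \omega(\rho)}{2\alpha\sigma^2}$. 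Setting $\sigma = 1/\log T$ (so that $\omega(\rho) = O(\log T)$ by the standard multi-scale random walk construction of \cite{dekel2014bandits}) and $\epsilon = c/(\alpha^{-1/3}T^{1/3}\log T)$ (i.e., $\epsilon = c\alpha^{1/3}/(T^{1/3}\log T)$) balances the two terms; choosing the constant $c$ small enough absorbs the negative quadratic residual into half of the linear term, yielding
\begin{align*}
    \mathbb{E}[R] \geq \Omega\!\left(\frac{\alpha^{1/3} T^{2/3}}{\log T}\right),
\end{align*}
as claimed.

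The main obstacle is verifying the correctness of Lemma~\ref{lem:tv_upper_seq}, which is the only step where the randomization over the revealing vertex enters in a nontrivial way. Specifically, one must show that a single switch contributes at most $\alpha$ units of information (when the switch happens to land on the current revealing vertex), but that such favorable switches can only occur for a $1/\alpha$ fraction of plays in $R$ on average (this is Lemma~\ref{lem:rev_frac_seq}). Combined with the fact that $\sum_{v_i\in I}M_i' \leq N'$ and $\sum_{v_i \in I}M_i \leq 2M$, the Jensen-type averaging over $v_i \in I$ yields the desired $1/\alpha$ factor. All remaining steps are routine algebraic manipulations that mirror those in the proofs of Theorems~\ref{thm:star_graph_lower_bound} and~\ref{thm:lower_bound}.
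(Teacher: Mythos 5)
Your proposal is correct and follows essentially the same route as the paper: the paper's own proof of Theorem~\ref{thm:lower_bound_seq} consists precisely of repeating the argument of Theorem~\ref{thm:lower_bound} with $\phi(G)$ replaced by $\frac{1}{\alpha}$, using Lemma~\ref{lem:rev_frac_seq} and Lemma~\ref{lem:tv_upper_seq} as the only new ingredients, which is exactly the template you lay out (Yao's principle, the analogue of Lemma~\ref{lem:rprime_lower_bound_gen}, Lemma~\ref{lem:lem4_dek}, the quadratic in $x=\sqrt{\mathbb{E}_{\cQ_0}[M+N]}$, and the choice $\epsilon = c\alpha^{1/3}/(T^{1/3}\log{T})$, $\sigma = 1/\log{T}$). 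Your identification of Lemma~\ref{lem:tv_upper_seq} as the one place where the randomized revealing vertex enters, via the $1/\alpha$ dilution from Lemma~\ref{lem:rev_frac_seq} together with $\sum_{v_i\in I}M_i'\leq N'$ and $\sum_{v_i\in I}M_i\leq 2M$, matches the paper's reasoning.
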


\end{document}